\tikzstyle{every picture}+=[remember picture]
\pgfplotsset{compat=newest}
\definecolor{citrine}{rgb}{0.89, 0.82, 0.04}
\definecolor{blued}{RGB}{70,197,221}
\newcommand{\M}{\mathcal M}
\newcommand{\A}{\mathcal A}
\newcommand{\calS}{\mathcal S}
\newcommand{\calScom}{\mathcal{S}^{\texttt{C}}}
\newcommand{\Scom}{S^{\texttt{C}}}
\newcommand{\calStrans}{\mathcal{S}^{\texttt{T}}}
\newcommand{\Strans}{S^{\texttt{T}}}
\renewcommand{\Re}{\mathbb R}
\newcommand{\nextstates}{\Gamma}
\newcommand{\nextstatescom}{\Gamma^{\texttt{C}}}
\newcommand{\diamcom}{D^{\texttt{C}}}
\newcommand{\SP}[1]{sp_{\mathcal{S}}\left\{{#1}\right\}}
\DeclareMathOperator*{\argmax}{\arg\,\max}
\DeclareMathOperator*{\argmin}{\arg\,\min}
\newcommand{\transp}{\mathsf{T}}
\newcommand{\evi}{{\small\textsc{EVI}}\xspace}
\newcommand{\regal}{{\small\textsc{Regal}}\xspace}
\newcommand{\regalc}{{\small\textsc{Regal.C}}\xspace}
\newcommand{\regald}{{\small\textsc{Regal.D}}\xspace}
\newcommand{\ucrl}{{\small\textsc{UCRL}}\xspace}
\newcommand{\psrl}{{\small\textsc{PSRL}}\xspace}
\newcommand{\scal}{{\small\textsc{SCAL}}\xspace}
\newcommand{\tucrl}{{\small\textsc{TUCRL}}\xspace}
\newcommand{\tevi}{{\small\textsc{TEVI}}\xspace}
\newcommand{\rmaxbound}{r_{\max}}
\newcommand{\wt}[1]{\widetilde{#1}}
\newcommand{\wh}[1]{\widehat{#1}}
\newcommand{\wb}[1]{\overline{#1}}
\DeclareMathAccent{\wtilde}{\mathord}{largesymbols}{"65}
\newcommand{\pluseq}{\mathrel{+}=}
\DeclareRobustCommand{\eg}{e.g.,\@\xspace}
\DeclareRobustCommand{\ie}{i.e.,\@\xspace}
\DeclareRobustCommand{\wrt}{w.r.t.\@\xspace}
\DeclareRobustCommand{\st}{s.t.\@\xspace}
\newtheorem{theorem}{Theorem}
\newtheorem{lemma}{Lemma}
\theoremstyle{remark}
\theoremstyle{theorem}
\newtheorem{example}{Example}
\newtheorem{assumption}{Assumption}
\newlength{\minipagewidth}
\newlength{\minipagewidthx}
\newcommand{\bookboxx}[1]{\small
\par\medskip\noindent
\framebox[0.99\textwidth]{
\begin{minipage}{0.97\dimexpr\textwidth-\parindent\relax} {#1} \end{minipage} } \par\medskip }
\DeclareRobustCommand{\eg}{e.g.,\@\xspace}                                      
\DeclareRobustCommand{\ie}{i.e.,\@\xspace}                                      
\DeclareRobustCommand{\wrt}{w.r.t.\@\xspace}  
\title{
        Near Optimal Exploration-Exploitation in Non-Communicating Markov Decision Processes
}
\author{
  Ronan Fruit \\
  Sequel Team - Inria Lille \\
  \texttt{ronan.fruit@inria.fr}\\
  \And
  Matteo Pirotta \\
  Sequel Team - Inria Lille \\
  \texttt{matteo.pirotta@inria.fr}\\
  \And
  Alessandro Lazaric \\
  Facebook AI Research\\
  \texttt{lazaric@fb.com}\\
}
\begin{document}

\maketitle

\begin{abstract}
While designing the state space of an MDP, it is common to include states that are transient or not reachable by any policy (\eg in mountain car, the product space of speed and position contains configurations that are not physically reachable). This results in weakly-communicating or multi-chain MDPs.
In this paper, we introduce \tucrl, the first algorithm able to perform efficient exploration-exploitation in any finite Markov Decision Process (MDP) without requiring any form of prior knowledge.
In particular, for any MDP with $\Scom$ communicating states, $A$ actions and $\nextstatescom \leq \Scom$ possible communicating next states, we derive a $\wt{O}(\diamcom\sqrt{\nextstatescom \Scom AT})$ regret bound, where $\diamcom$ is the diameter (\ie the length of the longest shortest path between any two states) of the communicating part of the MDP.
This is in contrast with existing optimistic algorithms (\eg~\ucrl, Optimistic \psrl) that suffer linear regret in weakly-communicating MDPs, as well as posterior sampling or regularised algorithms (\eg~\regal), which require prior knowledge on the bias span of the optimal policy to achieve sub-linear regret. We also prove that in weakly-communicating MDPs, no algorithm can ever achieve a logarithmic growth of the regret without first suffering a linear regret for a number of steps that is exponential in the parameters of the MDP. Finally, we report numerical simulations supporting our theoretical findings and showing how \tucrl overcomes the limitations of the state-of-the-art.
\end{abstract}


\vspace{-.3cm}
\section{Introduction}
\vspace{-.2cm}
Reinforcement learning (RL)~\citep{sutton1998reinforcement} studies the problem of learning in sequential decision-making problems where the dynamics of the environment is unknown, but can be learnt by performing actions and observing their outcome in an online fashion.
A sample-efficient RL agent must trade off the \emph{exploration} needed to collect information about the environment, and the \emph{exploitation} of the experience gathered so far to gain as much reward as possible.
In this paper, we focus on the regret framework in \emph{infinite-horizon average-reward} problems~\citep{Jaksch10}, where the exploration-exploitation performance is evaluated by comparing the rewards accumulated by the learning agent and an optimal policy.
\citet{Jaksch10} showed that it is possible to efficiently solve the exploration-exploitation dilemma using the \emph{optimism in face of uncertainty} (OFU) principle.
OFU methods build confidence intervals on the dynamics and reward (\ie construct a set of plausible MDPs), and execute the optimal policy of the ``best'' MDP in the confidence region~\citep[\eg][]{Jaksch10,Bartlett2009regal,fruit2017optionsnoprior,Talebi2018variance,fruit2018constrained}.
An alternative approach is posterior sampling (PS)~\citep{thompson1933likelihood}, which maintains a posterior distribution over MDPs and, at each step, samples an MDP and executes the corresponding optimal policy~\citep[\eg][]{Osband2013more,Abbasi-Yadkori2015bayesian,Osband2017posterior,Ouyang2017learning,Agrawal2017posterior}.

\begin{figure}[t]
\begin{subfigure}{0.7\textwidth}
         \centering
        \includegraphics[width=\textwidth]{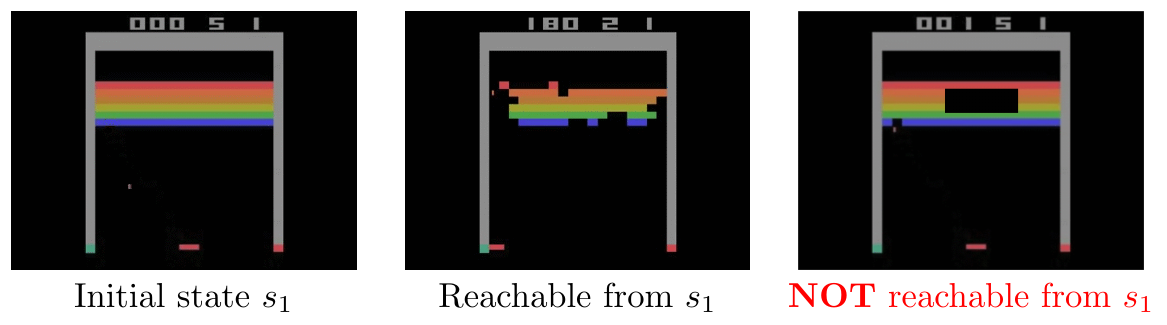}
        \caption{Breakout}
        \label{fig:breakout}
 \end{subfigure}
 \begin{subfigure}{0.28\textwidth}
         \centering
         \includegraphics[width=\textwidth]{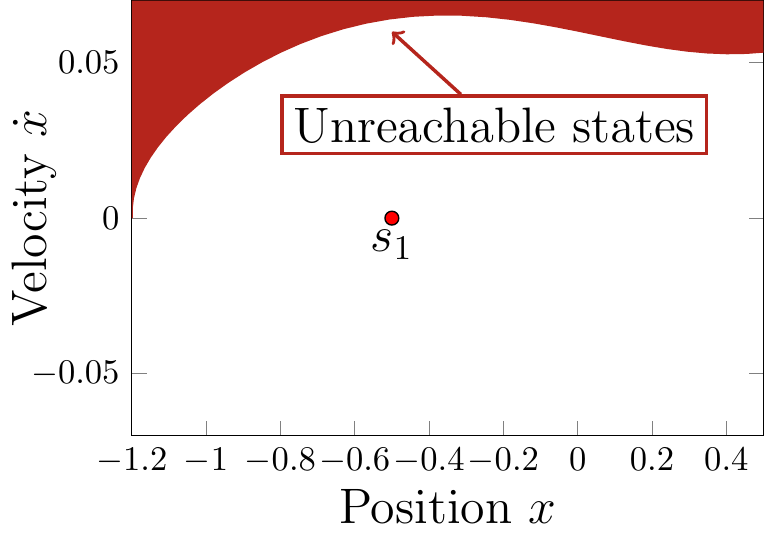}
        \caption{Mountain Car}
        \label{fig:mountain}
 \end{subfigure}
 \caption{Examples of non-communicating domains. Fig.~\subref{fig:mountain} represents a phase plane plot of the Mountain car domain $(x,\dot{x}) \in [-1.2,0.6] \times [-0.07,0.07]$. The initial state is $(-0.5,0)$ and the red area corresponds to non-reachable states from the initial state. Other non-reachable states may exist. Fig.~\subref{fig:breakout} shows the initial state, one reachable state (\emph{middle}) and an unreachable one (\emph{right}).}
 \vspace{-0.5cm}
\end{figure}

\textbf{Weakly-communicating MDPs and misspecified states.} One of the main limitations of \ucrl~\citep{Jaksch10} and optimistic \psrl~\citep{Agrawal2017posterior} is that they require the MDP to be communicating so that its diameter $D$ (\ie the length of the longest path among all shortest paths between any pair of states) is finite.
While assuming that all states are reachable may seem a reasonable assumption, it is rarely verified \textit{in practice}.
In fact, it requires a designer to carefully define a state space $\calS$ that contains all reachable states (otherwise it may not be possible to learn the optimal policy), but it excludes unreachable states (otherwise the resulting MDP would be non-communicating).
This requires a considerable amount of prior knowledge about the environment. 
%
        Consider a problem where we learn from images \eg the Atari Breakout game \citep{Mnih15}.
        The state space is the set of ``plausible'' configurations of the brick wall, ball and paddle positions.
        The situation in which the wall has an hole in the middle is a valid state (\eg as an initial state) but it cannot be observed/reached starting from a dense wall (see Fig.~\ref{fig:breakout}). As such, it should be removed to obtain a ``well-designed'' state space. While it may be possible to design a suitable set of ``reachable'' states that define a communicating MDP, this is often a difficult and tedious task, sometimes even impossible.
        Now consider a continuous domain \eg the Mountain Car problem \citep{Moore90}. The state is decribed by the position $x$ and velocity $\dot{x}$ along the $x$-axis. The state space of this domain is usually defined as the cartesian product $[-1.2,0.6] \times [-0.07,0.07]$. Unfortunately, this set contains configurations that are not physically reachable as shown on Fig.~\ref{fig:mountain}. The \emph{dynamics} of the system is constrained by the \emph{evolution equations}. Therefore, the car can not go arbitrarily fast.
        On the leftmost position ($x=-1.2$) the speed $\dot{x}$ cannot exceed $0$ due to the fact that such position can be reached only with velocity $\dot{x} \leq 0$.
        To have a higher velocity, the car would need to acquire momentum from further left (\ie $x< -1.2$) which is impossible by design ($-1.2$ is the left-boundary of the position domain).
        The maximal speed reachable for $x>-1.2$ can be attained by applying the maximum acceleration at any time step starting from the state $(x,\dot{x}) =(-1.2,0)$. This identifies the curve reported in the Fig.~\ref{fig:mountain} which denotes the boundary of the unreachable region. Note that other states may not be reachable.
Whenever the state space is \textit{misspecified} or the MDP is weakly communicating (\ie $D=+\infty$), OFU-based algorithms (\eg \ucrl) optimistically attribute large reward and non-zero probability to reach states that have never been observed, and thus they tend to repeatedly attempt to \textit{explore} unreachable states. This results in poor performance and linear regret. A first attempt to overcome this major limitation is \regalc~\citep{Bartlett2009regal} (\citet{fruit2018constrained} recently proposed \scal, an implementable efficient version of \regalc), which requires prior knowledge of an upper-bound $H$ to the span (\ie range) of the optimal bias function $h^*$. The optimism of \ucrl is then ``constrained'' to policies whose bias has span smaller than $H$. 
This implicitly ``removes'' non-reachable states, whose large optimistic reward would cause the span to become too large. Unfortunately, an accurate knowledge of the bias span may not be easier to obtain than designing a well-specified state space. \citet{Bartlett2009regal} proposed an alternative algorithm -- \regald -- that leverages on the \emph{doubling trick}~\citep{auer1995gambling}
to avoid any prior knowledge on the span. Nonetheless, we recently noticed a major flaw in the proof of \citep[Theorem 3]{Bartlett2009regal} that questions the validity of the algorithm (see App.~\ref{app:mistake_regal} for further details).
PS-based algorithms also suffer from similar issues.\footnote{We notice that the problem of weakly-communicating MDPs and misspecified states does not hold in the more restrictive setting of finite horizon~\citep[\eg][]{Osband2013more} since exploration is directly tailored to the states that are reachable \textit{within} the known horizon, or under the assumption of the existence of a recurrent state~\citep[\eg][]{DBLP:conf/colt/GopalanM15}.}
To the best of our knowledge, the only regret guarantees available in the literature for this setting are \citep{Abbasi-Yadkori:2015:BOC:3020847.3020849, NIPS2017_6732, theocharous2017largeps}. However, the counter-example of \citet{DBLP:journals/corr/OsbandR16a} seems to invalidate the result of \citet{Abbasi-Yadkori:2015:BOC:3020847.3020849}. On the other hand, \citet{NIPS2017_6732} and \citet{theocharous2017largeps} present PS algorithms with expected \emph{Bayesian} regret scaling linearly with $H$, where $H$ is an upper-bound on the optimal bias spans of all the MDPs that can be drawn from the prior distribution (\citep[Asm. 1]{NIPS2017_6732} and \citep[Sec. 5]{theocharous2017largeps}).
In \citep[Remark 1]{NIPS2017_6732}, the authors claim that their algorithm does not require the knowledge of $H$ to derive the regret bound. However, in App.~\ref{app:unbounded.span.bayesian} we show on a very simple example that for most continuous prior distributions (\eg uninformative priors like Dirichlet), it is very likely that $H =+ \infty$ implying that the regret bound may not hold (similarly for \citep{theocharous2017largeps}).
As a result, similarly to \regalc, the prior distribution should contain prior knowledge on the bias span to avoid poor performance.

In this paper, we present \tucrl, an algorithm designed to trade-off exploration and exploitation in weakly-communicating and multi-chain MDPs (e.g., MDPs with misspecified states) without any prior knowledge and under the only assumption that the agent starts from a state in a communicating subset of the MDP (Sec.~\ref{sec:tucrl}).
In communicating MDPs, \tucrl eventually (after a finite number of steps) performs as \ucrl, thus achieving problem-dependent logarithmic regret.
When the true MDP is weakly-communicating, we prove that \tucrl achieves a $\wt{O}(\sqrt{T})$ regret that with polynomial dependency on the MDP parameters.
We also show that it is not possible to design an algorithm achieving logarithmic regret in weakly-communicating MDPs without having an exponential dependence on the MDP parameters (see Sec.~\ref{sec:no_free_lunch}).
\tucrl is the first computationally tractable algorithm in the OFU literature that is able to adapt to the MDP nature without any prior knowledge.
The theoretical findings are supported by experiments on several domains (see Sec.~\ref{sec:experiments}).


\vspace{-.2cm}
\section{Preliminaries}\label{sec:preliminaries}
\vspace{-.2cm}

We consider a finite \emph{weakly-communicating} Markov decision process \citep[Sec. 8.3]{puterman1994markov} $M = \langle \calS, \A, r, p\rangle$ with a set of states $\calS$ and a set of actions $\A=\bigcup_{s \in \calS} \A_s$. Each state-action pair $(s,a)\in \mathcal{S}\times \mathcal{A}_s$ is characterized by a reward distribution with mean $r(s,a)$ and support in $[0, \rmaxbound]$ as well as a transition probability distribution $p(\cdot|s,a)$ over next states.
In a weakly-communicating MDP, the state-space $\calS$ can be \emph{partioned} into two subspaces \citep[Section 8.3.1]{puterman1994markov}: a \emph{communicating} set of states (denoted $\calScom$ in the rest of the paper) with each state in $\Scom$ accessible --with non-zero probability-- from any other state in $\Scom$ under some stationary deterministic policy, and a --possibly empty-- set of states that are \emph{transient} under all policies (denoted $\calStrans$).
We also denote by $S = |\calS|$, $\Scom = |\calScom|$ and $A = \max_{s\in\mathcal{S}}|\A_s|$ the number of states and actions, and by $\nextstatescom = \max_{s\in\calScom,a\in\A}\|p(\cdot|s,a)\|_0$ the maximum support of all transition probabilities $p(\cdot|s,a)$ with $s \in \calScom$.
The sets $\calS^{\texttt{C}}$ and $\calS^{\texttt{T}}$ form a partition of $\calS$ \ie $\calScom \cap \calStrans = \emptyset$ and $\calScom \cup \calStrans = \calS$. 
A deterministic policy $\pi : \calS \rightarrow \A$ maps states to actions and it has an associated \emph{long-term average reward} (or \textit{gain}) and a \textit{bias function} defined as
\begin{align*}
        g^\pi_M(s) := \lim_{T\to \infty} \mathbb{E}\bigg[ \frac{1}{T}\sum_{t=1}^T r\big(s_t,\pi(s_t)\big) \bigg]; \quad h^\pi_M(s) := \underset{T\to \infty}{C\text{-}\lim}~\mathbb{E}\bigg[\sum_{t=1}^{T} \big(r(s_t,\pi(s_t)) - g_M^\pi(s_t)\big)\bigg],
\end{align*}
where the bias $h^\pi_M(s)$ measures the expected total difference between the rewards accumulated by $\pi$ starting from $s$ and the stationary reward in \emph{Cesaro-limit}\footnote{For policies whose associated Markov chain is aperiodic, the standard limit exists.} (denoted $C\text{-}\lim$). Accordingly, the difference of bias values $h^\pi_M(s)-h^\pi_M(s')$ quantifies the (dis-)advantage of starting in state $s$ rather than $s'$. In the following, we drop the dependency on $M$ whenever clear from the context and
denote by $\SP{h^\pi} := \max_{s\in\calS} h^\pi(s) - \min_{s\in\calS} h^\pi(s)$ the \emph{span} of the bias function. 
In weakly communicating MDPs, any optimal policy $\pi^* \in \argmax_\pi g^\pi(s)$ has \emph{constant} gain, \ie $g^{\pi^*}(s) = g^*$ for all $s\in\calS$.
Finally, we denote by $D$, resp. $\diamcom$, the diameter of $M$, resp. the diameter of the communicating part of $M$ (\ie restricted to the set $\calScom$):
\begin{align} \label{eq:diameters}
	D := \max_{(s,s') \in \calS\times \calS, s\neq s'} \{\tau_{M}(s \to s')\}, \qquad
        \diamcom := \max_{(s,s') \in \calScom \times \calScom, s\neq s'} \{\tau_{M}(s \to s')\}, 
\end{align}
where $\tau_{M}(s\to s')$ is the expected time of the shortest path from $s$ to $s'$ in $M$. 

\textbf{Learning problem.}
Let $M^*$ be the true (\emph{unknown}) weakly-communicating MDP.
We consider the learning problem where $\mathcal{S}$, $\mathcal{A}$ and $\rmaxbound$ are \emph{known}, while sets $\calScom$ and $\calStrans$, rewards $r$ and transition probabilities $p$ are \emph{unknown} and need to be estimated on-line. We evaluate the performance of a learning algorithm $\mathfrak{A}$ after $T$ time steps by its cumulative \emph{regret} $\Delta(\mathfrak{A},T) = T g^* - \sum_{t=1}^T r_t(s_t,a_t)$. Furthermore, we state the following assumption.
\begin{assumption}\label{asm:initial.state}
        The initial state $s_1$ belongs to the communicating set of states $\calScom$.
\end{assumption}
%
While this assumption somehow restricts the scenario we consider, it is fairly common in practice.
For example, all the domains that are characterized by the presence of a resetting distribution (\eg episodic problems) satisfy this assumption (\eg mountain car, cart pole, Atari games, taxi, etc.).

\textbf{Multi-chain MDPs.} While we consider weakly-communicating MDPs for ease of notation, all our results extend to the more general case of multi-chain MDPs.\footnote{In the case of misspecified states, we implicitly define a multi-chain MDP, where each non-reachable state has a self-loop dynamics and it defines a ``singleton'' communicating subset.} In this case, there may be multiple communicating and transient sets of states and the optimal gain $g^*$ is different in each communicating subset. In this case we define $\Scom$ as the set of states that are accessible --with non-zero probability-- from $s_1$ ($s_1$ included) under some stationary deterministic policy. $\calStrans$ is defined as the complement of $\Scom$ in $\calS$ \ie $\calStrans := \calS \setminus \Scom$. With these new definitions of $\Scom$ and $\calStrans$, Asm.~\ref{asm:initial.state} needs to be reformulated as follows:

        \vspace{.1cm}
        \textbf{Assumption 1 for Multi-chain MDPs.}
        {\itshape
The initial state $s_1$ is accessible --with non-zero probability-- from any other state in $\calScom$ under some stationary deterministic policy. Equivalently, $\calScom$ is a communicating set of states.}
        \vspace{.1cm}

Note that the states belonging to $\calStrans$ can either be transient or belong to other communicating subsets of the MDP disjoint from $\calScom$. It does not really matter because the states in $\calStrans$ will never be visited by definition.
As a result, the regret is still defined as before, where the learning performance is compared to the optimal gain $g^*(s_1)$ related to the communicating set of states $\calScom\ni s_1$.


\vspace{-.2cm}
\section{Truncated Upper-Confidence for Reinforcement Learning (\tucrl)}\label{sec:tucrl}
\vspace{-.2cm}
In this section we introduce Truncated Upper-Confidence for Reinforcement Learning (\tucrl), an optimistic online RL algorithm that efficiently balances exploration and exploitation to learn in non-communicating MDPs without prior knowledge (Fig.~\ref{fig:ucrl.constrained}).

Similar to \ucrl, at the beginning of each episode $k$, \tucrl constructs confidence intervals for the reward and the dynamics of the MDP. Formally, for any $(s,a) \in \calS \times \A$ we define
\begin{align}\label{eq:Bp.1plus}
        B_{p,k}(s,a) 
        &= \left\{ 
                \wt{p}(\cdot|s,a)\in \mathcal{C} :~ \forall s'\in\calS,
                |\wt{p}(s'|s,a) - \wh{p}(s'|s,a) | \leq \beta_{p,k}^{sas'}
        \right\},\\ 
	B_{r,k}(s,a) &:= [\wh{r}_k(s,a) - \beta_{r,k}^{sa}, \wh{r}_k(s,a) + \beta_{r,k}^{sa}] \cap [0, \rmaxbound],
\end{align}
where $\mathcal{C} = \{p \in \mathbb{R}^S | \forall s',~p(s') \geq 0 \wedge \sum_{s'} p(s') = 1\}$ is the $(S-1)$-probability simplex, while the size of the confidence intervals is constructed using the empirical Bernstein's inequality~\citep{audibert2007tuning,Maurer2009empirical} as
\begin{align*}
\beta_{r,k}^{sa}
:=
         \sqrt{\frac{14 
                         \wh{\sigma}_{r,k}^2(s,a)
                        b_{k,\delta}
                }{
        N^{+}_k(s,a)
}} + \frac{\frac{49}{3}
\rmaxbound
        b_{k,\delta}
        }{
        N^{\pm}_k(s,a)
}
, \qquad
\beta_{p,k}^{sas'}
:= 
\sqrt{\frac{14 \wh{\sigma}_{p,k}^2(s'|s,a) 
                        b_{k,\delta}
                }{
        N^{+}_k(s,a)
}
} +\frac{\frac{49}{3}
        b_{k,\delta}
        }{
        N^{\pm}_k(s,a)
},
\end{align*}
where $N_k(s,a)$ is the number of visits in $(s, a)$ before episode $k$, 
$N^{+}_k(s,a) := \max\lbrace 1, N_k(s,a) \rbrace$, 
$N^{\pm}_k(s,a) := \max\lbrace 1, N_k(s,a)-1 \rbrace$, 
$\wh{\sigma}_{r,k}^2(s,a)$ and $\wh{\sigma}_{p,k}^2(s'|s,a)$ are the empirical variances of $r(s,a)$ and ${p}(s'|s,a)$ 
and $b_{k,\delta} = \ln(2 SA t_k/\delta)$. The set of plausible MDPs associated with the confidence intervals is then ${\mathcal{M}}_k = \big\{ M = (\calS,\A, \wt{r}, \wt{p}) :\; \wt{r}(s,a) \in B_{r,k}(s,a), \; \wt{p}(\cdot|s,a) \in B_{p,k}(s,a) \big\}$. \ucrl is optimistic w.r.t.\ the confidence intervals so that for all states $s$ that have never been visited the optimistic reward $\wt{r}(s,a)$ is set to $\rmaxbound$, while all transitions to $s$ (i.e., $\wt p(s|\cdot,\cdot)$) are set to the largest value compatible with $B_{p,k}(\cdot,\cdot)$. Unfortunately, some of the states with $N_k(s,a)=0$ may be actually unreachable (i.e., $s\in\calStrans$) and \ucrl would 
uniformly explore the policy space with the hope that at least one policy reaches those
(optimistically desirable) states. \tucrl addresses this issue by first constructing empirical estimates of $\calScom$ and $\calStrans$ (\ie the set of communicating and transient states in $M^*$) using the states that have been visited so far, that is $\calScom_k := \left\{s \in \calS \;\big|\; \sum_{a \in \A_s} N_k(s,a) > 0 \right\} \cup \{s_{t_k}\}$ and $\calStrans_k := \calS \setminus \calScom_k$, where $t_k$ is the starting time of episode $k$. 

In order to avoid optimistic exploration attempts to unreachable states, we could simply execute \ucrl on $\calScom_k$, which is guaranteed to contain only states in the communicating set (since $s_1\in\calScom$ by Asm.~\ref{asm:initial.state}, we have that $\calScom_k \subseteq \calScom$). 
Nonetheless, this algorithm could \textit{under-explore} state-action pairs that would allow discovering other states in $\calScom$, thus getting stuck in a subset of the communicating states of the MDP and suffering linear regret. 
While the states in $\calScom_k$ are guaranteed to be in the communicating subset, it is not possible to know whether states in $\calStrans_k$ are actually reachable from $\calScom_k$ or not. 
Then \tucrl first ``guesses'' a lower bound on the probability of transition from states $s\in\calScom_k$ to $s'\in\calStrans_k$ and whenever the maximum transition probability from $s$ to $s'$ compatible with the confidence intervals (i.e., $\wh{p}_k(s'|s,a) + \beta_{p,k}^{sas'}$) is below the lower bound, it assumes that such transition is not possible.
This strategy is based on the intuition that a transition either does not exist or it should have a sufficiently ``big'' mass. 
However, these transitions should be periodically reconsidered in order to avoid \emph{under-exploration} issues.
More formally, let $(\rho_t)_{t \in \mathbb{N}}$ be a non-increasing sequence to be defined later, for all $s'\in\calStrans_k$, $s\in\calScom_k$ and $a\in\A_s$, the empirical mean $\wh{p}_k(s'|s,a)$ and variance $\wh{\sigma}_{p,k}^2(s'|s,a)$ are zero (\ie this transition has never been observed so far), so the largest probability (most optimistic) of transition from $s$ to $s'$ through any action $a$ is $\wt{p}_k^+(s'|s,a) = \frac{49}{3}\frac{ b_{k,\delta}}{N^{\pm}_k(s,a)}$.
\tucrl compares $\wt{p}_k^+(s'|s,a)$ to $\rho_{t_k}$ and forces all transition probabilities below the threshold to zero, while the confidence intervals of transitions to states that have already been explored (i.e., in $\calScom_k$) are preserved unchanged. This corresponds to constructing the alternative confidence interval
\begin{align}\label{eq:Bp.tucrl.bern}
        \wb B_{p,k}(s,a) 
        &= B_{p,k}(s,a) \cap \left\{ 
                \wt{p}(\cdot|s,a)\in\mathcal{C}  :~ \forall s'\in\calStrans_k \text{ and } \wt{p}_k^+(s'|s,a) < \rho_{t_k}, \wt{p}(s'|s,a) = 0\right\}.
\end{align}
Given $\wb B_{p,k}$, \tucrl (implicitly) constructs the corresponding set of plausible MDPs $\wb{\mathcal{M}}_k$ and then solves the optimistic optimization problem 
\begin{equation}
        \label{eq:tucrl.optimization}
        (\wt{M}_k, \wt{\pi}_k) = \argmax_{M \in \wb{\mathcal{M}}_k, \pi} \{g^{\pi}_M\}.
\end{equation}
The resulting algorithm follows the same structure as \ucrl and it is shown in Fig.~\ref{fig:ucrl.constrained}. The episode stopping condition at line 4 is slightly modified w.r.t.\ \ucrl. In fact, it guarantees that one action is always executed and it forces an episode to terminate as soon as a state previously in $\calStrans_k$ is visited (i.e., $N_k(s_t,a) = 0$). This minor change guarantees that $N_{k+1}(s,a) = 0$ for all the states $s\in\calStrans_k$ that were not reachable at the beginning of the episode. The algorithm also needs minor modifications to the extended value iteration (\evi) algorithm used to solve~\eqref{eq:tucrl.optimization} to guarantee both efficiency and convergence. All technical details are reported in App.~\ref{app:algorithm}.

\begin{figure}[t]
\renewcommand\figurename{\small Figure}
\begin{minipage}{\columnwidth}
\bookboxx{
\textbf{Input:} Confidence $\delta \in ]0,1[$, $r_{\max}$, $\calS$, $\A$

\noindent \textbf{Initialization:} Set $N_0(s,a) := 0$ for any $(s,a) \in \calS \times \A$, $t := 1$ and observe $s_1$.

\noindent \textbf{For} episodes $k=1, 2, ...$ \textbf{do}

\begin{enumerate}[leftmargin=4mm,itemsep=0mm,topsep=0mm]
\item Set $t_k = t$ and episode counters $\nu_k (s,a) = 0$

\item Compute estimates $\wh{p}_k(s' | s,a)$, $\wh{r}_k(s,a)$ and a set
        $\wb{\mathcal{M}}_k$

\item
Compute an $\rmaxbound/\sqrt{t_k}$-approximation $\widetilde{\pi}_k$ of
 Eq.~\ref{eq:tucrl.optimization}

%

\item \textbf{While} 
        $t_k == t$ \textbf{or}  $\Big( \sum_{a \in \A_{s_t}} N_k(s_t,a) > 0 \textbf{ and } \nu_k(s_t, \wt{\pi}_k(s_t)) \leq \max \left\{ 1, N_k\left(s_t,\wt{\pi}_k(s_t)\right) \right\} \Big)$
        \textbf{do}
\begin{enumerate}[leftmargin=4mm,itemsep=0.5mm]
        \item Execute $a_t = \wt{\pi}_k(s_t)$, obtain reward $r_{t}$, and observe $s_{t+1}$
        \item Set $\nu_k (s_t,a_t) \pluseq 1$ and set $t \pluseq 1$ 
\end{enumerate}

\item Set $N_{k+1}(s,a) = N_{k}(s,a)+ \nu_k(s,a)$ \tikz[baseline]{\node[anchor=base](alg_lastline){};}
\end{enumerate}
}
\vspace{-.3cm}
\caption{\small \tucrl algorithm.}
\label{fig:ucrl.constrained}
\end{minipage}
\vspace{-.5cm}
\end{figure}

In practice, we set $\rho_t = \frac{49b_{t,\delta}}{3}\sqrt{\frac{SA}{t}}$, so that the \emph{condition to remove transition} reduces to $N_k^\pm(s,a) > \sqrt{\sfrac{t_k}{SA}}$. This shows that only transitions from state-action pairs that have been poorly visited so far are enabled, while if the state-action pair has already been tried often and yet no transition to $s'\in\calStrans_k$ is observed, then it is assumed that $s'$ is not reachable from $s,a$. When the number of visits in $(s, a)$ is big, the transitions to ``unvisited'' states should be discarded because if the transition actually exists, it is most likely extremely small and so it is worth exploring other parts of the MDP
first. Symmetrically, when the number of visits in $(s, a)$ is small, the transitions to ``unvisited'' states should be enabled 
because the transitions are quite plausible and the algorithm should try to explore the outcome of taking action $a$ in $s$ and possibly reach states in $\calStrans_k$.
We denote the set of state-action pairs that are not sufficiently explored by $\mathcal{K}_k = \big\{ (s,a) \in \calScom_k \times \A \,:\, N_k^\pm(s,a) \leq \sqrt{\sfrac{t_k}{SA}} \big\}$.

\vspace{-.1cm}
\subsection{Analysis of \tucrl} \label{sec:sketch.regret.proof}
\vspace{-.1cm}

We prove that the regret of \tucrl is bounded as follows.
\begin{theorem}\label{thm:tucrl.regret}
        For any weakly communicating MDP $M$, with probability at least $1-\delta$ it holds that for any $T > 1$, the regret of \tucrl is bounded as
        \[
                \Delta(\tucrl,T) = O \left(\rmaxbound \diamcom \sqrt{\nextstatescom \Scom A T \ln\left( \frac{SAT}{\delta} \right)} + \rmaxbound \Big(\diamcom \Big)^2 S^3 A \ln^2\left( \frac{SAT}{\delta} \right) \right).
        \]
\end{theorem}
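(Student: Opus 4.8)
\emph{Proof idea.} The proof follows the template of the \ucrl regret analysis of \citet{Jaksch10}, sharpened with empirical-Bernstein confidence intervals, and the genuinely new work lies in controlling the effect of the transition truncation defining $\wb{B}_{p,k}$. First I would fix a ``good event'' $\Omega$ on which, simultaneously over all episodes $k$ and all $(s,a)$: (i) the Bernstein intervals hold, $r^*(s,a)\in B_{r,k}(s,a)$ and $p^*(\cdot|s,a)\in B_{p,k}(s,a)$; and (ii) no transition of non-negligible probability stays unobserved, i.e.\ whenever $p^*(s'|s,a)\ge\frac{49}{3}b_{k,\delta}/N^{\pm}_k(s,a)$ the pair $(s,a)$ has already sampled $s'$. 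Item (i) is the empirical Bernstein inequality \citep{Maurer2009empirical} with a union bound over $(s,a)$ and a peeling of the horizon; item (ii) follows from a multiplicative Chernoff bound and the definition of $\calScom_k$. A routine computation gives $\mathbb{P}(\Omega)\ge1-\delta$ up to a polylogarithmic rescaling of $\delta$, and the rest of the argument is carried out on $\Omega$.

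I would then record the structural fact that in a weakly-communicating (or multi-chain, under the reformulated Asm.~\ref{asm:initial.state}) MDP one has $\supp p^*(\cdot|s,a)\subseteq\calScom$ for every $s\in\calScom$: a state reachable with positive probability from $\calScom$ would communicate with $\calScom$, contradicting maximality. Together with $s_1\in\calScom$ and the episode-termination rule (line~4, which stops the episode before acting past a newly reached state), this gives $\calScom_k\subseteq\calScom$ for every $k$; hence \tucrl only plays pairs $(s,a)$ with $s\in\calScom$, and the only transitions it can wrongly truncate are those toward not-yet-discovered communicating states in $\calScom\setminus\calScom_k$, of which there are at most $\nextstatescom$ per $(s,a)$ since $\norm[0]{p^*(\cdot|s,a)}\le\nextstatescom$.

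The crux is an approximate-optimism statement: on $\Omega$, $\wt{g}_k\ge g^*-\xi_k$ with $\xi_k=O\big(\rmaxbound\diamcom\nextstatescom\rho_{t_k}\big)$. I would prove it by a case split on the ``frontier'' pairs $\mathcal{F}_k=\{(s,a):s\in\calScom_k,\ \sum_{s'\in\calScom\setminus\calScom_k}p^*(s'|s,a)>0\}$. If some $(s,a)\in\mathcal{F}_k\cap\mathcal{K}_k$, then $\wt{p}_k^+(s'|s,a)\ge\rho_{t_k}$ for the corresponding escaping state $s'$, so that transition is \emph{not} removed in $\wb{B}_{p,k}$, and $\wb{\mathcal{M}}_k$ still contains an MDP whose dynamics on the explored states dominate those of $M^*$ while giving reward $\rmaxbound$ to unexplored states; this is exactly the \ucrl situation and yields $\wt{g}_k\ge g^*$. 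Otherwise every frontier pair lies outside $\mathcal{K}_k$, i.e.\ has been played more than $\sqrt{t_k/(SA)}$ times; by item (ii) of $\Omega$ this forces $p^*(s'|s,a)\le\rho_{t_k}$ for every removed $s'$, so the total mass that $M^*$ sends out of $\calScom_k$ from any state is at most $\nextstatescom\rho_{t_k}$. Constructing a plausible MDP $\wb{M}^*_k\in\wb{\mathcal{M}}_k$ from $M^*$ by rerouting this escaping mass within $\calScom_k$, comparing gains via a transition-perturbation (simulation) inequality $\big|g^{\pi}_{\wb{M}^*_k}-g^{\pi}_{M^*}\big|\le\max_s\norm[1]{\wb{p}(\cdot|s,\pi(s))-p^*(\cdot|s,\pi(s))}\cdot\SP{h^{\pi}_{M^*}}$, and invoking the classical span bound $\SP{h^*}\le\rmaxbound\diamcom$, delivers the stated $\xi_k$; the choice $\rho_t\propto\sqrt{SA/t}\,b_{t,\delta}$ is precisely what makes $\sum_k\nu_k\xi_k$ of the target order. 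The same construction gives $\SP{\wt{h}_k}\le c\,\rmaxbound\diamcom$ for the bias $\wt{h}_k$ returned by the modified \evi, up to its $\rmaxbound/\sqrt{t_k}$ planning error.

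Finally I would run the usual regret decomposition
\[
\Delta(\tucrl,T)=\sum_{k}\sum_{t=t_k}^{t_{k+1}-1}\big(g^*-r_t(s_t,a_t)\big)\le\sum_{k}\nu_k(g^*-\wt{g}_k)+\sum_{k}\sum_{t=t_k}^{t_{k+1}-1}\big(\wt{g}_k-\wt{r}_k(s_t,a_t)\big)+\sum_{k}\sum_{t=t_k}^{t_{k+1}-1}\big(\wt{r}_k-r^*\big)(s_t,a_t)+\sum_{t=1}^{T}\big(r^*(s_t,a_t)-r_t\big).
\]
The last sum is a bounded martingale, $\wt{O}(\rmaxbound\sqrt{T})$ by Azuma; the third is at most $\sum_{k,t}2\beta^{s_ta_t}_{r,k}$ and is lower order by a pigeonhole over $(s,a)\in\calScom\times\A$ using $\sum_{s,a}\sqrt{N_{K+1}(s,a)}\le\sqrt{\Scom AT}$; for the second I would substitute the approximate optimality equation $\wt{g}_k+\wt{h}_k(s_t)=\wt{r}_k(s_t,a_t)+\transpose{\wt{p}_k(\cdot|s_t,a_t)}\wt{h}_k+O(\rmaxbound/\sqrt{t_k})$ and reduce it to (a) a telescoping term bounded per episode by $\SP{\wt{h}_k}$, hence overall by $\SP{\wt{h}_k}\cdot K$ with $K=O(SA\log T)$ episodes, (b) a martingale term $\transpose{(p^*(\cdot|s_t,a_t)-e_{s_{t+1}})}\wt{h}_k$ of size $\wt{O}(\rmaxbound\diamcom\sqrt{T})$, and (c) the estimation term $\transpose{(\wt{p}_k(\cdot|s_t,a_t)-p^*(\cdot|s_t,a_t))}\wt{h}_k\le\sum_{s'}2\beta^{s_ta_ts'}_{p,k}\SP{\wt{h}_k}$, where empirical Bernstein together with Cauchy--Schwarz over the $\le\nextstatescom$ supported next states, $\sum_{s'}\sqrt{\wh{\sigma}^2_{p,k}(s'|s,a)}\le\sqrt{\nextstatescom}\big(\sum_{s'}\wh{\sigma}^2_{p,k}(s'|s,a)\big)^{1/2}$, and a pigeonhole over the visit counts produce the leading $\wt{O}(\rmaxbound\diamcom\sqrt{\nextstatescom\Scom AT})$ term; the residual $b_{k,\delta}/N^{\pm}_k$ pieces of the $\beta$'s, the term $\SP{\wt{h}_k}\cdot K$, and $\sum_k\nu_k\xi_k$ (using $\sum_k\nu_k/\sqrt{t_k}=\wt{O}(\sqrt{T})$) collect into the lower-order $\wt{O}(\rmaxbound(\diamcom)^2S^3A)$ term. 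I expect the approximate-optimism step to be the main obstacle: it is the only place where the truncation and the moving state space $\calScom_k$ interact, and making the perturbation argument rigorous requires handling that $\calScom_k$ is not $M^*$-closed, that the optimal policies of $M^*$ and $\wb{M}^*_k$ may differ on $\calScom\setminus\calScom_k$, and that the rerouted kernel must still lie in $\wb{B}_{p,k}$.
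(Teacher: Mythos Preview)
Your proposal follows the right high-level template, but the two places where the truncation actually bites are handled differently from the paper and, as written, contain a genuine gap.

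\textbf{The span bound.} You assert that ``the same construction gives $\SP{\wt{h}_k}\le c\,\rmaxbound\diamcom$'' for the iterate returned by \evi. This is the span over the \emph{full} state space $\calS$, and it is not bounded by $\diamcom$ in general: when $\mathcal{K}_k\neq\emptyset$, \evi runs on all of $\calS$, states in $\calStrans_k$ can optimistically self-loop with reward $\rmaxbound$, and states of $\calScom_k$ far from the under-explored pairs may have bias arbitrarily far from that of $\calStrans_k$. Your rerouting construction compares $M^*$ to one particular $\wb M^*_k\in\wb{\mathcal{M}}_k$, but $\wt h_k$ is produced by the \emph{optimistic} model, which need not be $\wb M^*_k$; a gain-perturbation inequality says nothing about the bias of the optimistic solution. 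The paper never bounds $\SP{\wt h_k}$. Instead it (i) splits off all steps with $(s_t,a_t)\in\mathcal{K}_k$ and bounds their total number by $O(\sqrt{\Scom AT})$ via Lem.~\ref{lem:sum.bad.states}; (ii) uses the modified episode-stopping rule together with the truncation to restrict \emph{both} the outer and inner sums in $\nu_k'(\wt P_k-I)\wt h_k$ to $\calScom_k$; and (iii) proves only the \emph{restricted} span bound $sp_{\calScom_k}\{\wt h_k\}\le\rmaxbound\diamcom$. Step~(iii) is itself delicate: since truncation may remove true transitions, $M^*\notin\wb{\mathcal{M}}_k$ and the usual argument ``shortest paths in the extended MDP are no longer than in $M^*$'' breaks. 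The paper resolves this by passing to a specific $\ell_1$-relaxation $\wb{\mathcal{M}}_k^+\supseteq\wb{\mathcal{M}}_k$ and showing that for $s,\wb s\in\calScom_k$ the shortest path in $\wb{\mathcal{M}}_k^+$ coincides with that in the untruncated $\mathcal{M}_k^+\ni M^*$ (App.~\ref{app:shortestpath}). None of this is in your proposal.

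\textbf{Optimism.} Your approximate-optimism route is conceptually different from the paper's and, as stated, does not recover the claimed bound. The paper establishes \emph{exact} optimism (up to \evi accuracy) once $t_k\ge C(k)=O\big((\diamcom)^2 S^3A\ln^2(SAt_k/\delta)\big)$: a shortest-path argument (Lem.~\ref{L:M.com}) shows that if every $(s,a)\in\calScom_k\times\A$ satisfies $N_k^\pm(s,a)>\sqrt{t_k/(SA)}$ then $\calStrans_k=\calStrans$, hence $M^*\in\wb{\mathcal{M}}_k^+$; otherwise some transition to $\calStrans_k$ is enabled and $\wt g_k\ge\rmaxbound\ge g^*$. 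The burn-in $t_k<C(k)$, paid linearly, is precisely the origin of the additive $(\diamcom)^2S^3A\ln^2$ term. Your route instead contributes $\sum_k\nu_k\xi_k$ with $\xi_k=O(\rmaxbound\diamcom\nextstatescom\rho_{t_k})$; using $\rho_{t_k}\asymp\sqrt{SA/t_k}\,b_{t_k,\delta}$ and $\sum_k\nu_k/\sqrt{t_k}=O(\sqrt{T})$ this yields $O\big(\rmaxbound\diamcom\nextstatescom\sqrt{SAT}\,\ln(\cdot)\big)$, which exceeds the theorem's leading term by a factor $\sqrt{\nextstatescom S/\Scom}$. So even granting the perturbation lemma (whose proof is nontrivial here since $\pi^*$ for $M^*$ may exit $\calScom_k$), you would not obtain the stated constants.
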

The first term in the regret shows the ability of \tucrl to adapt to the communicating part of the true MDP $M^*$ by
scaling with the \emph{communicating} diameter $\diamcom$ and MDP parameters $\Scom$ and $\nextstatescom$. The second term corresponds to the regret incurred in the early stage where the regret grows linearly.
When $M^*$ is communicating, we match the square-root term of \ucrl (first term), while the second term is bigger than the one appearing in \ucrl by a multiplicative factor $\diamcom S$ (ignoring logarithmic terms, see Sec.~\ref{sec:no_free_lunch}).

We now provide a sketch of the proof of Thm.~\ref{thm:tucrl.regret} (the full proof is reported in App.~\ref{app:tucrl.regret}). In order to preserve readability, all following inequalities should be interpreted up to minor approximations and in high probability. 

Let $\Delta_k := \sum_{s,a}\nu_k(s,a)(g^* -r(s,a))$ be the regret incurred in episode $k$, where $\nu_k(s,a)$ is the number of visits to $s,a$ in episode $k$. We decompose the regret as
\begin{align*}
\Delta(\tucrl, T) \lesssim \sum_{k=1}^m \Delta_k \cdot \mathbbm{1}\{M^*\in \mathcal{M}_k\} \lesssim \sum_{k=1}^m \Delta_k \cdot \mathbbm{1}\{t_k < C(k)\} + \sum_{k=1}^m \Delta_k \cdot \mathbbm{1}\{t_k \geq C(k)\}
\end{align*}
where $C(k) = O\left((\diamcom)^2 S^3A\ln^2(\sfrac{2SAt_k}{\delta})\right)$ defines the length of a full exploratory phase, where the agent may suffer \textit{linear regret}.



\textbf{Optimism.} The first technical difficulty is that whenever some transitions are disabled, the plausible set of MDPs $\wb{\mathcal{M}}_k$ may actually be \textit{biased} and not contain the true MDP $M^*$. This requires to prove that \tucrl (i.e., the gain of the solution returned by \evi) is always optimistic despite ``wrong'' confidence intervals. 
The following lemma helps to identify the possible scenarios that \tucrl can produce (see App.~\ref{app:regret.Minset}).\footnote{Notice that $M^* \in \mathcal{M}_k$ is true w.h.p. since $\mathcal{M}_k$ is obtained using non-truncated confidence intervals.}
\begin{lemma}
        Let episode $k$ be such that $M^* \in \mathcal{M}_k$, $\calStrans_k\neq \emptyset$ and $t_k \geq C(k)$. 
        Then, either $\calStrans_k = \calStrans$ (case I) or $\mathcal{K}_k \neq \emptyset$, \ie$\exists(s,a) \in \calScom_k \times \A$ for which transitions to $\calStrans_k$ are allowed (case II).
\end{lemma}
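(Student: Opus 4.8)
The plan is a proof by contradiction: assuming that \emph{neither} case holds, I would derive $t_k < C(k)$, contradicting the hypothesis. First I record the invariant $\calScom_k \subseteq \calScom$, which holds because by Asm.~\ref{asm:initial.state} the initial state lies in $\calScom$, $\calScom$ is closed under the dynamics (no transition leaves it, \ie the states of $\calStrans$ are never visited, cf.\ the discussion after Asm.~\ref{asm:initial.state}), and $\calScom_k$ contains only visited states together with the current one. Hence $\calStrans \subseteq \calStrans_k$, so $\calStrans_k = \calStrans$ is equivalent to $\calScom_k = \calScom$; if case~I fails we therefore have $\calScom_k \subsetneq \calScom$ and must show $\mathcal{K}_k \neq \emptyset$. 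Suppose instead $\mathcal{K}_k = \emptyset$. Then every $(s,a) \in \calScom_k\times\A$ has $N^{\pm}_k(s,a) > \sqrt{t_k/(SA)}$, so $\wt p_k^+(s'|s,a) = \tfrac{49}{3}\,b_{k,\delta}/N^{\pm}_k(s,a) < \rho_{t_k}$ for every $s'\in\calStrans_k$; in particular all transitions from $\calScom_k$ to $\calStrans_k$ are truncated to zero in $\wb B_{p,k}$. Moreover these transitions were never observed, hence $\wh p_k(s'|s,a) = \wh\sigma^2_{p,k}(s'|s,a) = 0$, which makes $\beta_{p,k}^{sas'} = \wt p_k^+(s'|s,a)$, and combining with $M^* \in \mathcal{M}_k$ gives $p(s'|s,a) \le \beta_{p,k}^{sas'} < \rho_{t_k}$ for every $s\in\calScom_k$, $a\in\A_s$, $s'\in\calStrans_k$. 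Summing over $s'\in\calScom\setminus\calScom_k \subseteq \calStrans_k$ and using that $p(\cdot|s,a)$ has at most $\nextstatescom \le S$ nonzero entries yields
\[
p(\calScom\setminus\calScom_k \mid s,a) < \nextstatescom\,\rho_{t_k} \le S\,\rho_{t_k} \qquad \text{for all } s\in\calScom_k,\ a\in\A_s .
\]

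Next I use the communicating diameter to lower-bound $\rho_{t_k}$. Fix $s'\in\calScom\setminus\calScom_k$ (nonempty, as case~I fails; note $|\calScom|\ge 2$, so $\diamcom\ge 1$) and let $\pi$ be a policy whose expected time to reach $s'$ from $s_1$ in $M^*$ is at most $\diamcom$ (such $\pi$ exists by definition of $\diamcom$ since $s_1,s'\in\calScom$). Running $\pi$ from $s_1$, the trajectory stays in $\calScom$ (closedness), and since $s'\notin\calScom_k$ it can reach $s'$ only after first transitioning from $\calScom_k$ into $\calScom\setminus\calScom_k$; by the displayed bound each step performs such a crossing with probability $< S\rho_{t_k}$, so by a geometric argument the expected time of the first crossing is $\ge 1/(S\rho_{t_k})$ (and if $S\rho_{t_k}\ge 1$ the bound below is immediate). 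Hence $\diamcom \ge 1/(S\rho_{t_k})$, \ie $\rho_{t_k}\ge 1/(S\diamcom)$. On the other hand $\rho_{t_k} = \tfrac{49\,b_{t_k,\delta}}{3}\sqrt{SA/t_k}$, and $t_k \ge C(k)$, with $C(k)$ of order $(\diamcom)^2 S^3 A \ln^2(2SAt_k/\delta)$, forces $\rho_{t_k} < 1/(S\diamcom)$ as soon as the leading constant in $C(k)$ exceeds $(49/3)^2$ (using $\nextstatescom\le S$). This contradicts $\rho_{t_k}\ge 1/(S\diamcom)$, so $\mathcal{K}_k\neq\emptyset$: case~II holds.

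I expect the crux to be the geometric ``first-crossing'' step. Two points need care there: (i) any trajectory from $s_1$ to $s'$ must genuinely cross out of $\calScom_k$, which is exactly where the closedness of $\calScom$ is needed (the trajectory cannot escape to $\calStrans$ and re-enter $\calScom$ elsewhere); and (ii) the per-step crossing probability must be bounded by $S\rho_{t_k}$ \emph{uniformly in the current state}, which is precisely what the displayed inequality provides because it holds for every $s\in\calScom_k$. The only other bookkeeping is to fix the constant in $C(k)$ large enough that $\tfrac{49\,b_{t_k,\delta}}{3}\sqrt{SA/t_k} < 1/(S\diamcom)$ whenever $t_k\ge C(k)$; with $\nextstatescom\le S$ this amounts to taking that constant above $(49/3)^2$.
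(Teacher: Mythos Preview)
Your proof is correct and rests on the same underlying idea as the paper's: if $\mathcal{K}_k=\emptyset$ and $t_k\ge C(k)$, every transition probability from $\calScom_k$ into $\calStrans_k$ is so small that the expected time to exit $\calScom_k$ exceeds $\diamcom$, which forces $\calScom_k=\calScom$. The paper (Lem.~\ref{L:M.com}) casts this step as a Bellman-equation manipulation on the stochastic shortest path to an arbitrary target $\bar s\in\calStrans_k$, deriving $\min_{s\in\calScom_k}\tau_{M^*}(s\to\bar s)>\diamcom$ directly from the recursion and concluding $\bar s\in\calStrans$; you instead fix one $s'\in\calScom\setminus\calScom_k$, take a shortest-path policy from $s_1$, and run a geometric first-exit bound. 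The two arguments are equivalent in spirit; yours is the more elementary probabilistic version (no Bellman recursion needed), while the paper's is slightly tighter in the constants: it sums the per-state bound over $\calStrans_k$, producing the factor $\Strans_k$ that appears in the exact $C(k)=\tfrac{2401}{9}(\diamcom)^2SA(\Strans_k\,b_{k,\delta})^2$, whereas your count via $S$ or $\nextstatescom$ only matches the cruder $C(k)=\Theta\big((\diamcom)^2S^3A\,b_{k,\delta}^2\big)$ of the main-text big-$O$. If you want to recover the paper's constant, simply sum over $s'\in\calStrans_k$ rather than $s'\in\calScom\setminus\calScom_k$ to obtain $p(\calStrans_k\mid s,a)<\Strans_k\,\rho_{t_k}$; the rest of your argument then goes through verbatim with $\Strans_k$ in place of $S$.
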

This result basically excludes the case where $\calStrans_k \supset \calStrans$ (i.e., some states have not been reached)  and yet no transition from $\calScom_k$ to them is enabled. We start noticing that when $\calStrans_k = \emptyset$, the true MDP $M^* \in \mathcal{M}_k = \wb{\mathcal{M}}_k$ w.h.p. by construction of the confidence intervals.
Similarly, if $\calStrans_k = \calStrans$ then $M^* \in \wb{\mathcal{M}}_k$ w.h.p., since \tucrl only truncates transitions that are indeed forbidden in $M^*$ itself.
In both cases,  we can use the same arguments in~\citep{Jaksch10} to prove optimism.
In \emph{case II} the gain of any state $s' \in \calStrans_k$ is set to $\rmaxbound$ and, since there exists a path from $\calScom_k$ to $\calStrans_k$, the gain of the solution returned by EVI is $\rmaxbound$, which makes it trivially optimistic. As a result we can conclude that $\wt g_k \gtrsim g^*$ (up to the precision of \evi).

\textbf{Per-episode regret.} After bounding the optimistic reward $\wt{r}_k(s,a)$ w.r.t.\ $r(s,a)$, the only part left to bound the per-episode regret $\Delta_k$ is the term $\wt\Delta_k = \sum_{s,a}\nu_k(s,a)(\wt{g}_k -\wt{r}_k(s,a))$. Similar to \ucrl, we could use the (optimistic) optimality equation and rewrite $\wt\Delta_k$ as
\begin{align}\label{eq:no.good}
\begin{split}
        \wt{\Delta}_k = \sum_{s\in\textcolor{red}{\calS}}\nu_k(s,\wt{\pi}_k(s))\bigg( \sum_{s'\in \textcolor{red}{\calS}} \wt{p}_k(s'|s,\wt{\pi}_k(s)) \wt{h}_k(s') - \wt{h}_k(s) \bigg) = \nu_k' \left( \wt{P}_k -I \right) 
w_k
\end{split}
\end{align}
where $w_k := \wt{h}_k - \min_{s\in\calS}\{\wt{h}_k\} e$ is a shifted version of the vector $\wt{h}_k$ returned by EVI at episode $k$, and then proceed by bounding the difference between $\wt P_k$ and $P_k$ using standard concentration inequalities. Nonetheless, we would be left with the problem of bounding the $\ell_\infty$ norm of $w_k$ (i.e., the range of the optimistic vector $\wt{h}_k$) over the whole state space, i.e., $\| w_k \|_\infty = sp_{\calS}\{\wt h_k\} = \max_{s\in\calS} \wt h_k(s) - \min_{s\in\calS} \wt h_k(s)$. While in communicating MDPs, it is possible to bound this quantity by the diameter of the MDP as $\SP{h_k} \leq D$~\citep[][Sec. 4.3]{Jaksch10}, in weakly-communicating MDPs $D = +\infty$, thus making this result uninformative. As a result, we need to restrict our attention to the subset of communicating states $\calScom$, where the diameter is finite. We then split the per-step regret over states depending on whether they are explored enough or not as
$\Delta_k \lesssim \sum_{s,a}\nu_k(s,a)(\wt{g}_k -\wt{r}_k(s,a))
\mathbbm{1}\{(s,a) \notin \mathcal{K}_k\} 
+ \rmaxbound \sum_{s,a}\nu_k(s,a)
\mathbbm{1}\lbrace (s,a) \in \mathcal{K}_k\rbrace$.
We start focusing on the poorly visited state-action pairs, \ie $(s,a) \in \mathcal{K}_k$.
In this case \tucrl may suffer the maximum per-step regret $\rmaxbound$ but the number of times this event happen is cumulatively ``small'' (App.~\ref{sec:poorly.visited.states}):
\begin{lemma}\label{lem:sum.bad.states}
 For any $T \geq 1$ and any sequence of states and actions $\lbrace s_1, a_1, \dots \dots s_T, a_T \rbrace$ we have: 
\begin{equation*}\label{eq:badstates}
        \sum_{k=1}^{m}\sum_{s,a}\nu_k(s,a)\mathbbm{1}\lbrace \underbrace{N_k^{\pm}(s,a)\leq\sqrt{\sfrac{t_k}{SA}} }_{(s,a)\in\mathcal{K}_k}\rbrace 
        \leq
        \sum_{t=1}^{T}\mathbbm{1}\left\{ N_{k_t}^{\pm}(s_t,a_t)\leq \sqrt{\sfrac{t}{SA}} \right\} 
        \leq 2\left(\sqrt{\Scom AT} + \Scom A\right)
\end{equation*}
\end{lemma}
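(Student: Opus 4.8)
The plan is to first collapse the episode-indexed double sum into a plain sum over time steps, and then run an elementary counting argument over state--action pairs. For the first inequality, expand $\nu_k(s,a)=\sum_{t\in\text{ep }k}\mathbbm{1}\{(s_t,a_t)=(s,a)\}$ and swap the order of summation: since each $t\le T$ lies in a unique episode $k_t$ (with $t_{k_t}\le t$) and the inner sum over $(s,a)$ only picks up $(s_t,a_t)$, the left-hand side equals $\sum_{t=1}^T\mathbbm{1}\{N^{\pm}_{k_t}(s_t,a_t)\le\sqrt{t_{k_t}/(SA)}\}$, which is bounded by $\sum_{t=1}^T\mathbbm{1}\{N^{\pm}_{k_t}(s_t,a_t)\le\sqrt{t/(SA)}\}$ because $t_{k_t}\le t$. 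So the first inequality is just Fubini plus $t_k\le t$.

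For the second inequality I would bound $\sum_{t=1}^T\mathbbm{1}\{N^{\pm}_{k_t}(s_t,a_t)\le\sqrt{t/(SA)}\}$. Since $t\le T$, it suffices to bound $\sum_{t=1}^T\mathbbm{1}\{N^{\pm}_{k_t}(s_t,a_t)\le x\}$ with $x:=\sqrt{T/(SA)}$, and I would split this over distinct state--action pairs. The one structural input is that, since \tucrl starts at $s_1\in\calScom$ and $\calScom$ is closed under the true transition kernel (any state reachable from $\calScom$ is reachable from $s_1$, hence belongs to $\calScom$), every visited state lies in $\calScom$; thus $\nu_k(s,a)=0$ unless $s\in\calScom$ and at most $\Scom A$ pairs ever contribute. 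Fix such a pair $(s,a)$. Because $k\mapsto N_k(s,a)$ and $t\mapsto k_t$ are non-decreasing, so is $t\mapsto N^{\pm}_{k_t}(s,a)$; let $K=K(s,a)$ be the largest episode index with $N^{\pm}_K(s,a)\le x$ (take $K=0$ if there is none). Every time step counted for $(s,a)$ then falls in an episode $k\le K$, so its count is at most $\sum_{k\le K}\nu_k(s,a)=N_{K+1}(s,a)$.

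It remains to bound $N_{K+1}(s,a)$. The episode stopping condition on line~4 of Fig.~\ref{fig:ucrl.constrained} guarantees $\nu_k(s,a)\le\max\{1,N_k(s,a)\}$ (up to an additive constant, which I absorb), so
\[
 N_{K+1}(s,a)=N_K(s,a)+\nu_K(s,a)\le 2\max\{1,N_K(s,a)\}\le 2\big(N^{\pm}_K(s,a)+1\big)\le 2(x+1).
\]
Summing over the at most $\Scom A$ visited pairs gives $\Scom A\cdot 2(x+1)=2\Scom A\sqrt{T/(SA)}+2\Scom A$, and $\Scom A\sqrt{T/(SA)}=\sqrt{\Scom^2AT/S}\le\sqrt{\Scom AT}$ since $\Scom\le S$, which yields the claimed $2(\sqrt{\Scom AT}+\Scom A)$. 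I expect the only delicate points to be the constant bookkeeping in the last display (the exact form of the stopping rule only shifts the additive constant, not the leading $\sqrt{\Scom AT}$ term) and the observation that the trajectory never leaves $\calScom$, which is precisely what replaces $S$ by $\Scom$ in the final bound; the rest is routine.
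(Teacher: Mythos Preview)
Your proof is correct and follows essentially the same approach as the paper's proof (Lemma~\ref{lem:bound_bad_state_action} in Sec.~\ref{sec:poorly.visited.states}): the first inequality via unrolling the episode sum and using $t_{k_t}\le t$, the second via a per-$(s,a)$ count bounded by $N_{K+1}(s,a)$ together with the doubling stopping rule $\nu_k(s,a)\le N_k^+(s,a)$, and the final $\Scom\le S$ simplification. The only cosmetic difference is that the paper tracks the last \emph{time} $t_{s,a}$ the indicator fires (keeping the $\sqrt{t_{s,a}/(SA)}$ threshold until the end) while you track the last \emph{episode} $K$ after immediately relaxing the threshold to $\sqrt{T/(SA)}$; both lead to the identical bound $2(x+1)$ per pair. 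Your parenthetical ``up to an additive constant, which I absorb'' is in fact unnecessary: the chain $N_{K+1}\le 2\max\{1,N_K\}\le 2(N_K^\pm+1)\le 2(x+1)$ holds exactly from $\nu_K\le N_K^+$, with no slack to absorb.
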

When {\color{blue} $(s,a) \notin \mathcal{K}_k$} (\ie $ N_k^\pm(s,a) > \sqrt{\sfrac{t_k}{SA}}$ holds), $\sum_{s,a}\nu_k(s,a)(\wt{g}_k -\wt{r}_k(s,a))\cdot \mathbbm{1}\lbrace (s,a) \notin \mathcal{K}_k \rbrace$ can be bounded as in Eq.~\ref{eq:no.good} but now restricted on $\calScom_k$, so that,
\begin{align*}
        \nu_k(\wt{P}_k -I)\wt{h}_k &= \sum_{s\in \textcolor{blue}{\calScom_k}} \nu_k(s,\wt{\pi}_k(s)) \bigg(\sum_{s'\in \textcolor{blue}{\calScom_k}}\wt{p}_k(s'|s,\wt{\pi}_k(s))w_k(s') - w_k(s) \bigg).
\end{align*}
Since the stopping condition guarantees that $\nu_k(s,\wt{\pi}_k(s)) = 0$ for all $s\in\calStrans_k$, we can first restrict the outer summation to states in $\calScom$. Furthermore, all state-action pairs $(s,a) \notin \mathcal{K}_k$ are such that the optimistic transition probability $\wt p_k(s'|s,a)$ is forced to zero for all $s'\in\calStrans_k$, thus reducing the inner summation. We are then left with providing a bound for the range of $w_k$ \textit{restricted} to the states in $\calScom_k$, i.e., $sp_{\calScom_k}\{w_k\} = \max_{s\in\calScom_k}\{w_k\}$.
We recall that \evi run on a set of plausible MDPs $\wb\M_k$ returns a function $\wt h_k$ such that $\wt{h}_k(s') - \wt{h}_k(s) \leq \rmaxbound \cdot \tau_{\wb{\mathcal{M}}_k}(s \to s')$, for any pair $s,s' \in \calS$, where $\tau_{\wb{\mathcal{M}}_k}(s \to s')$ is the expected shortest path in the extended MDP $\wb\M_k$. Furthermore, since $M^* \in \M_k$,  for all $s, s' \in \calScom_k$, $\tau_{\mathcal{M}_k}(s \to s') \leq \diamcom$. Unfortunately, since $M^*$ may not belong to $\wb\M_k$, the bound on the shortest path in $\M_k$ (i.e., $\tau_{\mathcal{M}_k}(s \to s')$) may not directly translate into a bound for the shortest path in $\wb\M_k$, thus preventing from bounding the range of $\wt h_k$ even on the subset of states in $\calScom_k$. 
Nonetheless, in App.~\ref{app:shortestpath} we show that a minor modification to the confidence intervals of $\wb\M_k$ makes the shortest paths between any two states $s, s' \in \calScom_k$ equivalent in both sets of plausible MDPs, thus providing the bound $sp_{\calScom_k}\{w_k\} \leq \diamcom$.
\footnote{Note that there is not a single way to modify the confidence intervals of $\wb\M_k$ to keep $sp_{\calScom_k}\{w_k\}$ under control.  In App.~\ref{sec:new.relaxation} we present an alternative modifications for which the shortest paths between any two states $s,s' \in \calScom_k$ is not equal but smaller than in $\wb\M_k$ thus ensuring that $sp_{\calScom_k}\{w_k\} \leq \diamcom$.}
The final regret in Thm.~\ref{thm:tucrl.regret} is then obtained by combining all different terms.


\vspace{-.2cm}
\section{Experiments} \label{sec:experiments}
\vspace{-.2cm}
\begin{figure}[t]
        \centering
        \begin{subfigure}[h]{0.4\textwidth}
                \tikz[overlay]{
                        \node [anchor=south west] at (-0.1cm, 2.3cm) {
        \includegraphics[width=\textwidth]{./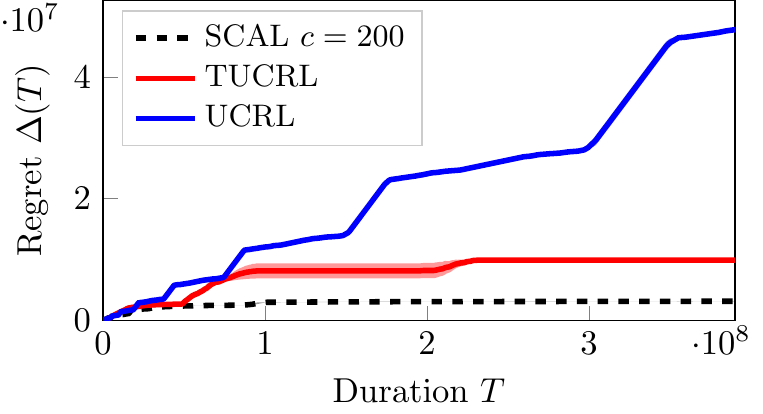}
                        };
                        \node [anchor=south west] at (-0.25cm, -0.5cm) {
	  \includegraphics[width=1.02\textwidth]{./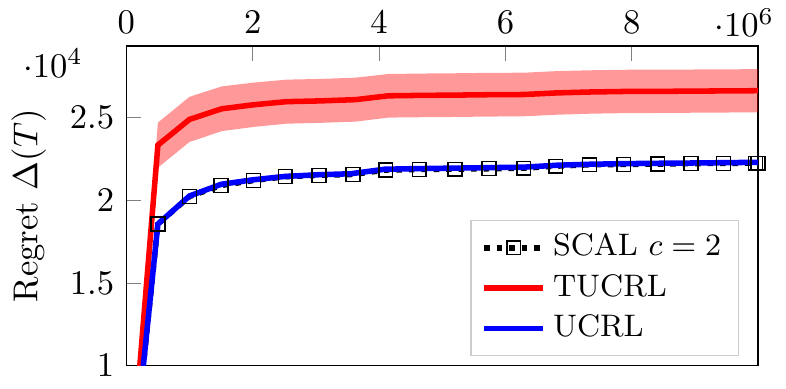}
                        };
                }
        \end{subfigure}
        \hfill
        \begin{subfigure}[b]{0.54\textwidth}
        \includegraphics[width=\textwidth]{./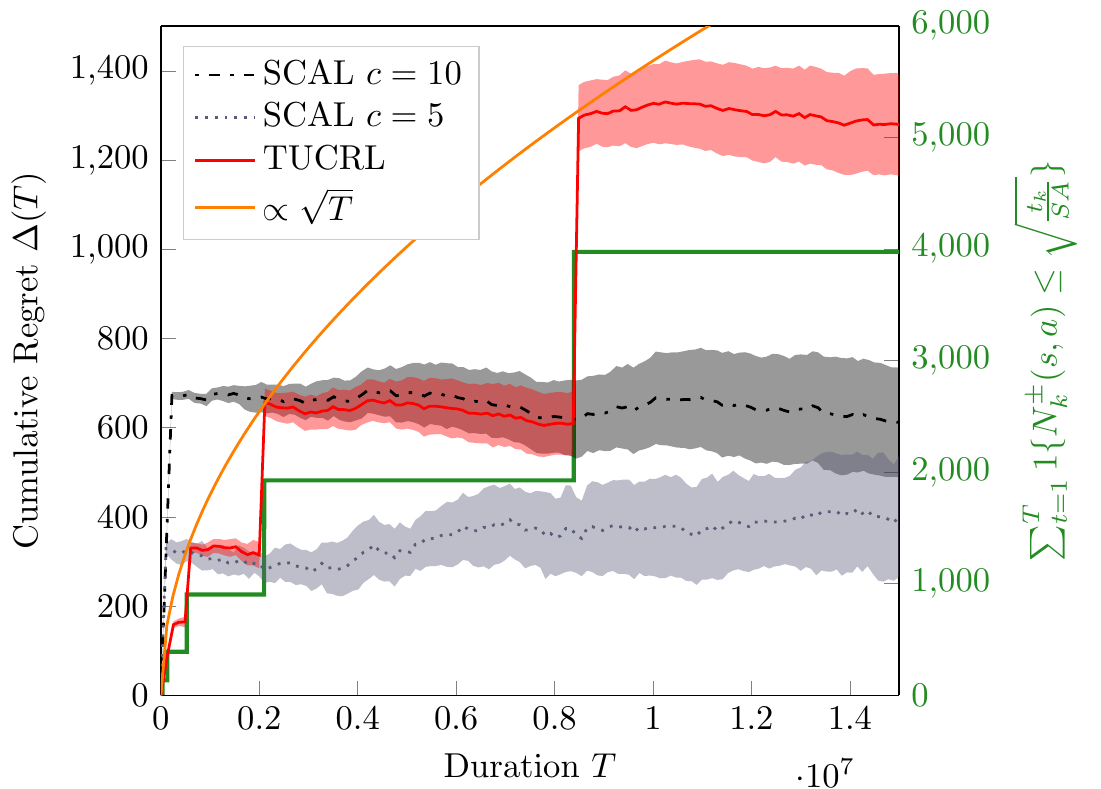}
        \end{subfigure}\hfill
        \vspace{.2cm}
        \caption{Cumulative regret in the taxi with misspecified states  \emph{(left-top)} and in the communicating taxi \emph{(left-bottom)}, and in the weakly communicating three-states domain with $D=+\infty$ \emph{(right)}. Confidence intervals $\beta_{r,k}$ and $\beta_{p,k}$ are shrunk by a factor $0.05$ and $0.01$ for the three-states domain and taxi, respectively. Results are averaged over $20$ runs and 95\% confidence intervals are reported.}
        \label{fig:3states}
\end{figure}

In this section, we present experiments to validate the theoretical findings of Sec.~\ref{sec:tucrl}. We compare \tucrl against \ucrl and \scal.\footnote{To the best of out knowledge, there exists no implementable algorithm to solve the optimization step of \regal and \regald.}
We first consider the taxi problem~\citep{dietterich2000taxi} implemented in OpenAI Gym~\citep{Brockman2016gym}.\footnote{The code is available on \href{https://github.com/RonanFR/UCRL}{GitHub}.}
Even such a simple domain contains \emph{misspecified states}, since the state space is constructed as the outer product of the taxi position, the passenger position and the destination.
This leads to states that cannot be reached from any possible starting configuration (all the starting states belong to $\calScom$).
More precisely, out of $500$ states in $\calS$, $100$ are non-reachable.
On Fig.~\ref{fig:3states}\emph{(left)} we compare the regret of \ucrl, \scal and \tucrl when the misspecified states are present \emph{(top)} and when they are removed \emph{(bottom)}.
In the presence of misspecified states \emph{(top)}, the regret of \ucrl clearly grows linearly with $T$ while \tucrl is able to \emph{learn} as expected.
On the other hand, when the MDP is communicating \emph{(bottom)} \tucrl performs similarly to \ucrl.
The small loss in performance is most likely due to the initial exploration phase during which the confidence intervals on the transition probabilities used by \ucrl (see definition of $\wb{\mathcal{M}}_k$) are tighter than those used by \tucrl (see definition of $\wb{\mathcal{M}}_k^+$). \tucrl uses a ``loose'' bound on the $\ell_1$-norm while \ucrl uses $S$ different bounds, one for every possible next state.
Finally, \scal outperforms \tucrl by exploiting prior knowledge on the bias span. 

We further study \tucrl regret in the simple three-state domain introduced in~\citep{fruit2018constrained} (see App.~\ref{app:experiments} for details) with different reward distributions (uniform instead of Bernouilli).
The environment is composed of only three states ($s_0$, $s_1$ and $s_2$) and one action per state, except in $s_2$ where two actions are available. As a result, the agent only has the choice between two possible policies.
Fig.~\ref{fig:3states}\emph{(left)} shows the cumulative regret achieved by \tucrl and \scal (with different upper-bounds on the bias span) when the diameter is \emph{infinite} \ie $\calScom=\{s_0,s_2\}$ and $\calStrans =\{s_1\}$ (we omit \ucrl, since it suffers linear regret).
Both \scal and \tucrl quickly achieve sub-linear regret as predicted by theory.
However, \scal and \tucrl seem to achieve different growth rates in regret: while \scal appears to reach a logarithmic growth, the regret of \tucrl seems to grow as $\sqrt{T}$ with \emph{periodic} ``jumps'' that are increasingly distant (in time) from each other.
This can be explained by the way the algorithm works: while most of the time \tucrl is optimistic on the restricted state space $\calScom$ (\ie $\calScom_k = \calScom$), it \emph{periodically} allows transitions to the set $\calStrans$ (\ie $\calScom_k = \calS$), which is indeed not reachable.
Enabling these transitions triggers aggressive \emph{exploration} during an entire episode. The policy played is then sub-optimal creating a ``jump'' in the regret. At the end of this \emph{exploratory episode}, $\calScom_k$ will be set again to $\calScom$ and the regret will stop increasing until the condition $N_k^\pm \leq \sqrt{\sfrac{t_k}{SA}}$ occurs again (the time between two consecutive exploratory episodes grows quadratically).
The cumulative regret incurred during exploratory episodes can be bounded by the term plotted in green on Fig.~\ref{fig:3states}\emph{(left)}. In Lem.~\ref{lem:sum.bad.states} we proved that this term is always bounded by $O(\sqrt{\Scom AT})$. Therefore, it is not surprising to observe a $\sqrt{T}$ increase of both the green and red curves.
Unfortunately, the growth rate of the regret will keep increasing as $\sqrt{T}$ and will never become logarithmic unlike \scal (or \ucrl when the MDP is communicating). This is because the condition $N_k^\pm \leq \sqrt{\sfrac{t_k}{SA}}$ will always be triggered $\Theta(\sqrt{T})$ times for any $T$. In Sec.~\ref{sec:no_free_lunch} we show that this is not just a drawback specific to \tucrl, but it is rather an \emph{intrinsic limitation} of learning in weakly-communicating MDPs.


\vspace{-.2cm}
\section{Exploration-exploitation dilemma with infinite diameter}\label{sec:no_free_lunch}
\vspace{-.2cm}

\begin{figure}[t]
\begin{subfigure}[h]{0.57\textwidth}
 \begin{tikzpicture}[domain=0:4, font=\small, scale=0.65]
    \draw[very thin,color=gray] (-0.1,-0.1) (10,3.9);
    \draw[->] (-0.2,0) -- (10,0) node[right] {$T$};
    \draw[->] (0,-0.2) -- (0,5.) node[above, xshift=1cm] { $\mathbb{E}[\Delta(\textsc{UCRL},T,M)]$};
    
    \draw[scale=1,domain=0:5,smooth,variable=\x,ForestGreen,samples=100,line width=0.7pt] plot ({\x},{0.8*1.2*\x});
    \draw[scale=1,domain=0:10,smooth,variable=\x,blue,samples=400,line width=0.7pt] plot ({\x},{1.3*1.2*\x^(0.5)});
    \draw[scale=1,domain=0.03:10,smooth,variable=\x,red,samples=100,line width=0.7pt] plot ({\x},{1.2*2.1+0.65*1.2*ln(\x)});
    
    \draw[scale=1,domain=0:2.64,smooth,variable=\x,black,samples=100, line width=1.75pt] plot ({\x},{0.8*1.2*\x});
    \draw[scale=1,domain=2.64:6.49,smooth,variable=\x,black,samples=100, line width=1.75pt] plot ({\x},{1.3*1.2*\x^(0.5)});
    \draw[scale=1,domain=6.49:10,smooth,variable=\x,black,samples=100, line width=1.75pt] plot ({\x},{1.2*2.1+1.2*0.65*ln(\x)});
    
    \node () at (5.2,5.28){\color{ForestGreen}$O(T)$};
    \node () at (8.4,5.28){\color{blue}$O(DS\sqrt{AT\ln(T)})$};
    \node () at (1.9,3.96){\color{red}$O\left(\frac{D^2 S^2 A}{\gamma} \ln(T)\right)$};
    
    \node (1) at (2.64,2.604){};
    \node (2) at (2.64,-0.1){};
    \node (3) at (6.49,4.08){};
    \node (4) at (6.49,-0.1){};
    
    \node () at (-0.2,-0.4){$0$};
    \node () at (2.64,-0.3){${T}_M^\dagger$};
    \node () at (6.49,-0.3){${T}_M^*$};
    
    \node (5) at (8.5,2.4){Regret upper-bound};
    \node (6) at (8.5,4.32){};
    
    \draw[-,dashed, >=latex](1) to node[above]{} (2);
    \draw[-,dashed, >=latex](3) to node[above]{} (4);
    
    \draw[-,->] (5) to (6);
    
    \node[anchor=west] at (-1.5, 6) {\parbox{0.1\textwidth}{\subcaption{\label{subfig:hi}}}};
\end{tikzpicture}
\end{subfigure}
\begin{subfigure}[h]{0.42\textwidth}
\centering
    \vspace{-.5cm}
  \begin{tikzpicture}
          \begin{scope}[local bounding box=scope1,scale=0.63]
	\tikzset{VertexStyle/.style = {draw, 
									shape          = circle,
	                                text           = black,
	                                inner sep      = 2pt,
	                                outer sep      = 0pt,
	                                minimum size   = 24 pt}}
	\tikzset{VertexStyle2/.style = {shape          = circle,
	                                text           = black,
	                                inner sep      = 2pt,
	                                outer sep      = 0pt,
	                                minimum size   = 14 pt}}
	\tikzset{Action/.style = {draw, 
                					shape          = circle,
	                                text           = black,
	                                fill           = black,
	                                inner sep      = 2pt,
	                                outer sep      = 0pt}}
	                                 

	\node[VertexStyle,fill=gray](s0) at (0,0) {$ x $};
	\node[Action](a0s0) at (.5,1.25){};
	\node[Action](a1s0) at (.5,-1.25){};
	\node[VertexStyle](s2) at (4,0){$y$};
	\node[Action](a0s2) at (2.6,0){};
	\node[Action](a1s2) at (5.4,0.){};
    
	\draw[->, >=latex, double, color=red](s0) to node[midway, right]{{ $b$}} (a0s0);
	\draw[->, >=latex, double, color=red](s0) to node[midway, right]{{ $d$}} (a1s0);
	\draw[->, >=latex](a0s0) to [out=45,in=120,looseness=0.8] node[above]{$\varepsilon$} (s2);
	\draw[->, >=latex](a0s0) to [out=80,in=140,looseness=2.1] node[above]{$1-\varepsilon$} (s0);
	\draw[->, >=latex](a1s0) to [out=-150,in=-140,looseness=2.1] node[above]{} (s0);
	\draw[->, >=latex, double, color=red](s2) to node[midway, above]{{ $d$}} (a0s2);   
	\draw[->, >=latex](a0s2) to [out=180, in=360, looseness=0.] node[above]{} (s0);
	\draw[->, >=latex, double, color=red](s2) to node[midway, below]{{ $b$}} (a1s2);
	\draw[->, >=latex](a1s2) to [out=-10, in=-85, looseness=2.1] (s2);

    \node [right, xshift=0.2em, yshift=0.2em] at (a0s0) { $r=0$};
    \node [right, xshift=-6em, yshift=-0.2em] at (a1s0) { $r=\sfrac{1}{2}$};
    \node [below, yshift=-0.2em] at (a0s2) { $r = 0$};
    \node [above, yshift=0.2em] at (a1s2) { $r =1$};  
    
        \node[anchor=west] at ($(s2)+(0.8,1.4cm)$) {\parbox{.5cm}{\subcaption{\label{fig:delta_pos}}}};
    \end{scope}
    \begin{scope}[shift={($(scope1.south)+(-1.2cm,-0.9cm)$)}, scale=0.7]
	\tikzset{VertexStyle/.style = {draw, 
									shape          = circle,
	                                text           = black,
	                                inner sep      = 2pt,
	                                outer sep      = 0pt,
	                                minimum size   = 24 pt}}
	\tikzset{VertexStyle2/.style = {shape          = circle,
	                                text           = black,
	                                inner sep      = 2pt,
	                                outer sep      = 0pt,
	                                minimum size   = 14 pt}}
	\tikzset{Action/.style = {draw, 
                					shape          = circle,
	                                text           = black,
	                                fill           = black,
	                                inner sep      = 2pt,
	                                outer sep      = 0pt}}
	                                 
	\node[VertexStyle,fill=gray](s0) at (0,0) {$ x $};
	\node[Action](a0s0) at (.4,1.2){};
	\node[Action](a1s0) at (.4,-1.2){};
	\node[VertexStyle](s2) at (4,0){$y$};
	\node[Action](a0s2) at (2.9,-0.5){};
	\node[Action](a1s2) at (2.9,0.5){};
    
	\draw[->, >=latex, double, color=red](s0) to node[midway, right]{{ $b$}} (a0s0);
	\draw[->, >=latex, double, color=red](s0) to node[midway, right]{{ $d$}} (a1s0);
	\draw[->, >=latex](a0s0) to [out=150,in=140,looseness=2.1] node[above]{} (s0);
	\draw[->, >=latex](a1s0) to [out=-150,in=-140,looseness=2.1] node[above]{} (s0);
	\draw[->, >=latex, double, color=red](s2) to node[midway, below]{{ $b$}} (a0s2);   
	\draw[->, >=latex](a0s2) to [out=200, in=-20, looseness=0.8] node[above]{} (s0);
	\draw[->, >=latex, double, color=red](s2) to node[midway, above]{{ $d$}} (a1s2);
	\draw[->, >=latex](a1s2) to [out=160, in=20, looseness=0.8] (s0);

    \node [right, xshift=-5.5em, yshift=-0.4em] at (a0s0) { $r=0$};
    \node [right, xshift=-6em, yshift=-0.2em] at (a1s0) { $r=\sfrac{1}{2}$};
    \node [above, xshift=-0.5em,yshift=0.2em] at (a0s2) { $r = 1$};
    \node [above, xshift=-0.5em,yshift=0.2em] at (a1s2) { $r =0$};  
    
        \node[anchor=west] at ($(s2)+(-0,-1.4cm)$) {\parbox{0.2\textwidth}{\subcaption{\label{fig:delta_zero}}}};
     \end{scope}
    \end{tikzpicture}
\end{subfigure}
\vspace{-.2cm}
\caption{\ref{subfig:hi} Expected regret of $\ucrl$ (with known horizon $T$ given as input) as a function of $T$. \ref{fig:delta_pos}~\ref{fig:delta_zero} Toy example illustrating the difficulty of learning non-communicating MDPs. We represent a family of possible MDPs $\mathcal{M} = (M_\varepsilon)_{\varepsilon \in [0,1]}$ where the probability $\varepsilon$ to go from $x$ to $y$ lies in $[0,1]$. }
\label{fig:ucrl.regimes_example.misspec}
\vspace{-.3cm}
\end{figure}
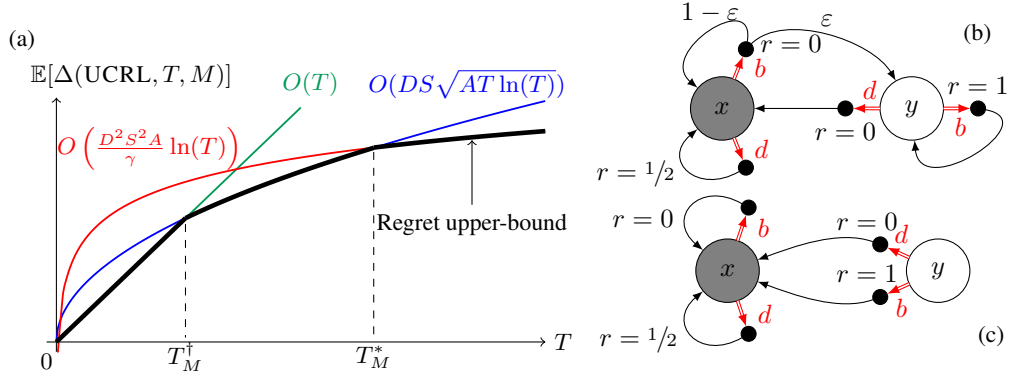

In this section we further investigate the empirical difference between \scal and \tucrl and prove an impossibility result \emph{characterising} the \emph{exploration-exploitation dilemma} when the diameter is allowed to be \emph{infinite} and \emph{no prior knowledge} on the optimal bias span is available.

We first recall that the expected regret $\mathbbm{E}[\Delta(\ucrl,M,T)]$ of $\ucrl$ (with input parameter $\delta =1/3T$) after $T\geq 1$ time steps and for any finite MDP $M$ can be bounded in several ways: 
\begin{align}\label{eqn:regret_bound_ucrl}
 \hspace{-0.1in}\mathbbm{E}[\Delta(\ucrl,M,T)] \leq \begin{cases}
                                    \rmaxbound T~~\text{(by definition)}\\
                                    C_1 \cdot\rmaxbound D \sqrt{\Gamma SAT\ln(3T^2)} +\frac{1}{3} ~\text{\citep[Theorem 2]{Jaksch10}}\\
                                    C_2 \cdot \rmaxbound \frac{D^2 \Gamma S A}{\gamma} \ln(T) + C_3(M)~\text{\citep[Theorem 4]{Jaksch10}}
                                   \end{cases}
\end{align}
where $\gamma=g^*_M - \max_{s,\pi}\{g^\pi_M(s):~ g^\pi_M(s) < g^*_M \} $ is the gap in gain, $C_1:= 34$ and $C_2:=34^2$ are numerical constants independent of $M$, and $C_3(M):= O(\max_{\pi: \pi(s)=a} T_{\pi})$
with $T_\pi$ a measure of the ``mixing time'' of policy $\pi$. The three different bounds lead to three different \emph{growth rates} for the function $T \longmapsto \mathbbm{E}[\Delta(\ucrl,M,T)]$ (see Fig.~\ref{subfig:hi}):
 1) for ${T}_M^\dagger \geq T \geq 0$, the expected regret is linear in $T$,
 2) for ${T}_M^* \geq T \geq {T}_M^\dagger$ the expected regret grows as $\sqrt{T}$,
 3) finally for $T\geq {T}_M^*$, the increase in regret is only logarithmic in $T$.
These different \emph{``regimes''} can be observed empirically (see~\citep[Fig. 5, 12]{fruit2018constrained}).
Using ~\eqref{eqn:regret_bound_ucrl}, it is easy to show that the time it takes for \ucrl to achieve sub-linear regret is at most ${T}_M^\dagger = \wt{O}(D^2 \Gamma S A)$. We say that an algorithm is \textit{efficient} when it achieves sublinear regret after a number of steps that is polynomial in the parameters of the MDP (i.e., \ucrl is then \emph{efficient}).
We now show with an example that \emph{without prior knowledge}, any \emph{efficient} learning algorithm must satisfy ${T}_M^* =+\infty$ when $M$ has \emph{infinite diameter} (i.e., it cannot achieve logarithmic regret).


\begin{example}\label{ex:weakly_communicating}
        We consider a family of weakly-communicating MDPs $\mathcal{M} = (M_\varepsilon)_{\varepsilon \in [0,1]}$ represented on Fig.~\ref{fig:ucrl.regimes_example.misspec}\emph{(right)}.
 Every MDP instance in $\mathcal{M}$ is characterised by a specific value of $\varepsilon \in [0,1]$ which corresponds to the probability to go from $x$ to $y$. For $\varepsilon>0$ (Fig.~\ref{fig:delta_pos}), the optimal policy of $M_\varepsilon$ is such that $\pi^*(x) = b$ and the optimal gain is $g^*_\varepsilon = 1$ while for $\varepsilon=0$ (Fig.~\ref{fig:delta_zero}) the optimal policy is such that $\pi^*(x) = d$ and the optimal gain is $g^*_0 = 1/2$. We assume that the learning agent knows that the true MDP $M^*$ belongs to $\mathcal{M}$ but does not know the value $\varepsilon^*$ associated to $M^*=M_{\varepsilon^*}$. We assume that all rewards are deterministic and that the agent starts in state $x$ (coloured in grey).
\end{example}

\begin{lemma}\label{lem:weakly_communicating}
 Let $C_1, C_2,\alpha, \beta >0$ be positive real numbers and $f$ a function defined for all $\varepsilon \in ]0,1]$ by $f(\varepsilon) =  C_1 (1/\varepsilon)^{\alpha}$.
  There exists no learning algorithm $\mathfrak{A}_T$ (with known horizon $T$) satisfying both
  \vspace{-0.07in}
 \begin{enumerate}[leftmargin=*,nolistsep, topsep=0pt]
  \item for all $\varepsilon \in ]0,1]$, there exists ${T}_\varepsilon^\dagger \leq f(\varepsilon)$ such that $\mathbbm{E}[\Delta(\mathfrak{A}_T,M_\varepsilon,x,T)] < 1/6\cdot T$ for all $T \geq {T}_\varepsilon^\dagger$,
  \item and there exists ${T}_0^* < +\infty$ such that $\mathbbm{E}[\Delta(\mathfrak{A}_T,M_0,x,T)] \leq C_2 (\ln(T))^\beta$ for all $T \geq {T}_0^*$.
  \end{enumerate}
\end{lemma}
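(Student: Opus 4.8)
\emph{Proof strategy.} The plan is to argue by contradiction through a change-of-measure (coupling) argument between the execution of $\mathfrak{A}_T$ on $M_0$ and on $M_\varepsilon$ for a well-chosen small $\varepsilon$. Assume some $\mathfrak{A}_T$ satisfies both 1.\ and 2. The guiding intuition is that in $M_0$ the state $y$ is unreachable from $x$, so logarithmic regret (condition 2) forces $\mathfrak{A}_T$ to try the ``exploratory'' action $b$ in $x$ only polylogarithmically often; but then, for $\varepsilon$ tiny, within $f(\varepsilon)=C_1(1/\varepsilon)^\alpha$ steps the algorithm cannot tell $M_\varepsilon$ apart from $M_0$, so it keeps playing the $M_0$-optimal action $d$ and suffers per-step regret $g^*_\varepsilon-\tfrac12=\tfrac12$, which will contradict efficiency on $M_\varepsilon$. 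First I would quantify the exploration budget on $M_0$: starting from $x$ the agent never leaves $x$ and all transitions/rewards are deterministic, so the only regret comes from playing $b$ (reward $0$) rather than $d$ (reward $\tfrac12=g^*_0$); hence, writing $N_b(T)$ for the number of $b$-plays in $x$ over the first $T$ steps, $\mathbb{E}_{M_0}[\Delta(\mathfrak{A}_T,M_0,x,T)]=\tfrac12\,\mathbb{E}_{M_0}[N_b(T)]$, and condition 2 gives $\mathbb{E}_{M_0}[N_b(T)]\le 2C_2(\ln T)^\beta$ for all $T\ge T_0^*$.

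Next I would couple the two runs by drawing independent uniform random variables $U_1,U_2,\dots$ on $[0,1]$ and declaring that the $i$-th play of $b$ in $x$ sends the agent to $y$ in $M_\varepsilon$ exactly when $U_i<\varepsilon$ (while in $M_0$ it always returns to $x$). Since on $M_0$ the count $N_b(T)$ depends only on the internal randomness of $\mathfrak{A}_T$ and is independent of $(U_i)$, the two runs produce identical length-$T$ trajectories on the event $\mathcal{E}=\{U_1\ge\varepsilon,\dots,U_{N_b(T)}\ge\varepsilon\}$, whose probability is $\mathbb{E}_{M_0}[(1-\varepsilon)^{N_b(T)}]\ge 1-\varepsilon\,\mathbb{E}_{M_0}[N_b(T)]\ge 1-2C_2\varepsilon(\ln T)^\beta$ by Bernoulli's inequality. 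On $\mathcal{E}$ the agent stays in $x$ and collects reward $\tfrac12(T-N_b(T))$, whereas $g^*_\varepsilon=1$, so its regret is at least $T/2$; since regret is nonnegative, $\mathbb{E}_{M_\varepsilon}[\Delta(\mathfrak{A}_T,M_\varepsilon,x,T)]\ge \tfrac T2\big(1-2C_2\varepsilon(\ln T)^\beta\big)$.

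Finally I would pick, for each small $\varepsilon$, the horizon $T=\lceil f(\varepsilon)\rceil$; for $\varepsilon$ small enough this exceeds $T_0^*$ (so the previous steps apply) and satisfies $\ln T\le\ln(2C_1)+\alpha\ln(1/\varepsilon)$. Because $\varepsilon(\ln(1/\varepsilon))^\beta\to0$ as $\varepsilon\to0^+$, for $\varepsilon$ small enough $2C_2\varepsilon(\ln T)^\beta\le\tfrac23$, hence $\mathbb{E}_{M_\varepsilon}[\Delta(\mathfrak{A}_T,M_\varepsilon,x,T)]\ge T/6$; but $T\ge f(\varepsilon)\ge T_\varepsilon^\dagger$, so condition 1 forces $\mathbb{E}_{M_\varepsilon}[\Delta(\mathfrak{A}_T,M_\varepsilon,x,T)]<T/6$, a contradiction. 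The main obstacle is the coupling step: the exploration count $N_b(T)$ is random and, on $M_\varepsilon$, entangled with whether a jump to $y$ has already occurred, which is why the coupling must be anchored on the $M_0$-run, where $N_b(T)$ is independent of the jump variables $U_i$; a second delicate point is the quantifier bookkeeping, which only closes because the efficiency budget $f(\varepsilon)$ is \emph{polynomial} in $1/\varepsilon$ whereas logarithmic regret on $M_0$ caps $b$-exploration at only $\mathrm{polylog}(T)$ plays, so that $\varepsilon\,(\ln f(\varepsilon))^\beta$ can be made arbitrarily small.
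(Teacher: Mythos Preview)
Your argument is correct and shares the same core idea as the paper's proof (both exploit that, conditioned on never reaching $y$, the runs on $M_0$ and $M_\varepsilon$ are indistinguishable, so the regret on $M_\varepsilon$ is at least $T/2$ times the probability of that event), but the technical implementation is genuinely different. The paper first reduces to deterministic algorithms so that $N_b(T)$ becomes a deterministic number $N_T^0(x,b)$; it then proves $N_T^0(x,b)\to\infty$ using condition~1, sets $\varepsilon_n=1/N_{T_n}^0(x,b)$, and invokes the elementary bound $(1-\varepsilon)^{1/\varepsilon}\ge 1/3$ to get $\mathbb{P}(F)\ge 1/3$; condition~2 is only used at the end to force $T_n\ge\exp\big((1/(2C_2\varepsilon_n))^{1/\beta}\big)\ge f(\varepsilon_n)$. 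You instead keep the algorithm randomized, use condition~2 \emph{first} to cap $\mathbb{E}_{M_0}[N_b(T)]\le 2C_2(\ln T)^\beta$, and then apply Bernoulli's inequality $(1-\varepsilon)^n\ge 1-n\varepsilon$ in expectation to lower-bound $\mathbb{P}(\mathcal E)$; your quantifiers run the other way, fixing $T=\lceil f(\varepsilon)\rceil$ and letting $\varepsilon\to 0$. Your route is arguably shorter: it avoids the reduction to deterministic strategies, the intermediate $N_T^0\to\infty$ step, and the auxiliary inequality $(1-x)^{1/x}\ge 1/3$, at the cost of a slightly more delicate coupling argument (which you correctly anchor on the $M_0$-run so that $N_b(T)$ is independent of the jump variables).
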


Note that point \emph{1} in Lem.~\ref{lem:weakly_communicating} formalizes the concept of \emph{``efficient learnability''} introduced by \citet[Section 11.6]{SuttonB17} \ie ``learnable within a polynomial rather than exponential number of time steps''.
All the MDPs in $\mathcal{M}$ share the same number of states $S=2 \geq \Gamma$, number of actions $A=2$, and gap in average reward $\gamma=1/2$. As a result, any function of $S$, $\Gamma$, $A$ and $\gamma$ will be considered as constant. For $\varepsilon >0$, the diameter coincides with the optimal bias span of the MDP and $D=\SP{h^*}=1/\varepsilon <+\infty$, while for $\varepsilon = 0$, $D = +\infty$ but $\SP{h^*} = 1/2 $. 
As shown in Eq.~\ref{eqn:regret_bound_ucrl} and Thm.~\ref{thm:tucrl.regret}, \ucrl and \tucrl satisfy property \emph{1.} of Lem.~\ref{lem:weakly_communicating} with $\alpha =2$ and $C_1 = O(S^2 A)$ but do not satisfy \emph{2.} On the other hand, \scal satisfies \emph{2.} with $\beta =1$ and $C_2 = O(H^2 SA/\gamma)$ (although this result is not available in the literature, it is straightforward to adapt the proof of \ucrl \citep[Theorem 4]{Jaksch10} to \scal)
but since \citep[Theorem 12]{fruit2018constrained} holds only when $H\geq \SP{h^*}$, \scal only satisfies \emph{1.} for $\varepsilon \geq 1/H$ and $\varepsilon=0$ (not for $\varepsilon \in ]0, 1/H[$).
Lem.~\ref{lem:weakly_communicating} proves that no algorithm can actually achieve both \emph{1.} and \emph{2.} As a result, since \tucrl satisfies \emph{1.}, it cannot satisfy \emph{2.} 
This matches the empirical results presented in Sec.~\ref{sec:experiments} where we observed that when the diameter is infinite, the growth rates of the regret of \scal and \tucrl were respectively logarithmic and of order $\Theta(\sqrt{T})$. 
An algorithm that does not satisfy \emph{1.} could potentially satisfy \emph{2.} but, by definition of \emph{1.}, it would suffer linear regret for a number of steps that is more than \emph{polynomial} in the parameters of the MDP (more precisely, $e^{D^{1/\beta}}$). This is not a very desirable property and we claim that an \emph{efficient} learning algorithm should always prefer \emph{finite time guarantees} (\emph{1.}) over \emph{asymptotic guarantees} (\emph{2.}) when they cannot be accommodated.


\vspace{-.2cm}
\section{Conclusion}\label{sec:conclusion}
\vspace{-.2cm}
We introduced \tucrl, an algorithm that efficiently balances exploration and exploitation in weakly-communicating and multi-chain MDPs, when the starting state $s_1$ belongs to a communicating set (Asm.~\ref{asm:initial.state}).
We showed that \tucrl achieves a square-root regret bound
and that, in the general case, it is not possible to design algorithm with logarithmic regret and polynomial dependence on the MDP parameters.
Several questions remain open: \textbf{1)} relaxing Asm.~\ref{asm:initial.state} by considering a transient initial state (i.e., $s_1\in\calStrans$), \textbf{2)} refining the lower bound of~\citet{Jaksch10} to finally understand whether it is possible to scale with $\SP{h^*}$ (at least in communicating MDPs) instead of $D$ without any prior knowledge (the flaw in \regald may suggest it is indeed impossible). 

%

\subsubsection*{Acknowledgments}
This research was supported in part by French Ministry of Higher Education and Research, Nord-Pas-de-Calais Regional Council and French National Research Agency (ANR) under project ExTra-Learn (n.ANR-14-CE24-0010-01).

\bibliography{tucrl}
\bibliographystyle{unsrtnat}

\clearpage
\appendix

\section{Mistake in the regret bound of \regald}\label{app:mistake_regal}

\subsection{Regularized optimistic RL (\regal)}\label{sec:regal}
In weakly communicating MDPs, to avoid the over-optimism of \ucrl, \citet{Bartlett2009regal} proposed to penalise the optimism on $g^*$ by the optimal bias span $\SP{h^*}$.
Formally, at each episode $k$, their algorithm --\regal-- solves the following optimization problem:
\begin{equation}\label{eq:regal}
        \wt{M}_k = \argmax_{M \in \mathcal{M}_k} \{g^*_{M} - C_k \cdot \SP{h^*_M}\}
\end{equation}
where $C_k \geq 0$ is a regularisation coefficient.
Note that such optimization requires to first compute the optimal policy for a given MDP $M \in \mathcal{M}_k$ and then evaluate the regularized gain.
Implicitly, this defines the optimistic policy $\wt{\pi}_k = \argmax_{\pi \in \Pi^{\textsc{SD}}} \{g^\pi_{\wt{M}_k}\}$.
The term $\SP{h^*}$ can be interpreted as a measure of the \emph{complexity} of the environment: the bigger $\SP{h^*}$, the more difficult it is to achieve the stationary reward $g^*$ by following the optimal policy.
In \emph{supervised learning}, regularisation is often used to penalise the objective function by a measure of the complexity of the model so as to avoid \emph{overfitting}.
It is thus reasonable to expect that \emph{over-optimism} in online RL can also be avoided through regularisation.

The regret bound of \regal holds only when $C_k$ is set to $\Theta (1/\sum_{s,a} \nu_k(s,a))$. 
This means that \regal requires the knowledge of (future) visit counts $\nu_k(s,a)$ before episode $k$ begins in order to tune the regularisation coefficient $C_k$.
Unfortunately, an episode stops when the number of visits in a state-action pair $(s,a) \in \calS \times \mathcal{A}$ has doubled and it is not possible to predict the future sequence of states of a given policy for two reasons: 1) the true MDP $M^*$ is unknown and 2) what is observed is a random \emph{sampled} trajectory (as opposed to \emph{expected}). As a result, \regal is not \emph{implementable}.
\citet{Bartlett2009regal} proposed an alternative algorithm --\regald-- that leverages on the \emph{doubling trick} to guess the length of episode $k$ (\ie $\sum_{s,a} \nu_k(s,a)$) and proved a slightly worse regret bound than for \regal.
\regald divides an episode $k$ into sub-iterations where it applies the doubling trick techniques.
At each sub-iteration $j$, \regald guesses that the length of the episode will be at most $2^j$ and it solves problem~\eqref{eq:regal} with
$C_{k,j} \propto 1/\sqrt{2^j}$.
Then, it executes the optimistic policy $\wt{\pi}_{k,j}$ on the true MDP until the \ucrl stopping condition is reached or $2^j$ steps are performed.
In the first case the episode $k$ ends since the guess was correct, while, in the second case, a new sub-iteration $j+1$ is started.
This implies that for any $k,j$: 
\begin{equation}\label{eq:regal.stopping.guarantees}
        \sum_{s,a} \nu_{k,j}(s,a) \leq 2^j,
\end{equation}
where $\nu_{k,j}(s,a)$ denotes the number of visits to $(s,a)$ during episode $k$ and sub-iteration $j$.

\subsection{The doubling trick issue}

The mistake in \regald is located in the proof of the regret ~\citep[Theorem 3]{Bartlett2009regal} (see Sec. 6.3).
Let $\wt{h}_{k,j}$ denote the optimistic bias span at episode $k$ and sub-iteration $j$ induced by the doubling trick.
At a high level, the mistake comes from the attempt to upper-bound the term $x \cdot \sum_{s,a} \nu_{k,j}(s,a) $ by $x \cdot 2^j $ (for a given $x$) using the fact the $\sum_{s,a} \nu_{k,j}(s,a) \leq 2^j$.
Unfortunately, this is possible only under the assumption that $x \geq 0$ that does not hold in the case of \regald.

Formally, while bounding $\sum_{k \in G} \Delta_k$, the authors have to deal with the term (derived by the combination of~\citep[Eq. 15]{Bartlett2009regal} and~\citep[Lem. 11]{Bartlett2009regal} with~\citep[Eq. 14]{Bartlett2009regal}):
\[
        U := \sum_{k\in G}\sum_{j} \SP{\wt{h}_{k,j}} \left( c \sqrt{\sum_{s,a} \nu_{k,j}(s,a)} - C_{k,j} \sum_{s,a}\nu_{k,j}(s,a) \right)
\]
where $c :=  2 S \sqrt{12 \ln(2AT/\delta)} + \sqrt{2  \ln(1/\delta)} \geq 0$ and recall that $\sum_{s,a}\nu_{k,j}(s,a)$ denotes the \emph{actual} length of the episode $k$ at sub-iteration $j$.
In the \regald proof the authors directly replaced the actual length of the episode with the guessed length $2^j := \ell_{k,j}$ showing that the first term can be upper-bounded by $c \cdot \sqrt{\sum_{s,a} \nu_{k,j}(s,a)} \leq c \cdot \sqrt{2^j}$ (due to Eq.~\ref{eq:regal.stopping.guarantees}). 
Concerning the second term, they write $- C_{k,j} \sum_{s,a}\nu_{k,j}(s,a) {\color{red}\bm{\leq}} -C_{k,j} 2^j$. Since $-C_{k,j} := -c / \sqrt{2^j} \leq 0$ is negative, this last inequality is not true and the reverse inequality holds instead (using Eq.~\ref{eq:regal.stopping.guarantees}): $- C_{k,j} \sum_{s,a}\nu_{k,j}(s,a) {\color{blue}\bm{\geq}} -C_{k,j} 2^j$.
Therefore, it is not possible to guarantee that $U \leq 0$ as claimed by~\citet{Bartlett2009regal} (the authors probably didn’t pay attention to the sign). 
To do this, we would need to \emph{lower-bound} $\sum_{s,a}\nu_{k,j}(s,a)$. Unfortunately, the only lower bound with probability $1$ available for that term is $\min_{s,a} \{N_k(s,a)\} +2$. 
This is not big enough to cancel the term $ c \sqrt{\sum_{s,a}\nu_{k,j}(s,a)}$ 
and $C_{k,j}$ needs to be increased. 
As a result, the term $\SP{h^\star} \sum_{k \in G} \sum_{j} C_{k,j} \sqrt{\sum_{s,a}\nu_{k,j}(s,a)}$ becomes too big and all the proof collapses.

Notice that a similar mistake is contained in the work by~\citet{pmlr-v28-maillard13} where they use a regularized approach to learn a state representation in online settings.
Similarly to~\citep{Bartlett2009regal},  the authors have to bound the term $\sum_{s,a} \nu_{k,j}(s,a) (g^* - \wt{g}_{k,j})$.
By exploiting the fact that $g^* - \wt{g}_{k,j} \leq \alpha$ (we omit the full expression of $\alpha$ for sake of clarity)~\citep[Eq. 17 Sec. 5.2]{pmlr-v28-maillard13} the authors derived the bound $\sum_{s,a} \nu_{k,j}(s,a) (g^* - \wt{g}_{k,j}) \leq 2^j \cdot \alpha$~\citep[Eq. 18]{pmlr-v28-maillard13}.
The difference $g^* - \wt{g}_{k,j}$ might be negative in which case the result does not hold.
Actually for the case in which there is no regularization $C_{k,j}=0$, $g^* \leq \wt{g}_{k,j}$ which is what is used in the regret proof of \ucrl. Therefore, it is very likely that the sign of $g^* - \wt{g}_{k,j}$ can sometimes be negative.

In conclusion, it seems unavoidable to use a \emph{lower-bound} (and not an upper-bound) on $\sum_{s,a}\nu_{k,j}(s,a)$ to derive a correct regret bound for \regald. As already mentioned, given the current stopping condition of an episode, the only reasonable lower bound is $\min_{s,a} \{N_k(s,a)\} +2$ and it does not seem sufficient to derive a sensible regret bound. Another research direction could be to change the stopping condition. However, one of the terms in the regret bound of \regal (and of \regald) scales as $m \sqrt{T} \log_2(T)$ where $m$ is the number of episodes. The term $m$ is highly sensitive to the stopping condition and there is very little margin if we want to avoid $m \sqrt{T} \log_2(T)$ to become the leading term in the regret bound. All the efforts we put in this direction were unsuccessful. We conjecture that regularising by the optimal bias span might not allow to learn MDPs with infinite diameter.

\section{Unbounded optimal bias span with continuous Bayesian priors/posteriors}\label{app:unbounded.span.bayesian}
Recently, \citet{NIPS2017_6732} and \citet{theocharous2017largeps} proposed posterior sampling algorithms and proved bounds on the expected Bayesian regret. The regret bounds that they derive scale linearly with $H$, where $H$ is the highest optimal bias span of all the MDPs that can be drawn from the prior/posterior distribution.
Formally, let $f(\theta)$ be the density function of the prior/posterior distribution over the family of MDPs $(M_\theta)$ parametrised by $\theta$. Then:
\[
        H := \sup_{\theta: f(\theta) > 0} \{\SP{h^*_{\theta}}\}.
\]
In this section we present an example where $H$ is \emph{infinite} and argue that it is probably the case for most priors/posteriors used in practice.

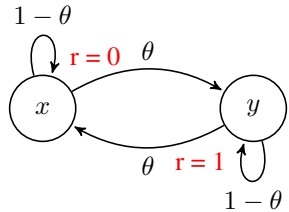
\begin{figure}[b]
  \centering
  \begin{tikzpicture}[->, >=stealth', shorten >=1pt, auto, node distance=2.8cm, semithick]
        \node[state] (A) {$x$};
        \node[state] (B) [right of=A] {$y$};
        \node (C) at (0.7,0.7) {{\color{red} r = 0}};
        \node (C) at (2.1,-0.7) {{\color{red} r = 1}};
        
        \path (A) edge[bend left] node {$\theta$} (B);
        \path (A) edge[loop above] node {$1-\theta$} (A);

        \path (B) edge[bend left] node {$\theta$} (A);
        \path (B) edge[loop below] node {$1-\theta$} (B);
  \end{tikzpicture}
\caption{\label{F:toy_example}
        Toy example of a parametrised MDP $M_\theta$ with a single policy (one action per state).
}
\end{figure}

\begin{example}[Unbounded optimal bias span with continuous prior/posterior]\label{ex:unbounded_span}
 
 Consider the example of Fig.~\ref{F:toy_example}. There is only one action in every state and so one optimal policy. The (unique) action that can be played in state  $s_0$ loops on $s_0$ with probability $1-\theta$ and goes to $s_1$ with probability $\theta$. The reward associated to this action is $0$. Symmetrically, the (unique) action that can be played in state  $s_1$ loops on $s_1$ with probability $1-\theta$ and goes to $s_1$ with probability $\theta$. The reward associated to this action is $1$.
 This MDP is characterised by the parameter $\theta$ and we denote it by $M_\theta$. For any $\theta \in [0,1]$, we denote by $g^*_\theta$ (resp. $h^*_\theta$) the optimal gain (resp. bias) of $M_\theta$. Observe that when $\theta >0$, $M_\theta$ is ergodic and therefore the optimal gain $g^*_\theta = 1/2$ is state-independent whereas when $\theta = 0$, $M_\theta$ is multichain and the optimal gain does depend on the initial state: $g^*_0(x) =0 < 1 = g^*_0(y) $.
\end{example}
 
 Let's assume that the prior/posterior distribution we use on $M_\theta$ is characterised by a probability density function $f$ satisfying $f(\theta) >0$ for all $\theta > 0$ and $f(0) = 0$. Note that this assumption does not constrain the ``smoothness'' of $f$ e.g., $f$ can have continuous derivatives of all orders. Under this assumption, $f$ is non-zero only for ergodic MDPs. It goes without saying that for all $\theta \in [0,1]$ ($0$ included), $\SP{h^*_\theta} < +\infty$ by definition (the optimal bias span is always finite). More precisely we have:
 \begin{align*}
   g^*_\theta = \begin{cases}
           [1/2,1/2]^\transp & \text{if} ~~ \theta >0\\
           [0,1]^\transp & \text{if} ~~ \theta =0
                     \end{cases}
                       \quad~~~ \text{and} ~~~\quad
  \SP{h^*_\theta} = \begin{cases}
          \frac{1}{2\theta} & \text{if} ~~ \theta >0\\
          0 & \text{if} ~~ \theta =0
                     \end{cases}
 \end{align*}
  As a result, although $\SP{h^*_\theta}$ is always \emph{finite}, \ie  $\forall \theta \in [0,1],~\SP{h^*_\theta} < +\infty$, it is \emph{unbounded} on the set of plausible MDPs $\theta \in ]0,1]$ satisfying $f(\theta) >0$, \ie 
  \[
          H:=\sup_{\theta \in ]0,1]}\lbrace \SP{h^*_\theta}\rbrace = \lim_{\theta \to 0^+} \frac{1}{2\theta}= +\infty
  \]
Therefore, the regret bound $\wt{O}(HS\sqrt{AT})$ proved by \citet{NIPS2017_6732, theocharous2017largeps} does not hold with prior/posterior $f$ since $H = +\infty$.
One might argue that the proofs in \citep{NIPS2017_6732, theocharous2017largeps} could be fixed by showing that $H$ is bounded with probability 1.
Unfortunately, for any $C \in [0,+\infty[$, the probability $\mathbbm{P}(\SP{h^*_\theta} \geq C)=\int_{\theta =0}^{\frac{1}{2C}} f(\theta) d\theta >0$  of sampling an MDP with $\SP{h^*_\theta} \geq C$ is strictly positive.
We therefore conjecture that for this specific choice of priors/posteriors, the regret proof in \citep{NIPS2017_6732, theocharous2017largeps} cannot be fixed without major changes and new arguments.
More generally, let's imagine that we have a prior/posterior distribution $f$ satisfying:
\begin{itemize}[topsep=0mm]
\item there exists $\theta_0$ such that $M_{\theta_0}$ has non-constant gain \ie $\SP{g_{\theta_0}^*} >0$,
\item there exists an open neighbourhood of $\theta_0$ denoted $\Theta_0$ such that $\forall \theta \in \Theta_0$, $M_\theta$ has constant gain (\eg $M_\theta$ is weakly-communicating) and $f(\theta) > 0$.
\end{itemize}
  In this case we will face the same problem as in Ex.~\ref{ex:unbounded_span} i.e., 
\begin{align*}
 \sup_{\theta:~f(\theta) >0 }\lbrace \SP{h^*_\theta}\rbrace = +\infty ~~ \text{and}~~\forall C\in [0,+\infty[,~\mathbbm{P}(\SP{h^*_\theta} \geq C) >0
\end{align*}
When the set of plausible MDPs is \emph{finite}, this problem cannot occur. But most priors/posteriors used in practice are \emph{continuous} distributions. For instance, a Dirichlet distribution will most likely satisfy the above assumptions.

%


\section{Algorithmic Details}\label{app:algorithm}

For technical reasons (see App.~\ref{app:shortestpath}), we consider a slight \emph{relaxation of the optimization problem}~\eqref{eq:tucrl.optimization} in which $\wb{\mathcal{M}}_k$ is replaced by a relaxed extended MDP $\wb{\mathcal{M}}_k^+ \supseteq \wb{\mathcal{M}}_k$ defined by using $\ell_1$-norm concentration inequalities for $p(\cdot|s,a)$.\footnote{We recently noticed that is possible to obtain a tighter relaxation that preserves the Bernstein nature of the confidence intervals (instead of resorting to $\ell_1$-norm). This version may be more efficient in practical applications. More details on this are reported in Sec.~\ref{sec:new.relaxation}. }
Let $B_{p,k}^+(s,a) = \{\wt{p}(\cdot|s,a)\in\mathcal{C} :~\|\wt{p}(\cdot|s,a) - \wh{p}(\cdot|s,a)\|_1 \leq \sum_{s'} \beta_{p,k}^{sas'}\}$ (resp. $\wb{B}_{p,k}^+$) be the relaxed confidence interval, then $\mathcal{M}_k^+$ (resp. $\wb{\mathcal{M}}_k^+$) is the corresponding (relaxed) set of plausible MDPs.
This relaxed optimistic optimization problem is solved by running extended value iteration (EVI) on $\wb{\mathcal{M}}_k^+$ (up to accuracy $\epsilon_k = \rmaxbound/\sqrt{t_k}$).
Technically, we restrict EVI to work on the set of states $\calS_k^{\textsc{EVI}}$ that are optimistically reachable from the communicating set $\calScom_k$. 
In practice, $\calS_k^{\textsc{EVI}} = \calScom_k$ when $\mathcal{K}_k = \emptyset$ since all the transitions to $\calStrans_k$ are forbidden, otherwise $\calS_k^{\textsc{EVI}} = \calS$.
Alg.~\ref{alg:tevi} shows this variation of EVI that we name \emph{Truncated EVI}.
Then, at each episode $k$, \tucrl runs \tevi with the following parameters: $(\wt{g}_k, \wt{h}_k, \wt{\pi}_k) = \tevi(\boldsymbol{0}, \wb{\mathcal{M}}_k^+, \calS_k^{\textsc{EVI}}, \epsilon_k)$.
Starting from an initial vector $v_0 = 0$, TEVI iteratively applies (on a subset $\calS_k^{\textsc{EVI}}$ of states) the optimal Bellman operator $\wt{L}_{\wb{\mathcal{M}}_k^+}$ associated to the (extended) MDP $\wb{\mathcal{M}}_k^+$ defined as
 \begin{equation}\label{eq:Lop_m2plus}
 \begin{aligned}
         \forall v \in \mathbb{R}^S, \quad \wt{L}_{\wb{\mathcal{M}}_k^+}v(s) := \max_{a \in \A_s} \left\{
\max_{\wt{r} \in B_{r,k}(s,a)} \wt{r} +  (\wt{p}^{sa})^\transp v
\right\},
 \end{aligned}
 \end{equation}
 where $\wt{p}^{sa} = \argmax_{\wt{p} \in \wb{B}_{p,k}^+(s,a)} \{\wt{p}^T v\}$ can be solved using~\citep[][Fig. 2]{Jaksch10}, except for $(s,a) \notin \mathcal{K}_k$ for which we force $\wt{p}^{sa}(s') := 0$ for any $s' \in \calStrans_k$ (see Alg.~\ref{alg:opt.prob.h}). 
If TEVI is stopped when $sp_{\calS_k^{\textsc{EVI}}}\left\{v_{n+1} - v_n\right\} \leq \epsilon_k$ and the true MDP is sufficiently explored, then the greedy policy 
$\wt{\pi}_k := \pi_n$ \wrt $v_n$ is $\epsilon_k$-optimistic, \ie $\wt{g}_k := g_n \geq g^*_{M^*} - \epsilon_k$ (see Sec.~\ref{sec:sketch.regret.proof} for details). The policy $\wt{\pi}_k$ is then executed until the number of visits to a state-action pair is doubled or a new state is ``discovered'' (i.e., $s_{t} \in \calStrans_{k_t}$).
Note that the condition $sp_{\calS_k^{\textsc{EVI}}}\left\{v_{n+1} - v_n\right\} \leq \epsilon_k$ is always met after a \emph{finite} number of steps since the extended MDP $\wb{\mathcal{M}}_k^+$ is communicating on the restricted state space $\calS_k^{\textsc{EVI}}$.
Finally, notice that when the true MDP $M^*$ is communicating, there exists an episode $\wb{k}$ \st for all $k \geq \wb{k},~\calStrans_k = \emptyset$ and
\tucrl \emph{can be reduced to} \ucrl by considering $\mathcal{M}_k$ in place of $\wb{\mathcal{M}}_k^+$.
%

\begin{algorithm}[t]
        \caption{\textsc{Truncated Extended Value Iteration (Tevi)}}
        \begin{algorithmic}
                \STATE\textbf{Input:} value vector $v_0$, extended MDP $\mathcal{M}$, set of states $\wb{\calS}$, accuracy $\epsilon$
                \STATE\textbf{Output:} $g_n$, $v_n$, $\pi_n$
                \STATE $n := 0$
                \STATE $v_1(s) := \wt{L}_{\mathcal{M}} v_0(s) := \max_{a \in \A_s} \left\{\max_{\wt{r} \in B_{r}(s,a)} \wt{r} +  \max_{\wt{p} \in B_{p}(s,a)} \wt{p}^\transp v_0 \right\}$, $\forall s \in \wb{\calS}$ (see App.~\ref{app:algorithm})
                \WHILE{$\max_{s\in \wb{\calS}} \left\{ v_{n+1}(s) - v_{n}(s) \right\} -  \min_{s\in \wb{\calS}} \left\{ v_{n+1}(s) - v_{n}(s) \right\} > \epsilon$}
                    \STATE $n := n + 1$
                    \STATE $v_{n+1}(s) := \wt{L}_{\mathcal{M}} v_{n}(s)$, $\forall s \in \wb{\calS}$
                \ENDWHILE
                \STATE $g_n :=\frac{1}{2} \left(\max_{s\in \wb{\calS}} \left\{ v_{n+1}(s) - v_n(s) \right\} + \min_{s\in \wb{\calS}} \left\{ v_{n+1}(s) - v_{n}(s) \right\} \right)$
                \STATE $\pi_n(s) \in \argmax_{a \in \A_s} \left\{\max_{\wt{r} \in B_{r}(s,a)} \wt{r} +  \max_{\wt{p} \in B_{p}(s,a)} \wt{p}^\transp v_{n} \right\}$, $\forall s \in \wb{\calS}$
        \end{algorithmic}
        \label{alg:tevi}
\end{algorithm}


\section{Regret of \tucrl}\label{app:tucrl.regret}

We follow the proof structure of~\citet{Jaksch10,fruit2018constrained} and use similar notations. Nonetheless, several parts of the proof significantly differ from \citep{Jaksch10,fruit2018constrained}:
\begin{itemize}
 \item in Sec.~\ref{app:regret.Minset} we prove that after a finite number of steps, \tucrl is \emph{gain-optimistic} (which is not as straightforward as in the case of \ucrl),
 \item in Sec.~\ref{sec:bound_wt_delta} we show that the sums taken over the whole state space $\calS$ that appear in the main term of the regret decomposition of \ucrl can be restricted to sums over $\calScom_k$ thanks to the new stopping condition used for episodes and the use of the condition $N_k^\pm(s,a) > \sqrt{\sfrac{t_k}{SA}}$ (see~\eqref{eqn:restricted_summation}),
 \item in Sec.~\ref{sec:poorly.visited.states}, we bound the number of time steps spent in ``bad'' state-action pairs $(s,a)$ satisfying $N_k^\pm(s,a) \leq \sqrt{\sfrac{t_k}{SA}}$,
 \item in Sec.~\ref{app:regret.number.episodes}, we bound the number of episodes with the new stopping condition.
\end{itemize}

\subsection{Splitting into episodes}
The regret of \tucrl after $T$ time steps is defined as:
$
        \Delta(\tucrl, T) := Tg^* - \sum_{t=1}^T r_t(s_t,a_t) 
$.
Defining $\Delta_k = \sum_{s\in\calS, a \in \A} \nu_k(s,a) \left( g^* - r(s,a) \right)$ and using the same arguments as in \citep{Jaksch10,fruit2018constrained}, it holds with probability $1 - \frac{\delta}{12T^{4/5}}$ that:
\begin{align}\label{eqn:martingale_reward}
\begin{split}
        \Delta(\tucrl, T) \leq \sum_{k=1}^m \Delta_k + \rmaxbound\sqrt{\frac{5}{2}T\ln \left(\frac{8T}{\delta}\right)}
\end{split}
\end{align}

\subsection{Episodes with $M^* \in \mathcal{M}_k$}\label{app:regret.Minset}

We now assume that $M^* \in \mathcal{M}_k$. 
As done in App.~\ref{app:algorithm}, let's denote by $\wt{g}_k$, $\wt{h}_k$ and $\wt{\pi}_k$ the outputs of $\tevi(\bm{0}, \mathcal{M}_k^\diamond, \calS_k^{\textsc{EVI}}, \varepsilon_k)$ (see Alg.~\ref{alg:tevi}) where $\varepsilon_k := \rmaxbound/\sqrt{t_k}$ and
\begin{equation}
         \label{eq:tucrl.optimization.def}
        \calS_k^{\textsc{EVI}} = \begin{cases}
                \calScom_k & \text{if } \mathcal{K}_k = \emptyset \\
                \calS & \text{otherwise}
        \end{cases},
        \qquad~~ \mathcal{M}_k^\diamond = \begin{cases}
                \mathcal{M}_k=\wb{\mathcal{M}}_k & \text{if } \calStrans_k = \emptyset\\
                \wb{\mathcal{M}}_k^{+} & \text{otherwise}
        \end{cases}.
\end{equation}
\tevi returns an approximate solution of a slightly modified version of Problem~\ref{eq:tucrl.optimization}:
\[
(\wt{M}_k, \wt{\pi}_k) = \argmax_{M \in \mathcal{M}_k^\diamond, \pi} \{g^{\pi}_M\}.
\]
In order to bound $\Delta_k$ we first show that $\wt{g}_k \gtrsim g^*$ (up to $\rmaxbound/\sqrt{t_k}$-accuracy). If $\calStrans_k = \emptyset$ then by definition 
$\mathcal{M}_k^{\diamond} = \wb{\mathcal{M}}_k = \mathcal{M}_k \ni M^*$
and so we can use the same argument as in~\citep[][Sec. 4.3 \& Thm. 7]{Jaksch10}. If  $\calStrans_k \neq \emptyset$, the true MDP $M^*$ might not be ``included'' in the extended MDP $\wb{\mathcal{M}}_k^{+}$ considered by \evi and we cannot use the same argument. 
To overcome this problem we first assume that $t_k$ is big enough which allows us to prove a useful lemma (Lem.~\ref{L:M.com}):
\begin{equation}\label{eq:tkbig}
        t_k \geq \frac{2401}{9}\Big(\diamcom\Big)^2SA\left(\Strans_k \ln\left(\frac{2SAt_k}{\delta}\right)\right)^2:=C(k)
\end{equation}
where $\Strans_k:= \left| \calStrans_k \right|$ is the cardinal of $\calStrans_k$.
\begin{lemma} \label{L:M.com}
        Let episode $k$ be such that $M^* \in \mathcal{M}_k$, $\calStrans_k\neq \emptyset$ and ~\eqref{eq:tkbig} holds. Then,
        \[
                \left( \forall (s,a) \in \calScom_k \times \A, N_k^\pm(s,a) > \sqrt{\frac{t_k}{SA}} \right) \implies \calStrans_k = \calStrans
        \]
\end{lemma}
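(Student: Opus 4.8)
The plan is to argue by contrapositive on the set-inclusion structure: since $\calScom_k \subseteq \calScom$ always holds (by Asm.~\ref{asm:initial.state} and the construction of $\calScom_k$), if $\calStrans_k \neq \calStrans$ then necessarily $\calScom_k \subsetneq \calScom$, meaning there exists a state $s^\star \in \calScom \setminus \calScom_k \subseteq \calStrans_k$ that is genuinely reachable in $M^*$ from the already-observed communicating states. I will show that reachability of such an $s^\star$ in the true MDP forces at least one state-action pair $(s,a) \in \calScom_k \times \A$ to have an enabled transition toward $\calStrans_k$ in the truncated model, i.e.\ $\mathcal{K}_k \neq \emptyset$, which contradicts the hypothesis $N_k^\pm(s,a) > \sqrt{t_k/(SA)}$ for all $(s,a) \in \calScom_k \times \A$ (recall that $(s,a)\in\mathcal{K}_k$ is precisely the event $N_k^\pm(s,a) \le \sqrt{t_k/(SA)}$).

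First I would exploit the fact that $M^* \in \mathcal{M}_k$ together with $s^\star$ being reachable from $\calScom_k$ to find a specific ``frontier'' pair: a state $s \in \calScom_k$ and an action $a \in \A_s$ and a state $s' \in \calStrans_k$ with $p^*(s'|s,a) > 0$. (Such a frontier pair exists because a shortest path in $M^*$ from some state of $\calScom_k$ to $s^\star$ must cross the boundary between $\calScom_k$ and $\calStrans_k$ at its first step outside $\calScom_k$.) Next I would quantify how large $p^*(s'|s,a)$ must be: since $s^\star \in \calScom$, the communicating diameter bound gives $\tau_{M^*}(s' \to s^\star) \le \diamcom$ (after possibly relaying through $s$), and more to the point the expected hitting time from $s$ into $\calStrans_k$ is at most $\diamcom$-ish, so by a standard hitting-time/Markov-inequality argument the one-step probability $p^*(s'|s,a)$ summed appropriately cannot be too small — concretely of order $1/\diamcom$ up to the relevant combinatorial factors $S$, $\Strans_k$. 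I expect the cleanest route is: the expected return/hitting time argument bounds $\sum_{s'\in\calStrans_k} p^*(s'|s,a)$ from below by roughly $1/(S\cdot\diamcom)$ for some frontier $(s,a)$, hence $p^*(s'|s,a) \gtrsim 1/(S\,\Strans_k\,\diamcom)$ for some particular $s'\in\calStrans_k$.

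Then I would feed this lower bound on the true transition probability into the concentration inequality: since $M^* \in \mathcal{M}_k$, we have $\wh{p}_k(s'|s,a) = 0$ (this $s'$ has never been visited, as $s'\in\calStrans_k$), so $p^*(s'|s,a) \le \beta_{p,k}^{sas'} = \wt{p}_k^+(s'|s,a) = \frac{49}{3}\frac{b_{k,\delta}}{N^\pm_k(s,a)}$. Combining with the lower bound $p^*(s'|s,a) \gtrsim 1/(S\,\Strans_k\,\diamcom)$ yields $N^\pm_k(s,a) \lesssim S\,\Strans_k\,\diamcom\, b_{k,\delta}$, and this in turn must be reconciled with the assumed $N^\pm_k(s,a) > \sqrt{t_k/(SA)}$; squaring shows $t_k \lesssim SA\big(S\,\Strans_k\,\diamcom\,b_{k,\delta}\big)^2$, which is exactly the failure of condition~\eqref{eq:tkbig} up to the explicit constant $\frac{2401}{9}$ (note $\frac{2401}{9} = (49/3)^2$ and $49^2 = 2401$, which is why that constant appears). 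Tracking the numerical constant carefully through the chain of inequalities — and making sure the hitting-time bound is stated with the right $S$ versus $\Strans_k$ dependence so that the final threshold matches $C(k)$ verbatim — is the main obstacle; the structural argument itself (frontier pair exists $\Rightarrow$ its true probability is bounded below $\Rightarrow$ contradiction with the explored-enough hypothesis) is straightforward.
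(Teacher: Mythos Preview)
Your approach is the contrapositive of the paper's direct proof and hinges on the same shortest-path Bellman recursion: the paper assumes the visit-count hypothesis, deduces $p(s'|s,a)<\frac{1}{\diamcom\Strans_k}$ uniformly over $\calScom_k\times\A\times\calStrans_k$, and then shows via $\tau(s)\geq 1+\tau_{\min}\big(1-\sum_{s'\in\calStrans_k}p(s'|s,a)\big)$ at the minimizer that $\tau_{\min}>\diamcom$, forcing every $\wb s\in\calStrans_k$ outside $\calScom$; you run this backward, starting from $\tau_{\min}\leq\diamcom$ and extracting a large transition probability at one frontier pair.

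Regarding the obstacle you flagged: the extra $S$ in your heuristic bound $\sum_{s'\in\calStrans_k}p^*(s'|s,a)\gtrsim 1/(S\diamcom)$ is spurious. Take $\wt s\in\argmin_{s\in\calScom_k}\tau_{M^*}(s\to s^\star)$ and the shortest-path action $a^*$ there. The Bellman equation $\tau_{\min}=1+\sum_{s'}p(s'|\wt s,a^*)\tau(s')$, combined with $\tau(s')\geq\tau_{\min}$ for $s'\in\calScom_k$ and $\tau(s')\geq 0$ otherwise, gives directly $q\cdot\tau_{\min}\geq 1$ with $q:=\sum_{s'\in\calStrans_k}p(s'|\wt s,a^*)$, hence $q\geq 1/\diamcom$. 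Pigeonhole then yields some $s'\in\calStrans_k$ with $p(s'|\wt s,a^*)\geq 1/(\diamcom\Strans_k)$, and since $\wh p_k(s'|\wt s,a^*)=\wh\sigma_{p,k}^2(s'|\wt s,a^*)=0$ the Bernstein bound gives $N_k^\pm(\wt s,a^*)\leq\frac{49}{3}\diamcom\Strans_k b_{k,\delta}$, which together with $N_k^\pm(\wt s,a^*)>\sqrt{t_k/(SA)}$ contradicts~\eqref{eq:tkbig} with the exact constant $(49/3)^2=2401/9$.
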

\begin{proof}       
        Assume that episode $k$ is such that~\eqref{eq:tkbig} holds 
        and that for any state-action pair $(s,a) \in \calScom_k\times \A$
        \[
                N_k^\pm(s,a) > \sqrt{\frac{t_k}{SA}} \geq \frac{49}{3} \diamcom\Strans_k \ln\left(\frac{2SAt_k}{\delta}\right)
        \]
        Since $\calStrans_k\neq \emptyset$ and $M^* \in \mathcal{M}_k$, for any $(s,a,s') \in \calScom_k \times \A \times \calStrans_k$
        \begin{align*}
                \underbrace{p(s'|s,a)}_{\text{transition probability in }M^*} &\leq \underbrace{\wh{p}_k(s'|s,a)}_{=0} + \beta_k^{sas'} = \underbrace{\sqrt{\frac{14 \wh{\sigma}_{p,k}^2(s'|s,a) \ln(2 SA t_k/\delta)}{N_k^+(s,a)}}}_{=0} +\frac{49 \ln(2 SA t_k/\delta)}{3 N_k^\pm(s,a)}\\ &\leq \frac{49\ln\left(2SAt_k/\delta\right)}{3N_k^\pm(s,a)}
                < \frac{1}{\diamcom \Strans_k} 
        \end{align*}
        where we have exploited the fact that $\wh{p}(s'|s,a) =0$ and $\wh{\sigma}_{p,k}^2(s'|s,a) = 0$ for any state $s'\in\calStrans_k$ (remember that $N_k(s,a,s') =0$).

        We denote by $\tau_{M^*}(s \to s')$ the shortest path between any pair of states $(s,s') \in \calS \times \calS$ in the true MDP $M^*$.
        Fix an arbitrary target state $\wb{s} \in \calStrans_k$ and denote by $\tau(s) := \tau_{M^*}(s \to \wb{s})$ and $\tau_{\min}:= \min_{s \in \calScom_k}\{ \tau(s)\}$. We have
        \begin{align*}
                \tau(\wb{s}) &= 0\\
                \forall s \in \calScom_k \quad 
                \tau(s) &= 1+ \min_{a\in\A_s} \left \{\sum_{s'\in\calS} \underbrace{p(s'|s,a) \tau(s')}_{\geq 0} \right\}
                \geq 1+ \min_{a\in\A_s} \left \{\sum_{s'\in\calScom_k} p(s'|s,a) \underbrace{\tau(s')}_{\geq \tau_{\min}} \right\}\\ 
                &\geq 1 + \tau_{\min} \cdot \min_{a\in\A} \left\{ \sum_{s' \in \calScom_k} p(s'|s,a) \right\}
                 = 1 + \tau_{\min} \cdot\min_{a\in\A} \left\{  1 - \sum_{s' \in \calStrans_k} \underbrace{p(s'|s,a)}_{< {\frac{1}{\diamcom \Strans_k}}}  \right\}\\
                & > 1 + \tau_{\min} \left( 1 - \sum_{s' \in \calStrans_k} \frac{1}{\diamcom \Strans_k} \right) = 1 + \tau_{\min} \left( 1 - \frac{1}{\diamcom } \right)
        \end{align*}
        Applying the above inequality to $\wt{s} \in \calScom_k$ achieving $\tau(\wt{s}) = \tau_{\min}$ yields $\tau_{\min} > \diamcom$.
        This implies that the shortest path in $M^*$ between any state $s \in \calScom_k \subseteq \calScom$ and any state in $\wb{s} \in \calStrans_k$ is strictly bigger than $\diamcom$ but by definition $\diamcom$ is the longest shortest path between any pair of states in $\calScom$. Therefore, $\wb{s} \in \calStrans$.
        Since $\wb{s} \in \calStrans_k$ was chosen arbitrarily, then $\calStrans_k = \calStrans$.
\end{proof}
As a consequence of Lem.~\ref{L:M.com}, under the assumptions that $M^* \in \mathcal{M}_k$, $\calStrans_k\neq \emptyset$ and ~\eqref{eq:tkbig} holds, there are only two possible cases:
\begin{enumerate}
        \item Either $\calStrans_k = \calStrans$,
        \item or $\exists (s,a) \in \calScom_k \times \A \;:\; N_k^\pm(s,a) \leq \sqrt{\frac{t_k}{SA}}$.
\end{enumerate}

\textbf{Case 1:} $\calStrans_k = \calStrans$ implies that $M^* \in \wb{\mathcal{M}}_k^{+}$.
This is because for any $(s,a,s') \in \calScom_k \times \A \times \calStrans_k$ we have $p(s'|s,a) = \wt{p}_k(s'|s,a) =  0$ and for any $(s,a,s') \notin \calScom_k \times \A \times \calStrans_k$ we have  $\left|p(s'|s,a) - \wh{p}_k(s'|s,a)\right| \leq \beta_{p,k}^{sas'} $ and so $p(\cdot|s,a) \in \wb{B}_{p,k}^{+}(s,a)$.
Since $M^* \in \wb{\mathcal{M}}_k^{+}$, we can use the same argument as \citet[Sec. 4.3 \& Theorem 7]{Jaksch10} to prove $\wt{g}_k \geq g^* - \frac{\rmaxbound}{\sqrt{t_k}}$.

\textbf{Case 2:} For any $(s,a) \in \calStrans_k \times \A $, 
$\wb{B}_{p,k}^{+}(s,a) = \mathcal{C}$
is the $(S-1)$-simplex denoting the maximal uncertainty about the transition probabilities, and $B_{r,k}(s,a) =[0, \rmaxbound]$.
We will now construct an MDP $M' \in \wb{\mathcal{M}}_k^{+}$ with optimal gain $\rmaxbound$.
For all $(s,a) \in \calStrans_k \times \A $, we set the transitions to $p_{M'}(s|s,a) = 1$ and rewards to $r_{M'}(s,a) = \rmaxbound$.
Let $(\wb{s},\wb{a})\in \calScom_k \times \A$ such that $N_k^\pm(\wb{s},\wb{a}) \leq \sqrt{\frac{t_k}{SA}}$ (which exists by assumption).
We set $p_{M'}(s'|\wb{s},\wb{a}) >0$ for all $s' \in \calStrans_k$. This is possible because by definition of $\wb{\mathcal{M}}_k^{+}$, the support of $p(\cdot|\wb{s},\wb{a})$ is not restricted to $\calScom_k$.
Finally, for all state-action pairs $(s,a) \in \calScom_k \times \A $, we set $p_{M'}(\wb{s}|s,a) >0$.
This is possible because by definition of $\wb{\mathcal{M}}_k^{+}$, the support of $p(\cdot|s,a)$ is only restricted to $\calScom_k$ and $\wb{s} \in \calScom_k$.
In $M'$, for all policies, all states in $\calStrans_k$ are absorbing states (\ie loop on themselves with probability 1) with maximal reward $\rmaxbound$ and all other states $s \in \calScom_k$ are transient.
The optimal gain of $M'$ is thus $\rmaxbound$ and since $M' \in \wb{\mathcal{M}}_k^{+}$ we conclude that $\wt{g}_k \geq \rmaxbound - \frac{\rmaxbound}{\sqrt{t_k}} \geq g^* - \frac{\rmaxbound}{\sqrt{t_k}}$.

In conclusion, \tevi is always returning an \emph{optimistic} policy when the assumptions of Lem.~\ref{L:M.com} hold. The regret $\Delta_k$ accumulated in episode $k$ can thus be upper-bounded as:
\begin{align*}
\Delta_k &=  \sum_{s,a}\nu_k(s,a)(g^* - r(s,a))  =  \sum_{s,a}\nu_k(s,a)(\underbrace{g^*}_{\mathclap{\leq \wt{g}_k + \frac{\rmaxbound}{\sqrt{t_k}}}} -\wt{r}_k(s,a)) 
+  \sum_{s,a}\nu_k(s,a)(\wt{r}_k(s,a) - r(s,a))\\ & \leq \underbrace{\sum_{s,a}\nu_k(s,a)(\wt{g}_k -\wt{r}_k(s,a))}_{:=\wt{\Delta}_k} +  \sum_{s,a}\nu_k(s,a)(\wt{r}_k(s,a) - r(s,a)) + \rmaxbound \sum_{s,a}\frac{\nu_k(s,a)}{\sqrt{t_k}}\end{align*}
To bound the difference between the optimistic reward $\wt{r}_k$ and the true reward $r$ we introduce the estimated reward $\wh{r}_k$:
\begin{align*}
 \forall s,a \in \calS \times \mathcal{A}, ~~\wt{r}_k(s,a) - r(s,a) = \underbrace{\wt{r}_k(s,a) -\wh{r}_k(s,a)}_{\leq \beta_{r,k}^{sa} ~\text{by construction}} + \underbrace{\wh{r}_k(s,a) - r(s,a)}_{\leq \beta_{r,k}^{sa} ~ \text{since} ~ M \in \mathcal{M}_k} \leq 2 \beta_{r,k}^{sa}
\end{align*}
and so in conclusion:
\begin{align}\label{eqn:regret_bound_decomposition_1}
\Delta_k \leq \wt{\Delta}_k + \underbrace{2\sum_{s,a}\nu_k(s,a) \beta_{r,k}^{sa} + \rmaxbound \sum_{s,a}\frac{\nu_k(s,a)}{\sqrt{t_k}}}_{:= U_k^1}
\end{align}

\subsection{Bounding $\wt{\Delta}_k$}\label{sec:bound_wt_delta}

The goal of this section is to bound the term $\wt{\Delta}_k := \sum_{s,a}\nu_k(s,a)(\wt{g}_k -\wt{r}_k(s,a))$.
We start by discarding the state-action pairs $(s,a) \in \mathcal{K}_k$ that have been poorly visited so far:
\begin{align}
 \wt{\Delta}_k 
 &= \sum_{s,a}\nu_k(s,a)(\wt{g}_k -\wt{r}_k(s,a))\underbrace{\mathbbm{1}\lbrace (s,a)\notin \mathcal{K}_k \rbrace}_{:= \mathbbm{1}_{k}(s,a)}\nonumber + \sum_{s,a}\nu_k(s,a)\underbrace{(\wt{g}_k -\wt{r}_k(s,a))}_{\leq \rmaxbound}\mathbbm{1}\lbrace (s,a)\in \mathcal{K}_k \rbrace \nonumber\\
 &\leq \underbrace{\sum_{s,a}\nu_k(s,a)(\wt{g}_k -\wt{r}_k(s,a))\mathbbm{1}_{k}(s,a)}_{:=\wt{\Delta}'_k}
 + \rmaxbound \sum_{s,a}\nu_k(s,a)\mathbbm{1}\lbrace (s,a)\in \mathcal{K}_k\rbrace 
 \label{eqn:wt_delta_bound_decomposition_1}
\end{align}
We will now bound the term $\wt{\Delta}_k' =\sum_{s}\nu_k(s,\wt{\pi}_k(s))(\wt{g}_k -\wt{r}_k(s,\wt{\pi}_k(s)))\mathbbm{1}_{k}(s,\wt{\pi}_k(s))$.
We recall that the policy $\wt{\pi}_k$ is obtained by executing $\tevi(0,\mathcal{M}_k^\diamond,\calS_k^{\textsc{EVI}},\varepsilon_k)$ (see Alg.~\ref{alg:tevi}) where $\varepsilon_k:= \rmaxbound/\sqrt{t_k}$ and $\calS_k^{\textsc{EVI}}$ and $\mathcal{M}_k^\diamond$ are defined in~\eqref{eq:tucrl.optimization.def}.
In all possible cases for both $\calS_k^{\textsc{EVI}}$ and $\mathcal{M}_k^\diamond$, this amounts to applying value iteration to a \emph{communicating} MDP with finite state space $\calS_k^{\textsc{EVI}}$ and \emph{compact} action space.
By~\citep[Thm. 8.5.6]{puterman1994markov}, since the convergence criterion of value iteration is met we have:
\begin{align}\label{eq:app.stopping.bound}
 \forall s \in \calS_k^{\textsc{EVI}}, ~~\left|  \wt{h}_k(s) + \wt{g}_k - \wt{r}_k(s,\wt{\pi}_k(s)) - \sum_{s'\in \mathcal{S}} \wt{p}_k(s'|s,\wt{\pi}_k(s)) \wt{h}_k(s') \right| \leq \frac{\rmaxbound }{\sqrt{t_k}}
\end{align}
For all $s \notin \calScom_k$, $\nu_k(s,\wt{\pi}_k(s)) =0$ due to the stopping condition of episode $k$.
Therefore we can plug~\eqref{eq:app.stopping.bound} in $\wt{\Delta}'_k$ and derive an upper bound restricted to the set $\calScom_k \subseteq \calS_k^{\textsc{EVI}}$. Before to do that, we further decompose $\wt{\Delta}'_k$ as:
\begin{align}\label{eqn:wt_delta_bound_decomposition_2}
\begin{split}
 \wt{\Delta}_k' 
 &\leq \sum_{s}\nu_k(s,\wt{\pi}_k(s))\left( \sum_{s'\in \mathcal{S}} \wt{p}_k(s'|s,\wt{\pi}_k(s)) \wt{h}_k(s') - \wt{h}_k(s) + \frac{\rmaxbound}{\sqrt{t_k}} \right)\mathbbm{1}_{k}(s,\wt{\pi}_k(s))\\
 &= \nu_k' \left( \wt{P}_k -I \right) \wt{h}_k + \rmaxbound \sum_{s,a}\frac{\nu_k(s,a)}{\sqrt{t_k}}\mathbbm{1}_{k}(s,a)
 \end{split}
\end{align}
where $\nu_k' = (\nu_k(s,\wt{\pi}_k(s))\mathbbm{1}_{k}(s,\wt{\pi}_k(s)))_{s\in \calS}$ is the vector of visit counts for each state and the corresponding action chosen by $\wt{\pi}_k$ multiplied by the indicator function $\mathbbm{1}_{k}$, $\wt{P}_k = (\wt{P}_k(s'|s,\wt{\pi}_k(s)))_{s,s'\in\calS}$ is transition matrix associated to $\wt{\pi}_k$ in $\wb{\mathcal{M}}_k^{+}$ and $I$ is the identity matrix. We now focus on the term $\nu_k'(\wt{P}_k -I)\wt{h}_k$. Since the rows of $\wt{P}_k$ sum to 1,
$
 \forall \lambda \in \Re, ~~ \big( \wt{P}_k -I \big) \wt{h}_k = \big( \wt{P}_k -I \big) \big( \wt{h}_k+\lambda e \big)
$
where $e = (1,\dots 1)^\intercal$ is the vector of all ones. Let's take $\lambda := -\min_{s\in \calScom_k}{\lbrace \wt{h}_k(s) \rbrace}$ and define $w_k := \wt{h}_k +\lambda e$ so that for all $s\in \calScom_k$, $w_k(s)\geq 0$ and $\min_{s\in \calScom_k}{\lbrace w_k(s)\rbrace} =0$. We have:
\begin{align*}
        \nu_k'(\wt{P}_k -I)\wt{h}_k &= \sum_{s\in \calS} \nu_k(s,\wt{\pi}_k(s))\mathbbm{1}_{k}(s,\wt{\pi}_k(s)) \left(\sum_{s'\in \calS}\wt{p}_k(s'|s,\wt{\pi}_k(s))w_k(s') - w_k(s) \right)
\end{align*}
We denote by $k_t := \sup \lbrace k \geq 1:~~ t_k \leq t\rbrace$ the current episode at time $t$. 
Whenever $s_t \in \calStrans_{k_t}$, episode $k_t$ stops before executing any action (see the stopping condition of \tucrl in Fig.~\ref{fig:ucrl.constrained}) implying that $\forall s \in \calStrans_k$, $\nu_k(s,\wt{\pi}_k(s)) = 0$. Therefore we have:
\begin{align*}
        \nu_k'(\wt{P}_k -I)\wt{h}_k &= \sum_{s\in \calScom_k} \nu_k(s,\wt{\pi}_k(s))\mathbbm{1}_{k}(s,\wt{\pi}_k(s)) \left(\sum_{s'\in \calS}\wt{p}_k(s'|s,\wt{\pi}_k(s))w_k(s') - w_k(s) \right)
\end{align*}
For all states $s$ such that $\mathbbm{1}_{k}(s,\wt{\pi}_k(s))=1$, \ie satisfying $N_k^\pm(s,\wt{\pi}_k(s))>\sqrt{\sfrac{t_k}{SA}}$, we force \tevi to set $\wt{p}_k(s'|s,\wt{\pi}_k(s)) =0$, $\forall s' \in \calStrans_k$, by construction of $\wb{\mathcal{M}}_k^{+}$ so that:
\begin{align}\label{eqn:restricted_summation}
        \nu_k'(\wt{P}_k -I)\wt{h}_k = \sum_{s\in \calScom_k} \nu_k(s,\wt{\pi}_k(s))\mathbbm{1}_{k}(s,\wt{\pi}_k(s)) \left(\sum_{s'\in \calScom_k}\wt{p}_k(s'|s,\wt{\pi}_k(s))w_k(s') - w_k(s) \right)
\end{align}
We can now introduce $p$:
\begin{align}
 \sum_{s'\in \calScom_k}
 \wt{p}_k(s'|s,\wt{\pi}_k(s))w_k(s') - w_k(s)
 &= \sum_{s'\in \calScom_k} \wt{p}_k(s'|s,\wt{\pi}_k(s))w_k(s') - p(s'|s,\wt{\pi}_k(s))w_k(s') \label{eqn:wt_p_minus_p}\\ 
 &\quad{} + \left(\sum_{s'\in \calScom_k}p(s'|s,\wt{\pi}_k(s))w_k(s') - w_k(s) \right)\label{eqn:p_minus_identity}
\end{align}
By definition $\calScom_k \subseteq \calScom$ and using $(1,\infty)$-H\"older's inequality , the term \eqref{eqn:wt_p_minus_p} can be bounded as
$
 \eqref{eqn:wt_p_minus_p} \leq \left\|\wt{p}_k(\cdot|s,\wt{\pi}_k(s)) - p(\cdot|s,\wt{\pi}_k(s)) \right\|_{1, \calScom} \cdot \max_{s' \in \calScom_k} \lbrace w_k(s')\rbrace
$
where for any vector $v\in\Re^\calS$, $\|v\|_{1,\calScom}:=\sum_{s \in \calScom}|v(s)|$.
Define $\overline{s} \in \argmax_{s\in\calScom_k}{\lbrace w_k(s) \rbrace}$ and $\wt{s} \in \argmin_{s\in\calScom_k}{\lbrace w_k(s) \rbrace}$.
By definition $\overline{s}, \wt{s} \in \calScom_k$ and $w_k(\wt{s})= \min_{s\in\calScom_k}{\lbrace w_k(s) \rbrace} =0$.
By Lem.~\ref{lem:span_shortest_path}, we know that for all $s,s'\in \calScom_k $,  the difference $w_k(s') - w_k(s) = \wt{h}_k(s') - \wt{h}_k(s)$ is upper bounded by $\rmaxbound \cdot\tau_{\wb{\mathcal{M}}_k^{+}}(s \rightarrow s')$.
We also know by Lem.~\ref{L:sp.bias_difference} that for all $s,s'\in \calScom_k,~\tau_{\mathcal{M}_k^{+}}(s \rightarrow s') =  \tau_{\wb{\mathcal{M}}_k^{+}}(s \rightarrow s')$. 
Since $M^*\in \mathcal{M}_k^{+}$ ($M^*$ is the true MDP), we also have that for all $s,s'\in \calScom_k \subseteq \calScom,~\tau_{\mathcal{M}_k^{+}}(s \rightarrow s') \leq \tau_{M^*}(s \rightarrow s') \leq \diamcom$. In conclusion, 
 $\forall s,s'\in \calScom_k,~~w_k(s') - w_k(s)\leq \rmaxbound \diamcom $
 and in particular
$
 \max_{s' \in \calScom_k} \lbrace w_k(s')\rbrace = w_k(\overline{s}) = w_k(\overline{s}) - w_k(\wt{s}) \leq \rmaxbound \diamcom.
$
Similarly to what we did to bound $|\wt{r}_k - r|$ \eqref{eqn:regret_bound_decomposition_1}, we bound the distance in $\ell_1$-norm between $\wt{p}_k$ and $p$ by introducing $\wh{p}_k$:
\begin{align}
        \left\|\wt{p}_k - p \right\|_{1,\calScom}
        \leq \left\|\wt{p}_k - \wh{p}_k \right\|_{1,\calScom}
        + \left\|\wh{p}_k - p \right\|_{1,\calScom}
        \leq 2 \left(\sum_{s' \in \calScom} \beta_{p,k}^{s\wt{\pi}_k(s)s'} \right) \label{eqn:wt_delta_bound_decomposition_3}
\end{align}

We now bound the contribution of the term~\eqref{eqn:p_minus_identity}. \citet{Jaksch10} decompose this term into a martingale difference sequence and a telescopic sum but due to the indicator function $\mathbbm{1}_k$, in our case the sum is not telescopic anymore and an additional term appears.
\begin{align}
 \eqref{eqn:p_minus_identity} &= \sum_{s\in \calS} \nu_k(s,\wt{\pi}_k(s))\mathbbm{1}_{k}(s,\wt{\pi}_k(s))\left(\sum_{s'\in \calScom_k}\underbrace{p(s'|s,\wt{\pi}_k(s))w_k(s')}_{\geq 0,~~ \forall s' \in \calScom_k}\mathbbm{1}{\lbrace s \in \calScom_k\rbrace} - w_k(s)\mathbbm{1}{\lbrace s \in \calScom_k\rbrace}\right)\nonumber\\
 &\leq \sum_{s\in \calS} \nu_k(s,\wt{\pi}_k(s))\mathbbm{1}_{k}(s,\wt{\pi}_k(s))\left(\sum_{s'\in \calScom_k}p(s'|s,\wt{\pi}_k(s))w_k(s') - w_k(s)\mathbbm{1}{\lbrace s \in \calScom_k\rbrace}\right)\nonumber\\
 &= \sum_{t=t_k}^{t_{k+1}-1} \left(\sum_{s'\in \calS_{k}^1}p(s'|s_t,\wt{\pi}_{k}(s_t))w_{k}(s')  - w_{k}(s_t)\mathbbm{1}{\lbrace s_t \in \calScom_k\rbrace} \right) \mathbbm{1}_{{k}}(s_t,\wt{\pi}_{k}(s_t))\nonumber\\
 &= \sum_{t=t_k}^{t_{k+1}-1} \underbrace{\left(\sum_{s'\in \calS_{k}^1}p(s'|s_{t},\wt{\pi}_{k}(s_{t}))w_{k}(s') - w_k(s_{t+1})\mathbbm{1}{\lbrace s_{t+1} \in \calScom_k\rbrace}\right) \mathbbm{1}_{k}(s_t,\wt{\pi}_k(s_t))}_{:=X_t} \label{eqn:mds}\\
 &\quad{}+ \underbrace{\sum_{t=t_k}^{t_{k+1}-1} \left(w_k(s_{t+1})\mathbbm{1}{\lbrace s_{t+1} \in \calScom_k\rbrace} - w_k(s_t)\mathbbm{1}{\lbrace s_{t} \in \calScom_k\rbrace} \right) \mathbbm{1}_{k}(s_t,\wt{\pi}_k(s_t))}_{\text{not telescopic due to }\mathbbm{1}_k!}\label{eqn:sum_not_telescopic}
\end{align}
Define the filtration $\mathcal{F}_{t} = \sigma(s_1,a_1, r_1,\dots, s_{t+1})$. Since $k_t$ is $\mathcal{F}_{t-1}$-measurable: 
\begin{align*}
 \mathbbm{E}\left[w_{k_t}(s_{t+1})\mathbbm{1}{\lbrace s_{t+1} \in \calScom_{k_t}\rbrace}\mathbbm{1}_{k_t}(s_t,\wt{\pi}_{k_t}(s_t))|\mathcal{F}_{t-1}\right] &= \underbrace{\sum_{s'\in \calScom_{k_t}}p(s'|s_{t},\wt{\pi}_{k_t}(s_{t}))w_{k_t}(s')\mathbbm{1}_{k_t}(s_t,\wt{\pi}_{k_t}(s_t))}_{\mathcal{F}_{t-1}-\text{measurable}}
\end{align*}
implying $\mathbbm{E}[X_{t}|\mathcal{F}_{t-1}]=0$ and so $(X_t,\mathcal{F}_t)_{t\geq 1}$ is a martingale difference sequence (MDS) with $|X_t| \leq \rmaxbound \diamcom$. We will bound~\eqref{eqn:mds} in the next section (Sec.~\ref{sec:regret_bound_sum}) using Azuma's inequality.
Using the fact that $\mathbbm{1}_{k}(s_t,\wt{\pi}_k(s_t)) = \mathbbm{1}\lbrace (s_t,\wt{\pi}_k(s_t))\notin \mathcal{K}_k\rbrace= 1 - \mathbbm{1}\lbrace (s_t,\wt{\pi}_k(s_t))\in \mathcal{K}_k\rbrace$ we can make a telescopic sum appear and rewrite \eqref{eqn:sum_not_telescopic} as:
\begin{align}
 \eqref{eqn:sum_not_telescopic} 
 &= \underbrace{\sum_{t=t_k}^{t_{k+1}-1} w_k(s_{t+1})\mathbbm{1}{\lbrace s_{t+1} \in \calScom_k\rbrace} - w_k(s_t)\mathbbm{1}{\lbrace s_{t} \in \calScom_k\rbrace}}_{=  w_k(s_{t_{k+1}})\mathbbm{1}{\lbrace s_{t_{k+1}} \in \calScom_k\rbrace} -  w_k(s_{t_k})\mathbbm{1}{\lbrace s_{t_k} \in \calScom_k\rbrace} \leq \rmaxbound \diamcom} ~~\text{(telescopic sum)} \nonumber\\
 &\quad{}+ \sum_{t=t_k}^{t_{k+1}-1} \underbrace{\left(w_k(s_t)\mathbbm{1}{\lbrace s_{t} \in \calScom_k\rbrace} -w_k(s_{t+1})\mathbbm{1}{\lbrace s_{t+1} \in \calScom_k\rbrace}\right)}_{\leq \rmaxbound \diamcom} \mathbbm{1}\lbrace (s_t,\wt{\pi}_k(s_t))\in \mathcal{K}_k\rbrace\nonumber\\
 &\leq \rmaxbound \diamcom + \rmaxbound \diamcom \sum_{s,a}\nu_k(s,a)\mathbbm{1}\lbrace (s_t,\wt{\pi}_k(s_t))\in \mathcal{K}_k\rbrace \label{eqn:wt_delta_bound_decomposition_4}
\end{align}
By gathering \eqref{eqn:wt_delta_bound_decomposition_1}, \eqref{eqn:wt_delta_bound_decomposition_2}, \eqref{eqn:wt_delta_bound_decomposition_3}, \eqref{eqn:mds} and \eqref{eqn:wt_delta_bound_decomposition_4} we obtain the following bound for $\wt{\Delta}_k$:
\begin{align}
        \wt{\Delta}_k
        & \leq 2 \rmaxbound \diamcom \sum_{s,a} \sum_{s'\in\calScom} \underbrace{\mathbbm{1}_{k}(s,a)}_{\leq 1}\underbrace{\nu_k(s,a) \beta_{p,k}^{sas'}}_{\geq 0}   
        + \sum_{t=t_{k}}^{t_{k+1}-1}X_t+\rmaxbound \diamcom \nonumber \\
        &\quad{}+ \rmaxbound (\diamcom+1) \sum_{s,a}\nu_k(s,a)\mathbbm{1}\lbrace N_k^\pm(s,a)\leq\sqrt{\sfrac{t_k}{SA}}\rbrace
        + \rmaxbound\sum_{s,a}\underbrace{\frac{\nu_k(s,a)}{\sqrt{t_k}}}_{\geq 0}\underbrace{\mathbbm{1}_{k}(s,a)}_{\leq 1} \nonumber \\ 
        \begin{split}\label{eqn:regret_bound_decomposition_2}
        &\leq 2 \rmaxbound \diamcom \sum_{s,a}\sum_{s' \in \calScom} \nu_k(s,a)\beta_{p,k}^{sas'}   + \rmaxbound (\diamcom+1) \sum_{s,a}\nu_k(s,a)\mathbbm{1}\lbrace (s,a) \in \mathcal{K}_k\rbrace \\ 
        &\quad{} + \sum_{t=t_{k}}^{t_{k+1}-1}X_t+\rmaxbound \diamcom + \rmaxbound \sum_{s,a}\frac{\nu_k(s,a)}{\sqrt{t_k}} := U_k^2
        \end{split}
\end{align}

\subsection{Summing over episodes with $M^* \in \mathcal{M}_k$ and $t_k \geq C(k)$}\label{sec:regret_bound_sum}

Denote by $\mathbbm{1}(k):=\mathbbm{1}\lbrace t_k \geq C(k)\rbrace \cdot \mathbbm{1}{\lbrace M^* \in \mathcal{M}_{k} \rbrace}$ the indicator function taking value $1$ only when both $M^* \in \mathcal{M}_{k}$ and $t_k \geq C(k)$. By gathering \eqref{eqn:regret_bound_decomposition_1} and \eqref{eqn:regret_bound_decomposition_2} we obtain:
\begin{align}
 \sum_{k=1}^{m}\Delta_k  \cdot \mathbbm{1}(k) \leq \sum_{k=1}^{m}\underbrace{U_k^1}_{\geq 0}  \cdot \underbrace{ \mathbbm{1}(k)}_{\leq 1} + \sum_{k=1}^{m}\underbrace{U_k^2}_{\geq 0}  \cdot \underbrace{ \mathbbm{1}(k)}_{\leq 1} \leq \sum_{k=1}^{m} U_k^1 + U_k^2
\end{align}
and so
\begin{align}
 \label{eqn:remaining_terms_regret}
 \sum_{k=1}^{m}\Delta_k  \cdot \mathbbm{1}(k) \leq 
         & 2  \sum_{k=1}^{m} \sum_{s,a} \nu_k(s,a)\left( \rmaxbound \diamcom \sum_{s' \in \calScom}\beta_{p,k}^{sas'} + \beta_{r,k}^{s,a} \right) + 2\rmaxbound \sum_{k=1}^{m}\sum_{s,a}\frac{\nu_k(s,a)}{\sqrt{t_k}}\\
 &+ \rmaxbound (\diamcom+1) \sum_{k=1}^{m}\sum_{s,a}\nu_k(s,a)\mathbbm{1}\lbrace (s,a) \in \mathcal{K}_k\rbrace+ \sum_{t=1}^{T}X_t \mathbbm{1}({k_t})  + \rmaxbound m \diamcom \nonumber
\end{align}

We will now upper-bound the terms appearing in \eqref{eqn:remaining_terms_regret}. The main novelty of \eqref{eqn:remaining_terms_regret} compared to \ucrl is the term $ \sum_{k=1}^{m}\sum_{s,a}\nu_k(s,a)\mathbbm{1}\lbrace (s,a) \in \mathcal{K}_k\rbrace$ which is not present in the proof of \citet{Jaksch10}. We will show in the next section that this term is bounded by $O(\sqrt{\calScom AT})$. All the other terms are similar to those found in \ucrl.

\subsubsection{Poorly visited state-action pairs}\label{sec:poorly.visited.states}

We first notice that by definition $t_{k_t}\leq t$ where $k_t := \sup \lbrace k \geq 1:~~ t_k \leq t\rbrace$ is the current episode at time $t$.
As a result,
\begin{align*}
  \mathbbm{1}\left\{  (s,a) \in \mathcal{K}_{k_t}\right\} := \mathbbm{1}\left\{  N_{k_t}^{\pm}(s_t,a_t) \leq \sqrt{\sfrac{t_{k_t}}{SA}}\right\} \leq \mathbbm{1}\left\{  N_{k_t}^{\pm}(s_t,a_t) \leq \sqrt{\sfrac{t}{SA}} \right\}
\end{align*}
Instead of directly bounding $\sum_{k=1}^{m}\sum_{s,a}\nu_k(s,a)\mathbbm{1}\lbrace (s,a) \in \mathcal{K}_{k}\rbrace$ we will bound the number of visits $Z_T$ in state-action pairs that have been visited less than $\sqrt{\sfrac{t}{SA}}$ times
\begin{align*}
        Z_T 
        := \sum_{t=1}^{T}\mathbbm{1}\left\{ N_{k_t}^\pm(s_t,a_t) \leq \sqrt{\sfrac{t}{SA}} \right\}
        \geq \sum_{k=1}^{m}\sum_{s,a}\nu_k(s,a)\mathbbm{1} \left\{ (s,a) \in \mathcal{K}_{k}\right\}
\end{align*}
Note that the quantity $N_{k}(s,a)$ is updated only after the end of episode $k$ and the stopping condition of episodes used by \tucrl implies that (see Fig.~\ref{fig:ucrl.constrained}):
\begin{align}\label{eqn:stopping_condition_ineuality}
 \forall k \geq 1, ~\forall (s,a) \in \calS \times \A, ~~ \nu_{k}(s,a) \leq N_k^+(s,a)
\end{align}
Moreover, for all $(s,a) \notin \calScom \times \mathcal{A}$, $\nu_k(s,a) = 0$ implying that only the states $s\in \Scom$ should be considered in the above sums.
Using \eqref{eqn:stopping_condition_ineuality}, we prove the following lemma:
\begin{lemma}\label{lem:bound_bad_state_action}
 For any $T \geq 1$ and any sequence of states and actions $\lbrace s_1, a_1, \dots \dots s_T, a_T \rbrace$ we have:
 \begin{align*}
    Z_T \leq 2\sqrt{\Scom AT} +2\Scom A.
 \end{align*}
\end{lemma}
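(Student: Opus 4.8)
The plan is a direct counting argument: for each state-action pair $(s,a)$ we bound the number of time steps at which $(s,a)$ is visited \emph{and} counts as poorly visited, and then sum over all relevant pairs. As observed just above (the remark that $\nu_k(s,a)=0$ for every episode $k$ whenever $s\notin\calScom$), the trajectory stays in $\calScom$, i.e. $s_t\in\calScom$ for all $t\leq T$, so only pairs in $\calScom\times\A$ — of which there are at most $\Scom A$ — contribute to $Z_T$.

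Fix such a pair $(s,a)$ and index its successive visits along the trajectory by $j=1,2,\dots$, letting $u_j$ be the time of the $j$-th visit and $k:=k_{u_j}$ the episode containing time $u_j$. The first step is to show $j\leq 2N_k^+(s,a)$. Indeed, the number of visits of $(s,a)$ strictly before episode $k$ is exactly $N_k(s,a)\leq N_k^+(s,a)$, while the number of visits of $(s,a)$ within episode $k$ up to and including time $u_j$ is at most the full episode count $\nu_k(s,a)$, which by the episode stopping rule~\eqref{eqn:stopping_condition_ineuality} satisfies $\nu_k(s,a)\leq N_k^+(s,a)$; adding the two contributions gives $j\leq 2N_k^+(s,a)$.

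The second step is to relate $N_k^+(s,a)$ to the threshold. A short case check (according to whether $N_k(s,a)\in\{0,1\}$ or $N_k(s,a)\geq 2$) shows $N_k^+(s,a)\leq N_k^\pm(s,a)+1$ in all cases. Hence, if the $j$-th visit is poorly visited, i.e. $N_k^\pm(s,a)\leq\sqrt{u_j/(SA)}$, then $j\leq 2N_k^+(s,a)\leq 2N_k^\pm(s,a)+2\leq 2\sqrt{u_j/(SA)}+2\leq 2\sqrt{T/(SA)}+2$. Since distinct poorly-visited visits of $(s,a)$ have distinct indices $j$, there are at most $2\sqrt{T/(SA)}+2$ of them. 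Summing over the at most $\Scom A$ pairs $(s,a)$ with $s\in\calScom$ yields
\[
 Z_T\ \leq\ \Scom A\Bigl(2\sqrt{\tfrac{T}{SA}}+2\Bigr)\ =\ 2\Scom A\sqrt{\tfrac{T}{SA}}+2\Scom A\ \leq\ 2\sqrt{\Scom A T}+2\Scom A,
\]
where the last inequality uses $\Scom A\sqrt{T/(SA)}=\sqrt{(\Scom)^2AT/S}\leq\sqrt{\Scom A T}$, valid since $\Scom\leq S$.

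The only place I expect to need care is the bookkeeping behind $j\leq 2N_k^+(s,a)$: one must check that the within-episode count of $(s,a)$ up to time $u_j$ is correctly dominated by the \emph{full} episode count $\nu_k(s,a)$, and that the stopping-rule inequality~\eqref{eqn:stopping_condition_ineuality} is applied to the episode $k=k_{u_j}$ that actually contains $u_j$; the remaining ingredients (the elementary $N^+$ versus $N^\pm$ case analysis and the final arithmetic) are routine.
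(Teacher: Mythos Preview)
Your proof is correct and follows essentially the same approach as the paper: both decompose $Z_T$ over pairs $(s,a)\in\calScom\times\A$, use the stopping-rule inequality $\nu_k(s,a)\leq N_k^+(s,a)$ together with $N_k^+\leq N_k^\pm+1$ to get the per-pair bound $2\sqrt{T/(SA)}+2$, and finish with $\Scom\leq S$. The only presentational difference is that the paper singles out the \emph{last} triggering time $t_{s,a}$ and bounds $Z_T(s,a)\leq N_{k_{t_{s,a}}+1}(s,a)$, whereas you bound the index $j$ of each poorly-visited visit directly; these are the same counting argument.
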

\begin{proof}
For any episode $k$ starting at time $t_k$, and for any state-action pair $(s,a)$ we recall that $N_k(s,a)$ denotes the number of visits in $(s,a)$ prior to episode $k$ ($k$ not included) and by $\nu_k(s,a)$ the number of visits in $(s,a)$ during episode $k$: 
\begin{align*}
 N_{k}(s,a) := \sum_{t=1}^{t_k-1}\mathbbm{1}\lbrace (s_t,a_t)= (s,a) \rbrace ~~ &\text{and} ~~
 \nu_k(s,a) := \sum_{t=t_k}^{t_{k+1}-1}\mathbbm{1}\lbrace (s_t,a_t)= (s,a) \rbrace
\end{align*}
and so $_{k}(s,a) = \sum_{i=1}^{k-1}\nu_i(s,a)$.
By convention, we denote by $N_{k_T+1}(s,a):= \sum_{t=1}^{T}\mathbbm{1}\lbrace (s_t,a_t)= (s,a) \rbrace$ the total number of visits in $(s,a)$ after $T$ time steps ($T$ included).
 We first decompose $Z_T$ as:
 \begin{align*}
         &Z_T := \sum_{s,a} \sum_{t=1}^{T} \mathbbm{1}\Big\{ \max \lbrace 1, N_{k_t}(s,a)-1 \rbrace\leq \sqrt{\sfrac{t}{SA}}\Big\} \cdot \mathbbm{1}\Big\{ (s_t,a_t)= (s,a) \Big\} = \sum_{s \in \Scom} \sum_a Z_T(s,a)\\\
         &\text{where}~~ Z_T(s,a) :=  \sum_{t=1}^{T} \mathbbm{1}\Big\{ \max \lbrace 1, N_{k_t}(s,a)-1 \rbrace\leq \sqrt{\sfrac{t}{SA}}\Big\} \cdot \mathbbm{1}\Big\{ (s_t,a_t)= (s,a) \Big\}
 \end{align*}
 Using the fact that for all $t\geq 1$, $t_{k_t} \leq t \leq t_{k_t +1} -1$ we have:
 \begin{align}
         \forall T\geq \tau\geq 1, ~~ Z_{\tau}(s,a)
         &=\sum_{t=1}^{\tau} \underbrace{\mathbbm{1}\Big\{ \max \lbrace 1, N_{k_t}(s,a)-1 \rbrace\leq \sqrt{\sfrac{t}{SA}}\Big\}}_{\leq 1} \cdot \underbrace{\mathbbm{1}\lbrace (s_t,a_t)= (s,a)\rbrace}_{\geq 0} \nonumber \\
         &\leq \sum_{t=1}^{\tau} \mathbbm{1}\lbrace (s_t,a_t)=(s,a)\rbrace\nonumber \leq  \sum_{t=1}^{t_{k_\tau +1} -1} \mathbbm{1}\lbrace (s_t,a_t)= (s,a)\rbrace \\
         &= N_{k_{\tau} + 1}(s,a) \label{eqn:D_less_N}
 \end{align}
 Let's define $t_{s,a}$ as the last time that $Z_t(s,a)$ was incremented by $1$:
 \begin{align*}
         t_{s,a}&:= \max \Big\{ T\geq t\geq 1: \max \lbrace 1, N_{k_t}(s,a)-1 \rbrace\leq \sqrt{\sfrac{t}{SA}} ~~ \text{and} ~~ (s_t,a_t)= (s,a) \Big\} \\
                &~=\min \Big\{ T\geq t\geq 1: Z_{t}(s,a) = Z_T(s,a) \Big\} 
 \end{align*}
 We denote by $m_{s,a}:= k_{t_{s,a}}$ the corresponding episode. By definition,
 \begin{align}\label{eqn:expand_D_T_s_a}
  Z_T(s,a)=Z_{t_{s,a}}(s,a)
 \end{align}
 and
  \begin{align}\label{eqn:bound2_N_m_s_a}
  \max \lbrace 1, N_{m_{s,a}}(s,a)-1 \rbrace\leq \sqrt{\sfrac{t_{s,a}}{SA}}
 \end{align}
 Using \eqref{eqn:D_less_N} with $\tau = t_{s,a}$ we obtain:
 \begin{align}\label{eqn:bound_D_t_s_a}
  Z_{t_{s,a}} \leq N_{m_{s,a}+1}(s,a)
 \end{align}
 Moreover, by definition of $N_k(s,a)$ and \eqref{eqn:stopping_condition_ineuality}:
 \begin{align}
  &N_{m_{s,a}+1}(s,a) = N_{m_{s,a}}(s,a) + \underbrace{\nu_{m_{s,a}}(s,a)}_{\leq  N_{m_{s,a}}^+(s,a) } \leq 2  \underbrace{\max \lbrace 1, N_{m_{s,a}}(s,a) \rbrace}_{\mathclap \leq ~\max \lbrace 1, N_{m_{s,a}}(s,a) -1 \rbrace +1}   \nonumber\\
  &\implies  N_{m_{s,a}+1}(s,a) \leq 2\cdot  \max \lbrace 1, N_{m_{s,a}}(s,a) - 1 \rbrace + 2\label{eqn:bound1_N_m_s_a} 
 \end{align}
 Gathering \eqref{eqn:expand_D_T_s_a}, \eqref{eqn:bound2_N_m_s_a}, \eqref{eqn:bound_D_t_s_a}, and \eqref{eqn:bound1_N_m_s_a} we obtain:
 \begin{align*}
    Z_T(s,a) = {Z_{t_{s,a}}(s,a)} &\leq {\max \lbrace 1, N_{m_{s,a}+1}(s,a)-1\rbrace} + 1 \leq 2 \cdot {\max \lbrace 1, N_{m_{s,a}}(s,a) - 1\rbrace} +2\\
    &\leq 2 \sqrt{\sfrac{t_{s,a}}{SA}}+2 \\ &\leq  2 \sqrt{\sfrac{T}{SA}} +2 \\
    \implies Z_T &= \sum_{s \in \Scom} \sum_{a} Z_T(s,a) \leq 2 \sqrt{\Scom AT} + 2\Scom A
 \end{align*}
 where for the last inequality we used the fact that $\Scom \leq S$ (by definition) implying $\Scom/\sqrt{S} = \sqrt{\Scom/S} \cdot \sqrt{\Scom} \leq \sqrt{\Scom}$.
 This concludes the proof.
\end{proof}
As a consequence of Lem.~\ref{lem:bound_bad_state_action}:
\begin{align}\label{eqn:bound_bad_states}
        \sum_{k=1}^{m}\sum_{s,a}\nu_k(s,a)\mathbbm{1}\lbrace N_k^\pm(s,a)\leq\sqrt{\sfrac{t_k}{\Scom A}}\rbrace \leq Z_T \leq 2 \sqrt{\Scom AT} + 2\Scom A
\end{align}

\subsubsection{Confidence bounds $\beta_{r,k}^{sa}$ and $\beta_{p,k}^{sas'}$}

Since \eqref{eqn:stopping_condition_ineuality} holds, Lemma 19 of \citet{Jaksch10} can still be applied. 
Moreover, exploiting again the fact that for all $(s,a) \notin \calScom \times \mathcal{A}$, $\nu_k(s,a) = 0$ we obtain 
\begin{align}\label{eqn:nu_over_sqrt_N}
 \sum_{k=1}^{m}\sum_{s,a}\frac{\nu_k(s,a)}{\sqrt{t_k}} \leq \sum_{k=1}^{m}\sum_{s,a}\frac{\nu_k(s,a)}{\sqrt{N_k^+(s,a)}} \leq \left(\sqrt{2} +1 \right)\sqrt{\Scom AT}
\end{align}
and as shown in~\citep[Appendix F.7]{fruit2018constrained} (with the difference that $S$ is restricted to $\Scom$) we have:
\begin{align}\label{eqn:nu_over_N}
 \sum_{k=1}^{m}\sum_{s,a} \frac{\nu_k(s,a)}{N_k^\pm(s,a)} \leq 6\Scom A + 2\Scom A\ln(T)
\end{align}
The terms $ \sum_{k=1}^{m}\sum_{s,a}\nu_k(s,a)\beta_{r,k}^{sa}$ and $\sum_{k=1}^{m}\sum_{s,a, s'\in\calScom}\nu_k(s,a)\beta_{p,k}^{sas'}$ can then be bounded exactly as in \citep[App. F.7]{fruit2018constrained} with $S$ replaced by $\Scom$ (except in the logarithm).

\subsubsection{Number of episodes}\label{app:regret.number.episodes}
The stopping condition of episodes used by \tucrl (see Fig.~\ref{fig:ucrl.constrained}) combines the original stopping condition of \ucrl with the condition $s_t \in \calStrans_{k_t}$. Using only inequality \eqref{eqn:stopping_condition_ineuality}, \citet[Figure 1]{Jaksch10} proved that for any any sequence $\lbrace s_1,a_1, \dots, s_{T},a_T\rbrace$, the number of episodes is bounded by $1 +2SA + SA \log_2\left( \frac{T}{SA} \right)$.
Since \eqref{eqn:stopping_condition_ineuality} also holds in our case, the total number of episodes $m$ after $T$ time steps can be bounded by the same quantity (with $S$ replaced by $\Scom$ since sates in $\calStrans$ will never be visited) plus the number of times the event $s_t \in  \calStrans_{k_t}$ occurs. Since whenever $s_t \in \calStrans_{k_t}$ state $s_t$ is removed from $\calStrans_{k_t+1}$ and $s_t$ necessarily belongs to $\calScom$ (by definition), this event can happen at most $\calScom$ times. By Proposition 18 in~\citep{Jaksch10} we thus have:
\begin{align}\label{eqn:nb_episodes}
 m \leq 1 +2\Scom A + \Scom A \log_2\left( \frac{T}{\Scom A} \right) + \Scom
\end{align}

\subsubsection{Martingale Difference Sequence $X_t\cdot\mathbbm{1}({k_{t}})$}
In Sec.~\ref{sec:bound_wt_delta} we already proved that $(X_t,\mathcal{F}_t)_{t\geq 1}$ is an MDS \ie for all $t\geq 1$, $\mathbbm{E}[X_{t}|\mathcal{F}_{t-1}]=0$. Since $k_t$ is $\mathcal{F}_{t-1}$-measurable, we also have $\mathbbm{E}[X_{t}\mathbbm{1}({k_{t}})|\mathcal{F}_{t-1}]= \mathbbm{1}({k_{t}})\cdot\mathbbm{E}[X_{t}|\mathcal{F}_{t-1}] = 0$ with $|X_t\mathbbm{1}({k_{t}})| \leq \rmaxbound \diamcom$.
Therefore, $(X_t\mathbbm{1}({k_{t}}),\mathcal{F}_t)_{t\geq 1}$ is also an MDS. By Azuma's inequality (see for example \citep[Lemma 10]{Jaksch10}):
\begin{align}
\sum_{t=1}^{T} X_t\mathbbm{1}({k_{t}}) \leq  \rmaxbound \diamcom \sqrt{\frac{5}{2} T \ln\left(\frac{8T}{\delta}\right)} ~~ \text{w.p.}\geq 1-\frac{\delta}{12T^{5/4}} \label{eqn:bound_martingale}
\end{align}

\subsection{Completing the regret bound}

By gathering \eqref{eqn:remaining_terms_regret}, \eqref{eqn:bound_bad_states}, \eqref{eqn:nu_over_sqrt_N}, \eqref{eqn:nu_over_N}, \eqref{eqn:bound_martingale} and \eqref{eqn:nb_episodes} we conclude that with probability at least $1-\frac{\delta}{12T^{5/4}}$:
\begin{align}
 \begin{split}\label{eqn:final_bound_delta}
 \sum_{k=1}^{m}\Delta_k  \cdot \mathbbm{1}(k) \leq ~& 2 \left(\sqrt{28} +\sqrt{14} \right) \rmaxbound  \sqrt{ \Scom AT \ln\left(\frac{2SAT}{\delta} \right)} \left(\diamcom\sqrt{ (\nextstatescom-1)} +1\right) \\
                                                    &+ \frac{196}{3}\rmaxbound \Scom A \ln\left(\frac{2SAT}{\delta} \right)(3+ \ln(T))\left(\diamcom \Scom +1\right) \\
& + 2 \rmaxbound (\diamcom+1) (\sqrt{\Scom AT}+\Scom A)\\
 &+ \rmaxbound \diamcom \sqrt{\frac{5}{2} T \ln\left(\frac{8T}{\delta}\right)}  + 2\left(\sqrt{2} +1 \right)\rmaxbound\sqrt{\Scom AT} \\&+ \rmaxbound \diamcom \left(1 +2\Scom A + \Scom A \log_2\left( \frac{T}{SA} \right) + \Scom  \right)\\
 \leq& C\cdot\left(\rmaxbound \diamcom \sqrt{\nextstatescom \Scom AT \ln\left( \frac{SAT}{\delta} \right)} + \rmaxbound \diamcom \left(\Scom\right)^2 A \ln^2\left( \frac{SAT}{\delta} \right) \right)
 \end{split}
\end{align}
where $C$ is a numerical constant independent of the MDP instance.

From \eqref{eqn:martingale_reward}, with probability at least $1-\frac{\delta}{12T^{5/4}}$:
\begin{align*}
 \Delta(\tucrl,T) &\leq \sum_{k=1}^{m}\Delta_k + \rmaxbound \sqrt{\frac{5}{2} T \ln\left(\frac{8T}{\delta}\right)}\\& = \underbrace{\sum_{k=1}^{m}\Delta_k\mathbbm{1}(k)}_{\text{see~\eqref{eqn:final_bound_delta}}} + \sum_{k=1}^{m}\Delta_k\cdot(1-\mathbbm{1}(k)) + \rmaxbound \sqrt{\frac{5}{2} T \ln\left(\frac{8T}{\delta}\right)}
\end{align*}
where $1-\mathbbm{1}(k)$ is the complement of $\mathbbm{1}(k)$ \ie takes value $1$ only when either $t_k < C(k)$ (see \eqref{eq:tkbig} for the definition of $C(k)$) or $M^* \notin \mathcal{M}_{k}$. As is proved in Appendix F.2 of \citep{fruit2018constrained}, since both \eqref{eqn:stopping_condition_ineuality} and Theorem 1 of \citet{fruit2018constrained} hold, we have that with probability at least $1-\frac{\delta}{20T^{5/4}} \geq 1-\frac{\delta}{12T^{5/4}}$:
\begin{align}\label{eqn:failed_episodes}
 \sum_{k=1}^m \Delta_k \mathbbm{1}{\lbrace M^* \not\in \mathcal{M}_k \rbrace} \leq {\rmaxbound} \sqrt{T}
\end{align}
As a consequence of \eqref{eqn:stopping_condition_ineuality} $t_{k+1} \leq 2t_k$. Thus, by definition of the condition $t_k < C(k)$ we have 
\begin{align}\label{eqn:regret_bound_first_episodes}
 \sum_{k=1}^{m}\Delta_k\cdot\underbrace{\mathbbm{1}\lbrace t_k < C(k) \rbrace}_{\geq 0} &\leq 
  2 \rmaxbound C(k)  
  \leq \frac{4802}{9} \rmaxbound \left(\diamcom\right)^2S^3A  \ln^2\left(\frac{2SAT}{\delta}\right) 
\end{align}
Finally, by Boole's inequality: $1-\mathbbm{1}(k) \leq \mathbbm{1}\lbrace M^* \notin \mathcal{M}_{k} \rbrace +  \mathbbm{1}\lbrace t_k < C(k) \rbrace$
and so
\begin{align*}
 \sum_{k=1}^{m}\Delta_k\cdot(1-\mathbbm{1}(k)) \leq  \underbrace{\sum_{k=1}^{m}\Delta_k\cdot\mathbbm{1}\lbrace M^* \notin \mathcal{M}_{k} \rbrace}_{\text{see~\eqref{eqn:failed_episodes}}} + \underbrace{\sum_{k=1}^{m}\Delta_k\cdot\mathbbm{1}\lbrace t_k < C(k) \rbrace }_{\text{see~\eqref{eqn:regret_bound_first_episodes}}}
\end{align*}
In conclusion, there exists a numerical constant $C$ independent of the MDP instance such that for any MDP and any $T>1$, with probability at least $1 - \frac{\delta}{12T^{5/4}} - \frac{\delta}{12T^{5/4}}- \frac{\delta}{12T^{5/4}} = 1 - \frac{\delta}{4T^{5/4}}$ we have:
\begin{align}\label{eqn:final_bound_regret}
        \Delta(\tucrl,T) &\leq C\cdot\left(\rmaxbound \diamcom \sqrt{\nextstates \Scom A T \ln\left( \frac{SAT}{\delta} \right)} + \rmaxbound \left(\diamcom\right)^2 S^3 A \ln^2\left( \frac{SAT}{\delta} \right) \right)
\end{align}
Since $\sum_{T=2}^{+\infty}\frac{\delta}{4T^{5/4}}=  \delta$, by taking a union bound we have that the regret bound \eqref{eqn:final_bound_regret} holds with probability at least $1-\delta$ for all $T>1$.

\begin{algorithm}[t]
	\begin{algorithmic}
		\STATE \textbf{Input:} Probability estimate $\wh{p} \in \mathbb{R}^{n}$, confidence interval $\beta \in \mathbb{R}$, value vector $v\in\mathbb{R}^{n}$,
		subset of states $\mathcal{I} \subseteq \lbrace s_1, \dots, s_m \rbrace$, $m \leq n$, such that $\sum_{s \in \mathcal{I}}\wh{p}(s) =1$
		\STATE \textbf{Output:} Optimistic probabilities $\wt{p} \in \mathbb{R}^{n}$\\[.3cm]
		\STATE Let $\mathcal{I} = \{s_1, s_2, \ldots, s_{m}\}$ such that $v(s_1) \geq v(s_2) \geq \ldots \geq v(s_m)$
		\STATE $\wt{p}_1(s_1) = \min\left\{1, \wh{p}(s_1) + \frac{\beta}{2}\right\}$
		\STATE $\wt{p}_1(s_j) = \wh{p}(s_j), \quad \forall 1 < j \leq m$
		\STATE $j = m$
		\STATE $i = 1$
		\WHILE{$\sum_{s \in \mathcal{I}} \wt{p}_i(s) > 1$}
		\STATE $i = i + 1$
		\STATE $\wt{p}_i(s) = \wt{p}_{i-1}(s), \quad \forall s \neq s_j$  
		\STATE $\wt{p}_i(s_j) = \max \left\{0, 1 - \sum_{s \in \mathcal{I}\setminus \{s_j\}} \wt{p}_{i-1}(s) \right\} $
		\STATE $j = j - 1$
		\ENDWHILE
		\STATE $\wt{p}_i(s) := 0, ~~ \forall s \in \calS\setminus\mathcal{I}$
		\STATE $\wt{p} := \wt{p}_i$
	\end{algorithmic}
	\caption{\textsc{Optimistic Transition Probabilities (OTP)}~\citep{Jaksch10}}
	\label{alg:opt.prob.h}
\end{algorithm}
\section{Shortest Path Analysis}\label{app:shortestpath}

We are interesting in comparing the shortest path of any pair $(s, \wb{s}) \in \calS \times \calScom_k$ in $\mathcal{M}^{+}_k$ and $\wb{\mathcal{M}}^{+}_k$.
Formally, given a target state $\wb{s}$, the stochastic shortest path $\tau_{M}(s) := \tau_M(s \to \wb{s})$ of an (extended) MDP $M$ is the (negation) solution of the following Bellman equation
\begin{equation}\label{eq:ssp}
\begin{aligned}
        \tau_M(s) &= -1 + \max_{a \in \A_s, p \in B_p(s,a)} \left\{ p^\transp \tau_M \right\}, \qquad \forall s \neq \wb{s}\\
        \tau_M(\wb{s}) &= 0
\end{aligned}
\end{equation}

\subsection{Equivalence of Shortest Path in $\mathcal{M}_k^{+}$ and $\wb{\mathcal{M}}_k^{+}$}

We start by proving the following.

\begin{lemma}\label{L:sp.bias_difference}
	For any pair $(s,\wb{s}) \in \calS \times \calScom_k$,
	$\tau_{\mathcal{M}_k^{+}}(s \to \wb{s}) = \tau_{\wb{\mathcal{M}}_k^{+}}(s \to \wb{s})$.
\end{lemma}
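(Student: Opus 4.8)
The plan is to prove the two inequalities separately, working throughout with the non-negative hitting-time function $\tau_M(s):=\tau_M(s\to\wb{s})$, i.e.\ the negation of the solution of~\eqref{eq:ssp}: $\tau_M(\wb{s})=0$ and, for $s\neq\wb{s}$, $\tau_M(s)=1+\min_{a\in\A_s}\min_{\wt{p}\in B_p(s,a)}\wt{p}^\transp\tau_M$, the inner minimum ranging over the transition distributions allowed at $(s,a)$ in the (extended) MDP $M$. The inclusion $\wb{\mathcal{M}}_k^{+}\subseteq\mathcal{M}_k^{+}$ --- which holds by construction since $\wb{B}_{p,k}^{+}(s,a)\subseteq B_{p,k}^{+}(s,a)$ for every $(s,a)$ --- immediately gives one direction $\tau_{\mathcal{M}_k^{+}}(s\to\wb{s})\leq\tau_{\wb{\mathcal{M}}_k^{+}}(s\to\wb{s})$: minimising the expected time to reach $\wb{s}$ over a smaller class of policies can only increase it. I will also record that $\tau^*:=\tau_{\mathcal{M}_k^{+}}(\cdot\to\wb{s})$ is finite --- indeed $\tau^*\le\diamcom$: for $s\in\calScom_k$ because $M^*\in\mathcal{M}_k^{+}$ and $\wb{s}\in\calScom_k\subseteq\calScom$, so $\tau^*(s)\leq\tau_{M^*}(s\to\wb{s})\leq\diamcom$; for $s\in\calStrans_k=\calS\setminus\calScom_k$ because $(s,a)$ has never been visited, so $B_{p,k}^{+}(s,a)=\mathcal{C}$ (the $\ell_1$-ball around the zero vector of radius $\geq1$) and $\tau^*(s)=1+\min_{s'}\tau^*(s')=1$.

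The heart of the argument is the reverse inequality, and the one genuinely new observation is that \emph{the hitting-time-minimising transitions in $\mathcal{M}_k^{+}$ toward a target $\wb{s}\in\calScom_k$ never put mass on $\calStrans_k$}. Concretely, I will show that for every $s\neq\wb{s}$ and $a\in\A_s$ the value $\min_{\wt{p}\in B_{p,k}^{+}(s,a)}\wt{p}^\transp\tau^*$ is attained at some $\wt{p}^{\dagger}$ with $\wt{p}^{\dagger}(s'|s,a)=0$ for all $s'\in\calStrans_k$. Since $\tau^*(\wb{s})=0<1\leq\tau^*(s')$ for every $s'\neq\wb{s}$, the state $\wb{s}$ is the unique minimiser of $\tau^*$, so starting from any minimiser $\wt{p}$ and moving, for each $s'\in\calStrans_k$, the mass $\wt{p}(s'|s,a)$ from $s'$ onto $\wb{s}$ does not increase $\wt{p}^\transp\tau^*$. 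The point needing care is feasibility of this exchange: for an unvisited pair $B_{p,k}^{+}(s,a)=\mathcal{C}$ and there is nothing to check; for a visited pair (necessarily $s\in\calScom_k$), recall that no transition into a state of $\calStrans_k$ has ever been observed, hence $\wh{p}_k(s'|s,a)=0$, so emptying $\wt{p}(s'|s,a)$ lowers $\|\wt{p}-\wh{p}_k\|_1$ by exactly that mass while moving it into $\wt{p}(\wb{s}|s,a)$ raises $\|\wt{p}-\wh{p}_k\|_1$ by at most the same amount, keeping the $\ell_1$ constraint of $B_{p,k}^{+}(s,a)$ satisfied. Iterating over $\calStrans_k$ produces $\wt{p}^{\dagger}$.

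Next I will observe that $\wb{B}_{p,k}^{+}(s,a)$ equals either $B_{p,k}^{+}(s,a)$ or $B_{p,k}^{+}(s,a)\cap\{\wt{p}:\wt{p}(s'|s,a)=0~\text{for all }s'\in\calStrans_k\}$, the dichotomy being governed by whether $N_k^{\pm}(s,a)>\sqrt{t_k/SA}$ since $\wt{p}_k^{+}(s'|s,a)$ does not depend on $s'$; in both cases $\wt{p}^{\dagger}\in\wb{B}_{p,k}^{+}(s,a)$. Together with $\wb{B}_{p,k}^{+}(s,a)\subseteq B_{p,k}^{+}(s,a)$ this gives, for each $s\neq\wb{s}$, $\min_{a}\min_{\wt{p}\in\wb{B}_{p,k}^{+}(s,a)}\wt{p}^\transp\tau^*=\min_{a}\min_{\wt{p}\in B_{p,k}^{+}(s,a)}\wt{p}^\transp\tau^*=\tau^*(s)-1$; that is, $\tau^*$ solves the stochastic-shortest-path Bellman equation of $\wb{\mathcal{M}}_k^{+}$ with target $\wb{s}$. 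Finally, the stationary policy that plays the associated $\wt{p}^{\dagger}$ at each $s\neq\wb{s}$ is feasible in $\wb{\mathcal{M}}_k^{+}$ and proper with expected hitting time of $\wb{s}$ equal to $\tau^*$: unrolling $\tau^*=e+P\tau^*$ under this policy ($P$ its transition matrix on $\calS\setminus\{\wb{s}\}$, $e$ the all-ones vector) gives $\tau^*=\sum_{j=0}^{n-1}P^{j}e+P^{n}\tau^*$, and since $0\leq\tau^*\leq\diamcom\,e$ the remainder $P^{n}\tau^*$ tends to $0$, so $\tau^*(s)=\sum_{j\geq0}(P^{j}e)(s)$ is the expected time to absorption at $\wb{s}$. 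Hence $\tau_{\wb{\mathcal{M}}_k^{+}}(s\to\wb{s})\leq\tau^*(s)=\tau_{\mathcal{M}_k^{+}}(s\to\wb{s})$, which combined with the first paragraph yields the equality.

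The step I expect to be the main obstacle is the feasibility verification in the exchange argument --- carefully tracking the definition of $\wb{B}_{p,k}^{+}(s,a)$ through the cases (visited versus unvisited $(s,a)$, inside versus outside $\mathcal{K}_k$) and confirming that the perturbed vector still lies in the $\ell_1$-relaxed confidence set. Everything else reduces to set-inclusion monotonicity and the standard fixed-point characterisation of stochastic shortest paths.
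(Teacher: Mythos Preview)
Your proof is correct and rests on the same key observation as the paper's: since $\wh{p}_k(s'|s,a)=0$ for every $s'\in\calStrans_k$ (no such transition has ever been observed) and the target $\wb{s}\in\calScom_k$ is the unique minimiser of $\tau^*$, the SSP-optimal transitions in $\mathcal{M}_k^{+}$ can always be taken to put zero mass on $\calStrans_k$, so the additional constraints defining $\wb{\mathcal{M}}_k^{+}$ are slack.

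The presentations differ. The paper argues directly through the mechanics of the \textsc{OTP} subroutine (Alg.~\ref{alg:opt.prob.h}): when solving $\max_{\wt{p}\in B_{p,k}^{+}(s,a)}\wt{p}^\transp\tau$, \textsc{OTP} only \emph{increases} the coordinate of the top state (here $\wb{s}$) and only \emph{decreases} the others relative to $\wh{p}_k$, so its output automatically satisfies $\wt{p}(s')\le\wh{p}_k(s')=0$ for $s'\in\calStrans_k$; hence the two extended MDPs produce literally the same optimiser. You instead give an algorithm-agnostic mass-exchange argument (move mass from $\calStrans_k$ to $\wb{s}$, check the $\ell_1$ budget is preserved) and then explicitly verify, via the series expansion $\tau^*=\sum_{j\ge0}P^{j}e$, that $\tau^*$ is the hitting-time function of a feasible proper policy in $\wb{\mathcal{M}}_k^{+}$. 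Your route is a little longer but more self-contained --- it does not require the reader to unpack \textsc{OTP} --- and it makes the fixed-point verification step explicit where the paper leaves it implicit.
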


        In order to analyse the properties of the stochastic shortest path we need to investigative the maximization over the confidence interval $B_p(s,a)$ either in $\mathcal{M}^{+}_k$ or $\wb{\mathcal{M}}^{+}_k$. This problem can be solved using Alg.~\ref{alg:opt.prob.h}. For any state-action pair $(s,a)$, we define $\wt{p}_{\mathcal{M}^{+}_k}(\cdot|s,a) = \textsc{OTP}(\wh{p}(\cdot|s,a), B_{p,k}^{+}(s,a), \tau, \calS)$ and $\wt{p}_{\wb{\mathcal{M}}^{+}_k}(\cdot|s,a) = \textsc{OTP}(\wh{p}(\cdot|s,a), \wb{B}_{p,k}^{+}(s,a), \tau, \calS)$.
        It is easy to notice that the optimistic probability vectors built by Alg.~\ref{alg:opt.prob.h} satisfy (either in $\mathcal{M}^{+}_k$ or in $\wb{\mathcal{M}}^{+}_k$)
        \begin{align*}
                \forall i \in \{1,\ldots, n\},\qquad &\wt{p}_i(s_1) \geq \wh{p}(s_1)\\
                \forall i \in \{2,\ldots, n\}, \forall l \in \{n-i+2, n\}, \qquad &\wt{p}_i(s_l) = \max\left\{0, 1 - \sum_{s' \neq s_l} \wt{p}_{i-1} (s') \right\}\\ 
                                                                                  & \;\qquad = \max\left\{0, \wh{p}(s_l) - \left(\sum_{s'} \wt{p}_{i-1}(s') - 1\right)\right\}\\
                                                                                  & \;\qquad \leq \wh{p}(s_l) 
        \end{align*}
        where $s_1, \ldots, s_n$ are such that $\tau(s_1) \geq \ldots \geq \tau(s_n)$.
        The algorithm may stop before $n$ iterations but this means that the states not processed are kept at $\wh{p}$. 

        We start considering the case in which $(s,a) \in \mathcal{K}_k$.
        Recall that $\forall s' \in \calStrans_k$, $\wh{p}(s'|s,a) =0$ by definition since $s'$ is not reachable from $\calScom_k$ (\ie $N_k(s,a,s') = 0$) and that $\mathcal{M}^{+}_k$ and $\wb{\mathcal{M}}^{+}_k$ consider the same empirical average for the transition probabilities (\ie $\wh{p}$).
        The shortest path to $\wb{s}$ is such that $\max_{s} \{\tau(s)\} = \tau(\wb{s}) = 0$ and $\tau(s) \leq -1$ for any state $s \in \calS \setminus \{\wb{s}\}$ (either in $\mathcal{M}^{+}_k$ or $\wb{\mathcal{M}}^{+}_k$).
        As a consequence, $s_1 = \wb{s}$ and for any $s' \in \calStrans_k$, 
        \[
                \wt{p}_{\mathcal{M}_k^+}(s') \leq \wh{p}(s') = 0, \text{ and } \wt{p}_{\wb{\mathcal{M}}_k^+}(s') \leq \wh{p}(s') = 0
        \]
        which ensures that  $\forall (s,a) \in \mathcal{K}_k$ the constraints in $\wb{\mathcal{M}}^{+}_k$ hold.
        This results is independent from the vector $v$ provided to \textsc{OTP}. Then, for any vector 
        $v \in V = \{v\in \mathbb{R}^S | v(\wb{s}) = 0 \wedge v(s) \leq -1, \; \forall s \in \calS \setminus \{\wb{s}\}\}$,
        we have that $\mathcal{I}^1 = \mathcal{I}^2$, since $\beta^1 = \beta^2$ and $\wt{p}_{\mathcal{M}_k^+}(s) = \wt{p}_{\wb{\mathcal{M}}_k^+}(s) =0$ for any $s \in \calS^{2}_k$ then: $ \wt{p}_{\mathcal{M}_k^+}(s') = \wt{p}_{\wb{\mathcal{M}}_k^+}(s'), ~\forall s' \in \calS$.
        Finally, $\forall (s,a) \in (\calS \times \A) \setminus \mathcal{K}_k$ it is trivial to notice that: $\forall s' \in \calS$, $\forall v \in V$, $\wt{p}_{\mathcal{M}_k^+}(s') = \wt{p}_{\wb{\mathcal{M}}_k^+}(s')$ since $B_{p,k}^{+}(s,a) = \wb{B}_{p,k}^{+}(s,a)$.

        The proof follows by noticing that $\tau_{\mathcal{M}^{+}_k} \in V$ and $\tau_{\wb{\mathcal{M}}^{+}_k} \in V$.

\subsection{Bounding the bias span}

\begin{lemma}\label{lem:span_shortest_path}
        Consider an (extended) MDP $M$ and define $L_M$ as the associated optimal (extended) Bellman operator.
        Given $h_0 = \boldsymbol{0}$, and $h_i = (L_M)^i h_0$ we have that        
        \[
                \forall s,s'\in \calS,~~ h_i(s') - h_i(s) \leq \rmaxbound \tau_M(s\rightarrow s')
        \]
        where $\tau_M(s\rightarrow s')$ is the minimum expected shortest path from $s$ to $s'$ in $M$.
\end{lemma}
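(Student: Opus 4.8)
The plan is to prove the claim by induction on $i$, in the equivalent rearranged form: for every $i\ge 0$ and every ordered pair $(s,s')$,
\[
h_i(s)\;\ge\;h_i(s')-\rmaxbound\,\tau_M(s\to s').
\]
I fix the target $s'$ once and for all and recurse over the source $s$. If $\tau_M(s\to s')=+\infty$ the bound is vacuous, so assume it is finite. Writing $d(\cdot):=\tau_M(\cdot\to s')$, equation~\eqref{eq:ssp} gives $d(s')=0$ and, for $s\ne s'$, $d(s)=1+\min_{a\in\A_s}\min_{p\in B_p(s,a)}p^\transp d$; this minimum is attained since $\A_s$ is finite and each $B_p(s,a)$ is compact, and whenever $d(s)<+\infty$ a minimizing pair $(a^\star,p^\star)$ satisfies $(p^\star)^\transp d=d(s)-1<+\infty$, so $p^\star$ puts no mass on states at infinite distance from $s'$.

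The induction uses one preliminary fact about the iterates: their increments are uniformly bounded, $h_{i+1}\le h_i+\rmaxbound\,e$ componentwise, where $e$ is the all-ones vector. This is itself a short induction: $h_1=L_M\boldsymbol{0}$ has $s$-th entry $\max_{a\in\A_s}\max_{\wt{r}\in B_r(s,a)}\wt{r}\le\rmaxbound$, hence $h_1\le h_0+\rmaxbound\,e$; and since $L_M$ is monotone and satisfies $L_M(v+c\,e)=L_M v+c$ for every constant $c$ (because $p^\transp e=1$ for all $p$), applying $L_M$ to $h_i\le h_{i-1}+\rmaxbound\,e$ gives $h_{i+1}=L_M h_i\le L_M(h_{i-1}+\rmaxbound\,e)=h_i+\rmaxbound\,e$. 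This is the one place where both endpoints of the reward range $[0,\rmaxbound]$ are needed; monotonicity and constant-shift equivariance of $L_M$ are immediate from its definition (cf.\ Alg.~\ref{alg:tevi}).

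For the main induction the base case $i=0$ is immediate since $h_0=\boldsymbol{0}$ and $d\ge 0$. For the step, fix $s$; the case $s=s'$ is trivial, so take $s\ne s'$ and let $(a^\star,p^\star)$ attain the minimum defining $d(s)$. Because rewards are nonnegative, choosing action $a^\star$ and transition $p^\star$ inside $L_M$ gives $h_{i+1}(s)\ge 0+(p^\star)^\transp h_i$. Invoking the induction hypothesis $h_i(u)\ge h_i(s')-\rmaxbound\,d(u)$ at each $u$ in the (finite-distance) support of $p^\star$ and summing against $p^\star$ yields $h_{i+1}(s)\ge h_i(s')-\rmaxbound\big((p^\star)^\transp d\big)=h_i(s')+\rmaxbound-\rmaxbound\,d(s)$. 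Finally the bounded-increment fact gives $h_i(s')+\rmaxbound\ge h_{i+1}(s')$, so $h_{i+1}(s)\ge h_{i+1}(s')-\rmaxbound\,d(s)$, which is exactly the claim at step $i+1$.

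I expect the only delicate points to be (i) stating and then correctly plugging in the bounded-increment inequality — this is the step that converts the ``$+1$'' in the shortest-path recursion into a gain of $\rmaxbound$ in the iterates, and it is easy to get the direction of the inequality wrong — and (ii) the bookkeeping with infinite distances, handled by treating $\tau_M(s\to s')=+\infty$ as a trivial case at the outset and then observing that a minimizer of a finite Bellman backup never puts mass on a state at infinite distance, so no $+\infty$ ever enters the arithmetic. Everything else (compactness giving attained extrema, the two structural properties of $L_M$) is routine.
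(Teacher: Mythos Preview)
Your proof is correct and is precisely the induction argument from \citep[Sec.~4.3.1]{Jaksch10} that the paper cites in lieu of a proof; the paper does not spell out the argument at all, so your write-up is in fact more complete than what appears in the paper. The two ingredients you isolate --- the bounded-increment inequality $h_{i+1}\le h_i+\rmaxbound e$ (from monotonicity and constant-shift equivariance of $L_M$) and the one-step lower bound obtained by plugging the shortest-path action/transition into $L_M$ --- are exactly the pieces used in the original UCRL2 analysis.
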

\begin{proof}
        The proof follows from the application of the argument in~\citep[][Sec. 4.3.1]{Jaksch10}.
\end{proof}

{
\section{Tighter Regret Bound}
\label{sec:new.relaxation}
In this section we present a different relaxation of $\mathcal{M}_k$ that preserves the Bernstein nature of the confidence intervals (although the final regret bound is the same).
This relaxation makes the transition from \tucrl to \ucrl smooth when $\calStrans = \emptyset$ and may perform better empirically.
We initially introduced the relaxation using $\ell_1$-norm in order to prove the equivalence of the shortest paths (Lem.~\ref{L:sp.bias_difference}) implying that $sp_{\calScom_k}\{w_k\} \leq \diamcom$.
We now show that the same result (\ie $sp_{\calScom_k}\{w_k\} \leq \diamcom$) can be obtained by consider a perturbation of $B_{p,k}$ that preserves the Bernstein-like confidence intervals.

We start defining the new confidence set $\wb{Z}_p^k$ for any $(s,a,s') \in \calS \times \A \times \calS$ as
\begin{align}\label{eq:tucrl.bernstein_confidence_interval_p}
        \wb{Z}_{p,k}(s,a,s') := 
 \begin{cases}
                B_p^k(s,a,s') ~\text{if }  s \in \calStrans_k\\
                B_p^k(s,a,s') ~\text{if }  s\in\calScom_k ~\text{and }{p}_k^+(s,a) \geq \rho_{t_k}(s,a)\\
                \{0\} ~\text{if } s\in\calScom_k, ~{p}_k^+(s,a) < \rho_{t_k}(s,a), ~\text{and }s'\in \calStrans_k\\
            \left[\wh{p}_k(s'|s,a) - \beta_{p,k}^{sas'},\wh{p}_k(s'|s,a) + \beta_{p,k}^{sas'} + \zeta_{p,k}^{sa} \right] \cap \big[0,1\big]  ~\text{otherwise}
           \end{cases}
\end{align}
where for any $(s,a) \in \calS \times \A$
\begin{align}\label{eq:tucrl.bernstein_confidence_bound_p}
        \zeta_{p,k}^{sa} := \sum_{s'\in \calStrans_k} p_k^+(s'|s,a) = \Strans_k \cdot \underbrace{p_k^+(s,a)}_{:=\min\left\{1, \frac{49}{3} \frac{b_{k,\delta}}{N_k^{\pm}(s,a)} \right\}}
\end{align}
We then define $\wb{\mathcal{M}}_k^+ := \left\{ \calS, ~ \A,~ r_k(s,a) \in B_{r,k}(s,a), ~p_k(s'|s,a)\in \wb{Z}_{p,k}(s,a,s'), p_k(\cdot|s,a) \in \mathcal{C}  \right\}$.

It is possible to prove that 
\begin{lemma}
        For any pair $(s,\wb{s}) \in \calS \times \calScom_k$, $\tau_{\wb{\mathcal{M}}_k^+}(s \to \wb{s}) \leq \tau_{\wb{\mathcal{M}}_k}(s \to \wb{s})$. As a consequence, let $h_i = (L_{\wb{\mathcal{M}}_k^+})^i \boldsymbol{0}$, then 
        \[
                sp_{\calScom_k}\{h_i\} \leq \rmaxbound \max_{s,\wb{s} \in \calScom_k} \{\tau_{\wb{\mathcal{M}}_k^+}(s \to \wb{s})\} \leq \rmaxbound \max_{s,\wb{s} \in \calScom_k} \{\tau_{\wb{\mathcal{M}}_k}(s \to \wb{s})\} \leq \rmaxbound \diamcom
        \]
\end{lemma}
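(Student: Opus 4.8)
The plan is to compare the two stochastic shortest paths through the Bellman characterization~\eqref{eq:ssp}, reducing the global inequality to a one‑step comparison of the inner optimizations over the confidence sets and then propagating it by monotonicity of value iteration. Fix a target $\wb s\in\calScom_k$ and let $\tau_M(\cdot\to\wb s)\ge 0$ denote the vector of optimistic expected hitting times of $\wb s$, i.e.\ the nonnegative solution of $\tau_M(s\to\wb s)=1+\min_{a\in\A_s}\min_{p\in B_p(s,a)}p^\transp\tau_M(\cdot\to\wb s)$ for $s\neq\wb s$, with $\tau_M(\wb s\to\wb s)=0$ (so that $-\tau_M(\cdot\to\wb s)$ solves~\eqref{eq:ssp}); this is well posed because both $\wb{\mathcal{M}}_k^+$ and $\mathcal{M}_k$ are communicating on $\calS_k^{\textsc{EVI}}\supseteq\calScom_k$, hence the hitting times to $\wb s$ are finite and value iteration converges to them from any starting vector. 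The key fact I would establish is: for every state–action pair $(s,a)$ and every $v\ge 0$ with $v(\wb s)=0$,
\[
\min_{p\in\wb B_{p,k}^+(s,a)}p^\transp v\ \le\ \min_{p\in B_{p,k}(s,a)}p^\transp v,
\]
where $\wb B_{p,k}^+(s,a)=\{p\in\mathcal{C}:p(s'|s,a)\in\wb Z_{p,k}(s,a,s')\ \forall s'\}$ is the admissible transition set of $\wb{\mathcal{M}}_k^+$ and $B_{p,k}(s,a)$ is the untruncated Bernstein confidence set of $\mathcal{M}_k$. Granting this, $v:=\tau_{\mathcal{M}_k}(\cdot\to\wb s)$ is a super‑solution of the shortest‑path Bellman equation of $\wb{\mathcal{M}}_k^+$; iterating the monotone shortest‑path operator of $\wb{\mathcal{M}}_k^+$ from $v$ produces a non‑increasing sequence converging to $\tau_{\wb{\mathcal{M}}_k^+}(\cdot\to\wb s)$, so $\tau_{\wb{\mathcal{M}}_k^+}(s\to\wb s)\le\tau_{\mathcal{M}_k}(s\to\wb s)$ for all $s\in\calS$.

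For the one‑step fact, the only non‑trivial case is $s\in\calScom_k$ with $(s,a)\notin\mathcal{K}_k$ (equivalently $p_k^+(s,a)<\rho_{t_k}(s,a)$): in every other case $\wb Z_{p,k}(s,a,s')=B_p^k(s,a,s')$ for all $s'$, so $\wb B_{p,k}^+(s,a)=B_{p,k}(s,a)$ and the inequality is an equality. In the non‑trivial case, let $q$ attain the right‑hand minimum. Since $\wh p_k(s'|s,a)=\wh\sigma_{p,k}^2(s'|s,a)=0$ for $s'\in\calStrans_k$, each $q(s'|s,a)\le\min\{1,\beta_{p,k}^{sas'}\}=p_k^+(s,a)$ for such $s'$, so the total mass $q$ places on $\calStrans_k$ is at most $\Strans_k\,p_k^+(s,a)=\zeta_{p,k}^{sa}$. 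I would then define $\hat q$ by relocating exactly this mass onto $\wb s$ (keeping $\hat q=q$ on $\calScom_k\setminus\{\wb s\}$ and $\hat q=0$ on $\calStrans_k$). Because the relocated mass is $\le\zeta_{p,k}^{sa}$, which is precisely the extra slack in the ``otherwise'' branch of~\eqref{eq:tucrl.bernstein_confidence_interval_p}, $\hat q$ remains inside all intervals of $\wb Z_{p,k}(s,a,\cdot)$ and is still a probability vector, so $\hat q\in\wb B_{p,k}^+(s,a)$; and since $v\ge 0$ with $v(\wb s)=0$, $\hat q^\transp v-q^\transp v=-\sum_{s'\in\calStrans_k}q(s'|s,a)\,v(s')\le 0$, which gives the one‑step inequality.

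Finally I would assemble the chain. Because truncation only removes kernels, $\wb{\mathcal{M}}_k\subseteq\mathcal{M}_k$, hence $\tau_{\mathcal{M}_k}(s\to\wb s)\le\tau_{\wb{\mathcal{M}}_k}(s\to\wb s)$ and thus $\tau_{\wb{\mathcal{M}}_k^+}(s\to\wb s)\le\tau_{\wb{\mathcal{M}}_k}(s\to\wb s)$, which is the first assertion. On the high‑probability event $M^*\in\mathcal{M}_k$, the true kernel is admissible at every state, so $\tau_{\mathcal{M}_k}(s\to\wb s)\le\tau_{M^*}(s\to\wb s)\le\diamcom$ for $s,\wb s\in\calScom_k\subseteq\calScom$ by definition of $\diamcom$; combined with the one‑step bound this yields $\max_{s,\wb s\in\calScom_k}\tau_{\wb{\mathcal{M}}_k^+}(s\to\wb s)\le\diamcom$, and Lem.~\ref{lem:span_shortest_path} applied to $M=\wb{\mathcal{M}}_k^+$ and restricted to pairs in $\calScom_k$ gives $sp_{\calScom_k}\{h_i\}\le\rmaxbound\max_{s,\wb s\in\calScom_k}\tau_{\wb{\mathcal{M}}_k^+}(s\to\wb s)\le\rmaxbound\diamcom$. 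The delicate points will be (i) making the super‑solution / value‑iteration reduction rigorous — finiteness of the SSP values, which is exactly why we need $\wb{\mathcal{M}}_k^+$ communicating on $\calS_k^{\textsc{EVI}}$ — and (ii) the feasibility check for $\hat q$, where one must use that $p_k^+(s,a)$ already incorporates the clipping at $1$ and the vanishing variance for unvisited $s'\in\calStrans_k$, so that $\zeta_{p,k}^{sa}$ matches the relocatable mass precisely; this exact matching is what lets $sp_{\calScom_k}\{w_k\}\le\diamcom$ survive while keeping Bernstein‑type intervals.
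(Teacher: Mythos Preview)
The paper states this lemma without proof (Sec.~\ref{sec:new.relaxation} merely says ``It is possible to prove that'' and displays the claim), so there is no paper argument to compare against directly. Your approach is correct and is the natural one: the mass–relocation construction $\hat q$ is precisely what the extra slack $\zeta_{p,k}^{sa}$ in the ``otherwise'' branch of~\eqref{eq:tucrl.bernstein_confidence_interval_p} is designed to absorb, and the super‑solution/monotonicity propagation for the SSP operator is standard once finiteness is guaranteed by communicability on $\calS_k^{\textsc{EVI}}$. This parallels in spirit what the paper does for the $\ell_1$‑relaxation in Lem.~\ref{L:sp.bias_difference}, where the OTP analysis shows the optimistic kernel already puts zero mass on $\calStrans_k$ (yielding equality rather than an inequality).

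Two remarks. First, the literal first assertion $\tau_{\wb{\mathcal{M}}_k^+}\le\tau_{\wb{\mathcal{M}}_k}$ is in fact immediate from the set inclusion $\wb{\mathcal{M}}_k\subseteq\wb{\mathcal{M}}_k^+$: for $(s,a)\notin\mathcal{K}_k$ both impose $p(s')=0$ on $\calStrans_k$, and on $\calScom_k$ the interval $[\wh p-\beta,\wh p+\beta]$ is contained in $[\wh p-\beta,\wh p+\beta+\zeta]$. What you actually prove is the sharper inequality $\tau_{\wb{\mathcal{M}}_k^+}\le\tau_{\mathcal{M}_k}$ (comparison with the \emph{untruncated} Bernstein set), and this is the step that matters, since the final bound by $\diamcom$ uses $M^*\in\mathcal{M}_k$. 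Second, and relatedly, you do not establish the third inequality of the displayed chain, $\max_{s,\wb s\in\calScom_k}\tau_{\wb{\mathcal{M}}_k}(s\to\wb s)\le\diamcom$, as a standalone fact; you go straight from $\tau_{\wb{\mathcal{M}}_k^+}\le\tau_{\mathcal{M}_k}\le\diamcom$. This is harmless for the regret analysis (only the endpoints of the chain are ever used) and arguably cleaner, because $M^*\in\wb{\mathcal{M}}_k$ need not hold whereas $M^*\in\mathcal{M}_k$ does; but if you want to match the lemma exactly as written you should flag that the intermediate inequality through $\wb{\mathcal{M}}_k$ is being bypassed.
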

}

\section{Proof of Lem.~\ref{lem:weakly_communicating}}

 We prove the statement by contradiction: we assume that there exists a learning algorithm denoted $\mathfrak{A}_T$ satisfying
 \begin{enumerate}[leftmargin=*]
  \item for all $\varepsilon \in ]0,1]$, there exists ${T}_\varepsilon^\dagger \leq f(\varepsilon)$ such that $\mathbbm{E}[\Delta(\mathfrak{A}_T,M_\varepsilon,x,T)] < 1/6\cdot T$ for all $T \geq {T}_\varepsilon^\dagger$,
  \item there exists ${T}_0^* < +\infty$ such that $\mathbbm{E}[\Delta(\mathfrak{A}_T,M_0,x,T)] \leq C_2 (\ln(T))^\beta$ for all $T \geq {T}_0^*$.
  \end{enumerate}
 
 Any randomised strategy for choosing an action at time $t$ is equivalent to an (a priori) random choice from the set of all deterministic strategies. Thus, it is sufficient to show a contradiction when the action played by $\mathfrak{A}_T$ at any time $t$ is a deterministic function of the past trajectory $h_t:=\{s_1, a_1, r_1,\dots, s_t\}$. In the rest of the proof we assume that $\mathfrak{A}_T$ maps any sequence of observations $h_t=\{s_1, a_1, r_1,\dots, s_t\}$ to a (single) action $a_t$. 
 
 By trivial induction it is easy to see that as long as state $y$ has not been visited, the history $h_t$ is independent of $\varepsilon$ ($\mathfrak{A}_T$ can not distinguish between different values of $\varepsilon$ and plays exactly the same action when the past history is the same).
 
 Let's define $N_{T}^0(x,b) : = \sum_{t=1}^{T}\mathbbm{1}\{ (s_t, a_t)=(x,b) \}$ the number of visits in $(x,b)$ with $a_t = \mathfrak{A}_T(h_t)$ and $\varepsilon =0$. Note that $N_{T}^0(x,b)$ is not random since when $\varepsilon =0$ both action $b$ and action $d$ loop on $x$ with probability 1. For any $\varepsilon \in [0,1]$ and any horizon $T$ define the event:
 \begin{align*}
  F(T,\varepsilon) := \bigcap_{1\leq t\leq T} \left\lbrace s_t \neq y \right\rbrace
 \end{align*}
 where the sequence of states $s_t$ is obtained by executing $\mathfrak{A}_T$ on MDP $M_\varepsilon$. We will denote by $\overline{F(T,\varepsilon)}$ the complement of $F(T,\varepsilon)$. 
 
 For any horizon $T$, and independently of $\varepsilon$, there is only one possible trajectory $h_T = \{s_1, a_1, r_1,\dots, s_T\}$ that never goes to $y$ and which corresponds to the trajectory observed when $\varepsilon =0$. When $\varepsilon = 0$, the probability of this trajectory is $1$ and so $\mathbbm{P}\left(F(T,0)\right)=1$ (recall that everything is deterministic in this case) while in general we have:
 \begin{align}\label{eqn:proba_event_F}
  \forall T\geq 1,~\forall \varepsilon\in[0,1],~~\mathbbm{P}\left(F(T,\varepsilon)\right)= \left(1-\varepsilon \right)^{N_T^0(x,b)}
 \end{align}
  
 We now prove by contradiction that 
 \begin{align}\label{eqn:divergence_visits_b}
  \lim_{T\to +\infty}N_{T}^0(x,b) =+\infty
 \end{align}
 Let's assume that $C:= \max \left\lbrace 10, \max_{T\geq 1}\{N_{T}^0(x,b)\}\right\rbrace < +\infty$. Taking $\varepsilon=1/C$ and applying the law of total expectation we obtain:
 \begin{align*}
  \forall T \geq 1,~~\mathbbm{E}[\Delta(\mathfrak{A}_T,M_{1/C},x,T)] &= \underbrace{\mathbbm{E}\left[\Delta(\mathfrak{A}_T,M_{1/C},x,T)|F(T,1/C)\right]}_{= T/2 +1/2\cdot N_T^0(x,b) \geq T/2} \cdot \underbrace{\mathbbm{P}\left(F(T,1/C)\right)}_{=  \left(1-1/C \right)^{N_T^0(x,b)}}\\
  \\ &+ \underbrace{\mathbbm{E}\left[\Delta(\mathfrak{A}_T,M_{1/C},x,T)|\overline{F(T,1/C)}\right] \cdot \mathbbm{P}\left(\overline{F(T,1/C)}\right)}_{\geq 0}\\
  &\geq \frac{T}{2} \cdot \left(1-\frac{1}{C} \right)^{N_T^0(x,b)} \geq \frac{T}{2} \cdot \underbrace{\left(1-\frac{1}{C} \right)^{C}}_{\geq 1/3~~\text{by Lem.~\ref{lem:geom_bound}}} \geq  \frac{T}{6}
 \end{align*}
 where we used the fact that \begin{itemize}
                              \item $N_T^0(x,b) \leq C$  and $(1-1/C) \in [0,1]$ by definition, implying $\left(1-\frac{1}{C} \right)^{N_T^0(x,b)} \leq \left(1-\frac{1}{C} \right)^C$,
                              \item since $C\geq 10$ we have $\left(1-\frac{1}{C} \right)^{C} \geq 1/3$ by Lem.~\ref{lem:geom_bound} applied to $x = 1/C$,
                              \item and finally under event $F(T,1/C)$, the regret incurred is exactly $T/2 +1/2\cdot N_T^0(x,b) \geq T/2$.
                             \end{itemize}
  This contradicts our assumption that there exists $T^\dagger_{1/C} < +\infty$ such that for all $T\geq T^\dagger_{1/C}$, $~\mathbbm{E}[\Delta(\mathfrak{A}_T,M_{1/C},x,T)]< T/6$ and so \eqref{eqn:divergence_visits_b} holds.
  
  Since $\lim_{T\to +\infty}N_{T}^0(x,b) =+\infty$, it is possible to construct a strictly increasing sequence $(T_n)_{n\in \mathbb{N}}$ such that: \begin{align*}
																	    \forall n\in \mathbb{N},~N_{T_{n+1}}^0(x,b) > N_{T_{n}}^0(x,b),~~
                                                                                                                                            T_0 = {T}_0^*,~~ T_1 \geq C_2, ~~T_1 \geq C_2 (\ln(T_1))^\beta~~\text{and}~~ N_{T_1}^0(x,b) \geq 10
                                                                                                                                           \end{align*}
 We also define the (strictly decreasing) sequence: $\varepsilon_n := 1/N_{T_{n}}^0(x,b), ~\forall n \geq 1$. By the law of total expectation:
 \begin{align}
  \mathbbm{E}[\Delta(\mathfrak{A}_{T_n},M_{\varepsilon_n},x,T_n)] &= \underbrace{\mathbbm{E}\left[\Delta(\mathfrak{A}_{T_n},M_{\varepsilon_n},x,{T_n})|F({T_n},\varepsilon_n)\right]}_{\geq {T_n}/2} \cdot \underbrace{\mathbbm{P}\left(F({T_n},\varepsilon_n)\right)}_{=  \left(1-\varepsilon_n \right)^{N_{T_n}^0(x,b)}} \nonumber\\
  \nonumber\\ &+ \underbrace{\mathbbm{E}\left[\Delta(\mathfrak{A}_{T_n},M_{\varepsilon_n},x,{T_n})|\overline{F({T_n},\varepsilon_n)}\right] \cdot \mathbbm{P}\left(\overline{F({T_n},\varepsilon_n)}\right)}_{\geq 0}\nonumber\\
  &\geq \frac{T_n}{2} \cdot \left(1-\varepsilon_n \right)^{N_{T_n}^0(x,b)} = \frac{T_n}{2} \cdot \underbrace{\left(1-\varepsilon_n \right)^{1/{\varepsilon_n}}}_{\geq 1/3~~\text{by Lem.~\ref{lem:geom_bound}}} \geq  \frac{T_n}{6} \label{eqn:bound_contradiction1}
 \end{align}
 where we applied Lem.~\ref{lem:geom_bound} to $x = \varepsilon_n \leq 1/10$ since $N_{T_{n}}^0(x,b) \geq 10$ for all $n\geq 1$. Moreover, since by construction for all $n\geq 1$, $T_n >  T_0 = {T}_0^*$ we have by assumption that
 \begin{align*}
  \forall n\geq 1,~~&\mathbbm{E}[\Delta(\mathfrak{A}_{T_n},M_{0},x,T_n)] = \frac{1}{2}N_{T_{n}}^0(x,b) = \frac{1}{2 \varepsilon_n} \leq C_2 (\ln(T_n))^\beta\\
  &\implies T_n \geq \exp \left( \frac{1}{\left(2 C_2 \cdot \varepsilon_n \right)^{1/\beta}} \right)
 \end{align*}
 Since $\lim_{n\to +\infty} 1/\varepsilon_n = +\infty$ and $\lim_{x \to +\infty} \exp\left(x^{1/\beta}\right)/x^\alpha = +\infty$ there exists $N \in \mathbb{N}$ such that for all $n \geq N$, $T_n \geq f(\varepsilon_n)$. By assumption, for all $n \geq N$, \[\mathbbm{E}[\Delta(\mathfrak{A}_{T_n},M_{\varepsilon_n},x,T_n)] < \frac{T_n}{6}\]
 which contradicts \eqref{eqn:bound_contradiction1} therefore concluding the proof.

\begin{lemma}\label{lem:geom_bound}
 For all $x \in ]0,1/10]$, we have $(1-x)^{1/x} \geq 1/3$.
\end{lemma}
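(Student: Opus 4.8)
The plan is to show that the map $g(x):=(1-x)^{1/x}$ is non-increasing on $(0,1)$, so that over the interval $(0,1/10]$ its infimum is attained at the right endpoint $x=1/10$, and then to verify the numerical inequality $g(1/10)=(9/10)^{10}\geq 1/3$ directly.

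First I would prove monotonicity. Writing $g(x)=\exp(\psi(x))$ with $\psi(x):=\ln(1-x)/x$, it suffices to show $\psi$ is non-increasing on $(0,1)$. Differentiating,
\[
\psi'(x)=\frac{-\frac{x}{1-x}-\ln(1-x)}{x^{2}},
\]
so it is enough to check that the numerator is $\leq 0$, i.e. $\ln(1-x)\geq -\frac{x}{1-x}$. Setting $u:=1-x\in(0,1)$ this reads $\ln u\geq 1-\frac1u$, which is exactly the elementary bound $\ln v\leq v-1$ applied to $v:=1/u$. (Alternatively, the substitution $t=sx$ in $\ln(1-x)=-\int_0^x\frac{dt}{1-t}$ gives $\psi(x)=-\int_0^1\frac{ds}{1-sx}$, which is visibly non-increasing in $x$ since each integrand $\frac{1}{1-sx}$ is non-decreasing in $x$.) Hence $\psi$, and therefore $g$, is non-increasing on $(0,1)$.

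Second, I would evaluate $g$ at $x=1/10$: $g(1/10)=(9/10)^{10}$. The inequality $(9/10)^{10}\geq 1/3$ is equivalent to $3\cdot 9^{10}\geq 10^{10}$, i.e. $3^{21}\geq 10^{10}$, and this holds since $3^{21}=10\,460\,353\,203>10^{10}$. Combining with the monotonicity step, for every $x\in(0,1/10]$ we obtain $g(x)\geq g(1/10)=(9/10)^{10}\geq 1/3$, which is the claim.

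The only delicate point is that the numerical margin is thin, so the bound must be handled with a little care. Crude estimates such as $(1-x)^{1/x}\geq e^{-1/(1-x)}\geq e^{-10/9}\approx 0.329$ fall just short of $1/3$, so one genuinely needs either the exact value $(9/10)^{10}$ obtained via the monotonicity reduction to the endpoint, or a second-order refinement of the logarithm inequality; routing everything through monotonicity and the exact power is the least error-prone option.
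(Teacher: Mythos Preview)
Your proof is correct and follows essentially the same approach as the paper: both establish that $x\mapsto(1-x)^{1/x}$ is decreasing on $(0,1)$ via the inequality $\ln(1-x)\geq -x/(1-x)$ (the paper phrases it as $-\ln(1-x)<x/(1-x)$ and differentiates directly rather than through the logarithm), and then conclude by evaluating at the endpoint $x=1/10$ to get $0.9^{10}>1/3$. Your additional remark about the tightness of the constant and the integral-representation alternative for monotonicity are nice touches but not needed.
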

\begin{proof}
 It is easy to verify that the derivative of $x \longmapsto (1-x)^{1/x}$ is:
 \begin{align*}
 \forall x \in ]0,1/10],~~\frac{d}{dx}\left( (1-x)^{1/x} \right) = -\underbrace{\frac{(1-x)^{1/x-1} }{x^2}}_{\geq 0}\cdot\left( (1-x)\ln(1-x) +x \right)
 \end{align*}
 It is well known that for all $x\in]0,1[$, $x < -\ln(1-x) < \frac{x}{1-x}$ implying that $(1-x)\ln(1-x) +x$ is positive.
 Therefore, $\frac{d}{dx}\left( (1-x)^{1/x} \right)$ is negative on $]0,1/10]$ implying that $x \longmapsto (1-x)^{1/x}$ is decreasing. As a result:
 $
 \forall x \in ]0,1/10],~~(1-x)^{1/x} \geq 0.9^{10} > 1/3
 $.
\end{proof}

\section{Experiments - Three-State Domain}\label{app:experiments}
This domain was introduced in~\citep{fruit2018constrained} in order to show the inability of \ucrl to learn in weakly communicating MDPs.
The graphical representation of the domain is reported in Fig.~\ref{fig:toy3d}.
We keep the same means for the rewards (reported on Fig.~\ref{fig:toy3d}) but we change the distributions: uniform distributions with range $1/5$ instead of Bernouillis.
In the main paper we showed how the algorithms behave when $\delta =0$.
Here we consider the case the MDP is communicating by defining $\delta = 0.005$.
Fig.~\ref{fig:3states.five} shows that, as expected, \tucrl behaves similarly to \ucrl. 
In this example it is able to outperform \ucrl since the preliminary phase in which transitions to non-observed states are forbidden leads to a more conservative exploration that, due to the structure of the problem ($s_1$ is difficult to reach but it is also non-optimal), results in a smaller regret.

\begin{figure}[t]
        \begin{minipage}[b]{.49\textwidth}
        \centering
        \begin{tikzpicture}
                        \begin{scope}
	\tikzset{VertexStyle/.style = {draw, 
									shape          = circle,
	                                text           = black,
	                                inner sep      = 2pt,
	                                outer sep      = 0pt,
	                                minimum size   = 24 pt}}
	\tikzset{VertexStyle2/.style = {shape          = circle,
	                                text           = black,
	                                inner sep      = 2pt,
	                                outer sep      = 0pt,
	                                minimum size   = 14 pt}}
	\tikzset{Action/.style = {draw, 
                					shape          = circle,
	                                text           = black,
	                                fill           = black,
	                                inner sep      = 2pt,
	                                outer sep      = 0pt}}
	                                 
	\node[VertexStyle](s0) at (0,0) {$ s_{0} $};
	\node[Action](a0s0) at (.7,.7){};
	\node[VertexStyle](s1) at (2.5,0){$s_1$};
	\node[Action](a0s1) at (1.5,0.){};
	\node[VertexStyle](s2) at (5,0){$s_2$};
	\node[Action](a0s2) at (4.3,-0.7){};
	\node[Action](a1s2) at (5,1){};
    
	\draw[->, >=latex, double, color=red](s0) to node[midway, right]{{\small $a_0$}} (a0s0);
    \draw[->, >=latex](a0s0) to [out=30,looseness=0.8] node[midway, xshift=1em]{$\delta$} (s1);   
	\draw[->, >=latex](a0s0) to [out=30,in=120,looseness=0.8] node[above]{$1-\delta$} (s2);

	\draw[->, >=latex, double, color=red](s1) to node[above,xshift=0.2em]{{\small $a_0$}} (a0s1);
	\draw[->, >=latex](a0s1) to (s0);
    
	\draw[->, >=latex, double, color=red](s2) to node[midway, left]{{\small $a_0$}} (a0s2);
    \draw[->, >=latex](a0s2) to [out=210, in=330, looseness=0.8] node[midway,xshift=-1em]{$\delta$} (s1);   
	\draw[->, >=latex](a0s2) to [out=210, in=300, looseness=0.8] node[above]{$1-\delta$} (s0);
	\draw[->, >=latex, double, color=red](s2) to node[midway, right]{{\small $a_1$}} (a1s2);
	\draw[->, >=latex](a1s2) to [out=30, in=30, looseness=1.2] (s2);

    \node [above, yshift=0.2em] at (a0s0) {\scriptsize $r=0$};
    \node [below, xshift=-0.3em] at (a0s1) {\scriptsize $r =\frac{1}{3}$};
    \node [below, yshift=-0.2em, xshift=1em] at (a0s2) {\scriptsize $r =\frac{2}{3}$};
    \node [above] at (a1s2) {\scriptsize $r =\frac{2}{3}$};
    
    \node at ($(s1)+(0, -2cm)$) {};
    \end{scope}
    
\end{tikzpicture}
\caption{Three-state domain introduced in~\citep{fruit2018constrained}}
\label{fig:toy3d}
\end{minipage}\hfill
\begin{minipage}[b]{.49\textwidth}
                \centering
                \includegraphics[width=\textwidth]{./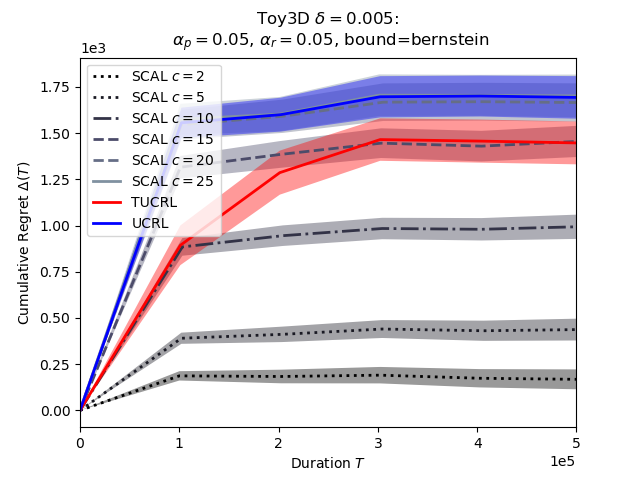}
                \caption{\label{fig:3states.five}Communicating three-state domain ($\delta=0.005$)}
\end{minipage}
\end{figure}

\end{document}